\newcommand{\Atype}{\mathsf{A}}
\newcommand{\Gtype}{\mathsf{G}}
\newcommand{\Htype}{\mathsf{H}}
\newcommand{\trspop}{\mathrm{Trsp}}
\newcommand{\matmul}{\mathrm{MatMul}}
\newcommand{\lincomb}{\mathrm{LinComb}}
\newcommand{\nonlin}{\mathrm{Nonlin}}
\newcommand{\trsp}{\top}
\newcommand{\HH}{\mathcal{H}}
\newcommand{\PP}{\mathcal{P}}
\newcommand{\LL}{\mathcal{L}}
\newcommand{\XX}{\mathcal{X}}
\newcommand{\EE}{\mathbb{E}\,}
\newcommand{\DD}{\mathcal{D}}
\newcommand{\RR}{\mathbb{R}}
\newcommand{\NN}{\mathcal{N}}
\newcommand{\FF}{\mathcal{F}}
\newcommand{\citet}{\cite}
\newcommand{\Rad}[2]{\mathrm{Rad}({#1} \, | \, {#2})}
\DeclareMathOperator*{\argmin}{arg\,min}
\DeclareMathOperator*{\diag}{diag}
\DeclareMathOperator*{\tr}{tr}
\DeclareMathOperator*{\supp}{supp}
\DeclareMathOperator*{\erf}{erf}
\newtheorem{theorem}{Theorem}
\newtheorem{lemma}{Lemma}
\newtheorem{conjecture}{Conjecture}
\def\mathcolor#1#{\@mathcolor{#1}}
\def\@mathcolor#1#2#3{%
  \protect\leavevmode
  \begingroup
    \color#1{#2}#3%
  \endgroup
}
\title{Neural Tangent Kernel: A Survey}
\author{
    \textbf{Eugene Golikov*}
    \\École Polytechnique Fédérale de Lausanne, Switzerland
    \\\texttt{evgenii.golikov@epfl.ch}
    \and
    \textbf{Eduard Pokonechnyy}
    \\Moscow Institute of Physics and Technology, Russia
    \\\texttt{pokonechnyy.ep@phystech.edu}
    \and
    \textbf{Vladimir Korviakov}
    \\Huawei Technologies, Russia
    \\\texttt{korviakov.vladimir1@huawei.com}
}
\begin{document}
    \maketitle

    \tableofcontents

    % \section{Introduction}
    % \label{sec:intro}
    \begin{abstract}

    A seminal work of \cite{jacot2018neural} demonstrated that training a neural network under certain parameterization is equivalent to performing a certain kernel method as width goes to infinity.
    This equivalence opened a promising direction of applying results of rich literature on kernel methods to neural nets which were much harder to tackle.
    The present survey covers key results on kernel convergence as width goes to infinity, finite-width corrections, applications, and discussion of limitations of the corresponding method.
    \end{abstract}

    \section{Definition and the explicit solution for square loss}
    \label{sec:definition}

    Consider a generic parametric model $f(x; \theta): \, \XX \times \RR^N \to \RR$ differentiable with respect to weights $\theta$.
    We aim to minimize square loss over a dataset $(\vec x, \vec y)$ of size $m$: $\frac{1}{2} \sum_{j=1}^m (y_j - f(x_j; \theta))^2 \to \min_\theta$.
    A continuous-time gradient descent dynamics (gradient flow) corresponds to the following ordinary differential equation (ODE):
    \begin{equation}
        \dot\theta_t
        = -\nabla_\theta\left(\frac{1}{2} \sum_{j=1}^m (y_j - f(x_j; \theta_t))^2\right)
        = \sum_{j=1}^m (y_j - f(x_j; \theta_t)) \nabla_\theta f(x_j; \theta_t).
    \end{equation}
    Let us abbreviate the prediction at a given data point $x$ at time $t$, $f(x; \theta_t)$, as $f_t(x)$.
    Under the dynamics above, this quantity evolves as
    \begin{equation}
        \dot f_t(x)
        = \dot\theta_t^T \nabla f_t(x)
        = \sum_{j=1}^m (y_j - f_t(x_j)) \nabla_\theta^T f_t(x_j) \nabla_\theta f_t(x).
        \label{eq:f_t_dynamics}
    \end{equation}
    
    If we perceive $\nabla_\theta f_t(x)$ as a feature map $\Phi_t: \, \XX \to \RR^N$, the scalar product above becomes a kernel evaluated at a pair $(x_j,x)$.
    This kernel is called an empirical neural tangent kernel (NTK) and is denoted by $\hat\Theta_t$:
    \begin{equation}
        \hat\Theta_t(x,x')
        = \nabla_\theta^T f_t(x) \nabla_\theta f_t(x').
    \end{equation}
    This definition allows for a shorter representation of the prediction dynamics (\ref{eq:f_t_dynamics}):
    \begin{equation}
        \dot f_t(x)
        = \hat\Theta_t(x,\vec x) (\vec y - f_t(\vec x)),
        \label{eq:f_t_dynamics_emp_ntk}
    \end{equation}
    where by convention, $\hat\Theta_t(x,\vec x) \in \RR^{1 \times m}$.

    Assume that the empirical NTK does not evolve with time, i.e $\hat\Theta_t(x,x') = \hat\Theta_0(x,x')$ $\forall x,x' \in \XX$.
    This assumption is equivalent to assuming the model $f(x;\theta)$ to be linear as a function of its weights:
    \begin{equation}
        f(x; \theta)
        = f(x; \theta_0) + \nabla_\theta^T f(x; \theta_0) (\theta - \theta_0).
    \end{equation}

    When the kernel is constant, Eq.(\ref{eq:f_t_dynamics_emp_ntk}) is easily integrable.
    Indeed, on the train dataset,
    \begin{equation}
        \dot f_t(\vec x)
        = \hat\Theta_0(\vec x,\vec x) (\vec y - f_t(\vec x)),
        \label{eq:f_t_dynamics_train}
    \end{equation}
    which gives
    \begin{equation}
        f_t(\vec x)
        = f_0(\vec x) - \left(I - e^{-\hat\Theta_0(\vec x, \vec x) t}\right) (f_0(\vec x) - \vec y).
    \end{equation}
    Plugging it back to Eq.(\ref{eq:f_t_dynamics_emp_ntk}) gives
    \begin{equation}
        \dot f_t(x)
        = \hat\Theta_t(x,\vec x) e^{-\hat\Theta_0(\vec x, \vec x) t} (\vec y - f_0(\vec x)),
    \end{equation}
    and finally,
    \begin{equation}
        f_t(x) 
        = f_0(x) - \hat\Theta_0(x, \vec x) \hat\Theta_0^{-1}(\vec x, \vec x) \left(I - e^{-\hat\Theta_0(\vec x, \vec x) t}\right) (f_0(\vec x) - \vec y).
        \label{eq:lin_solution_square_loss}
    \end{equation}

    While the exact solution above is based on the constant kernel assumption, one can prove that the kernel is indeed nearly constant in certain settings, see \cref{sec:convergence}.
    This allows one to transfer results that hold for linearized models to original ones.
    
    For example, $f_t(\vec x)$ converges to $\vec y$ (i.e. the model learns the dataset) as long as the Gram matrix is positive definite: $\hat\Theta_0(\vec x,\vec x) \geq \lambda_0$ for some $\lambda_0 > 0$, see Eq.(\ref{eq:f_t_dynamics_train}).
    The same result holds without the constant kernel assumption, as long as $\hat\Theta_t(\vec x,\vec x)$ stays sufficiently close to $\hat\Theta_0(\vec x,\vec x)$, and therefore, say, $\hat\Theta_t(\vec x,\vec x) \geq \lambda_0/2$.
    Indeed,
    \begin{equation}
        \frac{d}{dt}\left(\frac{1}{2} \| \vec y - f_t(\vec x) \|_2^2\right)
        = -(\vec y - f_t(\vec x))^T \hat\Theta_t(\vec x, \vec x) (\vec y - f_t(\vec x))
        \leq -\frac{\lambda_0}{2} \| \vec y - f_t(\vec x) \|_2^2,
    \end{equation}
    which gives
    \begin{equation}
        \| \vec y - f_t(\vec x) \|_2^2
        \leq e^{-\lambda_0 t} \| \vec y - f_0(\vec x) \|_2^2
        \to 0 \quad \text{as $t \to \infty$};
    \end{equation}
    see \cite{du2018gradient} for the formal result.
    This result is not trivial, since loss surfaces of generic neural nets are non-convex, and therefore any local optimization method (e.g. the gradient flow) may get stuck in a spurious local minimum.
    See \cite{arora2019fine} for other results of a similar kind.

    Also, if one assumes the kernel to be nearly constant, one can identify certain pathologies affecting the learning process by analyzing the initial kernel: see \cite{martens2021rapid} discussing trainability of very deep nets and \cite{dupuis2021dnn,tancik2020fourier} fixing blurry results of image regression.

    Finally, the exact solution (\ref{eq:lin_solution_square_loss}) can be used as a substitute for the usual gradient descent training routine.
    A naive approach for evaluating Eq.(\ref{eq:lin_solution_square_loss}) would be to compute the initial kernel $\hat\Theta_0(\vec x, \vec x)$ and then to invert it.
    Naively computing the kernel requires $O(N m^2)$ time and $O(m^2)$ memory, while inverting it takes $O(m^3)$ more time.
    Such an approach is infeasible for datasets of realistic sizes (i.e. $m \gtrsim 10^5$), asking for major optimizations, see \cite{novak2019neural,novakfast,meanti2020kernel}.
    Nevertheless, for $m \lesssim 10^4$, the direct approach is feasible and gives promising results, see \cite{arora2019harnessing}.
    Also, in certain scenarios, the kernel can be efficiently scaled from small $m$ to larger ones, see \cite{radhakrishnan2021simple}.

    \section{Kernel convergence}
    \label{sec:convergence}

    The goal of this section is to validate the constant kernel assumption: $\hat\Theta_t(x,x') = \hat\Theta_0(x,x')$ $\forall x,x' \in \XX$.
    The main result is: under certain parameterization, the empirical NTK of a neural network becomes constant as width goes to infinity.
    Before stating this result formally, we provide an illustrative example.

    Consider a neural network with one hidden layer, scalar input, and Gaussian-initialized weights:
    \begin{equation}
        f(x; a_{1:n}, w_{1:n})
        = \sum_{i=1}^n a_i \phi(w_i x),
        \quad
        a_{1:n} \sim \NN(0, n^{-1} I),
        \quad
        w_{1:n} \sim \NN(0, I).
        \label{eq:1_hid_net_standard}
    \end{equation}
    Here $n$ is width of the hidden layer; following a standard initialization scheme \cite{he2015delving}, initialization variance of each layer is inversely proportional to the number of input neurons.

    The above parameterization of the network is the one typically used in practice; we shall refer it as standard.
    However, the parameterization we need is a different one:
    \begin{equation}
        f(x; a_{1:n}, w_{1:n})
        = \frac{1}{\sqrt{n}} \sum_{i=1}^n a_i \phi(w_i x),
        \quad
        a_{1:n} \sim \NN(0, I),
        \quad
        w_{1:n} \sim \NN(0, I).
        \label{eq:1_hid_net_ntk}
    \end{equation}
    We shall refer it as NTK-parameterization.
    Note that it does not alter the distribution of neurons, both hidden and output, at initialization but it does alter the gradient flow:
    \begin{equation}
        \dot a_k 
        = \frac{1}{\sqrt{n}} \sum_{j=1}^m \phi(w_k x_j),
        \quad
        \dot w_k
        = \frac{1}{\sqrt{n}} \sum_{j=1}^m a_k \phi'(w_k x_j) x_j.
    \end{equation}
    Here input and output weights receive $O(n^{-1/2})$ increments, while both of them are $O(1)$ at initialization.
    Hence $a_k(t) \to a_k(0)$ and $w_k(t) \to w_k(0)$ as $n \to \infty$ for any fixed $k \in \mathbb{N}$ and $t \in \RR_+$.

    Compare with gradient flow under standard parameterization:
    \begin{equation}
        \dot a_k 
        = \sum_{j=1}^m \phi(w_k x_j),
        \quad
        \dot w_k 
        = \sum_{j=1}^m a_k \phi'(w_k x_j) x_j.
    \end{equation}
    Here the output weights are $O(n^{-1/2})$ at initialization but receive $O(1)$ increments for $t=0$, while the input weights are $O(1)$ at initialization but receive $O(n^{-1/2})$ increments for $t=0$.

    Let us write the NTK under NTK parameterization:
    \begin{multline}
        \hat\Theta_t(x,x')
        = \sum_{i=1}^n \left(\partial_{a_i} f(x) \partial_{a_i} f(x') + \partial_{w_i} f(x) \partial_{w_i} f(x')\right)
        =\\= \frac{1}{n} \sum_{i=1}^n \left(\phi(w_i(t) x) \phi(w_i(t) x') + a_i^2(t) \phi'(w_i(t) x) \phi'(w_i(t) x') x x'\right).
        % \to\\\to \EE_{a,w \sim \NN(0,1)} \left(\phi(w x) \phi(w x') + a^2 \phi'(w x) \phi'(w x') x x'\right)
        % % = \Theta(x,x')
        % \quad \text{as $n \to \infty$}
        % \quad \forall t \geq 0,
    \end{multline}
    Since $a_k(t) \to a_k(0)$ and $w_k(t) \to w_k(0)$ as $n \to \infty$ for any fixed $k \in \mathbb{N}$ and $t \in \RR_+$, the above expression is asymptotically equivalent to
    \begin{equation}
        \hat\Theta_0(x,x')
        = \frac{1}{n} \sum_{i=1}^n \left(\phi(w_i(0) x) \phi(w_i(0) x') + a_i^2(0) \phi'(w_i(0) x) \phi'(w_i(0) x') x x'\right),
    \end{equation}
    which converges (almost surely) to
    \begin{equation}
        \Theta(x,x')
        = \EE_{a,w \sim \NN(0,1)} \left(\phi(w x) \phi(w x') + a^2 \phi'(w x) \phi'(w x') x x'\right)
    \end{equation}
    as $n \to \infty$ due to the (strong) Law of Large Numbers.
    The limit kernel $\Theta(x,x')$ depends neither on a timestep $t$, nor on initialization.
    This kernel is typically referred as NTK, contrasting to the empirical NTK $\hat\Theta_t$.

    Since under standard parameterization the weights receive increments asymptotically at least comparable to initialization, one cannot expect that the empirical NTK stops evolving as $n \to \infty$ in this setting.
    Moreover, the initial empirical NTK diverges with width:
    \begin{multline}
        \hat\Theta_0(x,x')
        = \sum_{i=1}^n \left(\phi(w_i(0) x) \phi(w_i(0) x') + a_i^2(0) \phi'(w_i(0) x) \phi'(w_i(0) x') x x'\right)
        \sim\\\sim n \times \EE_{w \sim \NN(0,1)} \phi(w x) \phi(w x').
    \end{multline}

    The above kernel convergence result holds in more general settings.
    Consider a fully-connected network with $L$ layers under NTK parameterization:
    \begin{equation}
        f(x) = h_L(x),
        \quad
        h_l(x) = \frac{1}{\sqrt{n_{l-1}}} W_l x_{l-1}(x),
        \quad
        x_{l-1}(x) = \phi(h_{l-1}(x)),
        \quad
        x_0(x) = x,
    \end{equation}
    where $W_1 \in \RR^{n_1 \times n_0}$, $W_L \in \RR^{1 \times n_{L-1}}$, and $W_l \in \RR^{n_l \times n_{l-1}}$ for all other $l$.
    Here all weights are initialized with independent standard Gaussians.
    Suppose we aim to optimize a generic differentiable loss $\ell$ instead of the quadratic one:
    \begin{equation}
        \dot\theta_t
        = -\nabla_\theta\left(\sum_{j=1}^m \ell(y_j, f(x_j; \theta_t))\right)
        = \sum_{j=1}^m \left.\frac{\partial \ell(y_j, z)}{\partial z}\right|_{z=f(x_j; \theta_t)} \nabla_\theta f(x_j; \theta_t),
    \end{equation}
    where $\theta$ now is a concatenation of all weights $W_{1:L}$.
    The seminal work of \cite{jacot2018neural} proves the following:
    \begin{theorem}[\cite{jacot2018neural}]
        Under the conditions above, for $\phi$ being $C^2$ and Lipschitz and $\ell$ being $C^1$ and Lipschitz, $\hat\Theta_t(x,x') \to \Theta(x,x')$ in probability as $n_{1:L-1} \to \infty$ sequentially $\forall x,x' \in \XX$ $\forall t \geq 0$.
    \end{theorem}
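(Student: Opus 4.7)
The plan is to split the argument into two ingredients: (i) convergence of the initial empirical NTK, $\hat\Theta_0(x,x') \to \Theta(x,x')$ in probability as $n_{1:L-1} \to \infty$ sequentially; and (ii) stability of the empirical NTK along the training trajectory, i.e., $|\hat\Theta_t(x,x') - \hat\Theta_0(x,x')| \to 0$ in probability for every fixed $t \geq 0$. Combining the two yields the theorem.

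For (i), I would proceed by induction on the layer index $l$. Conditional on the weights of layers $1,\dots,l-1$, the preactivations $h_l(x)$ form a centered Gaussian process with empirical covariance $\hat\Sigma_l(x,x') = n_{l-1}^{-1} \langle x_{l-1}(x), x_{l-1}(x') \rangle$. By the inductive hypothesis together with the strong law of large numbers (applied in the sequential width limit), $\hat\Sigma_l$ concentrates around
\begin{equation}
    \Sigma_l(x,x') = \EE_{u \sim \NN(0,\Sigma_{l-1})} \phi(u(x)) \phi(u(x')),
\end{equation}
and, analogously, $\hat{\dot\Sigma}_l$ concentrates around $\dot\Sigma_l(x,x') = \EE_{u \sim \NN(0,\Sigma_{l-1})} \phi'(u(x)) \phi'(u(x'))$. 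A direct chain-rule computation of the layer-$l$ contribution to $\hat\Theta_0$ yields the recursion $\hat\Theta_0^{(l)} = \hat\Sigma_l + \hat{\dot\Sigma}_l \cdot \hat\Theta_0^{(l-1)} + \varepsilon_n$, whose limit is the recursive definition of $\Theta$; the error $\varepsilon_n$ vanishes in probability because cross terms between forward activations and backpropagated gradients average out by another LLN, with Lipschitzness of $\phi$ controlling the relevant moments.

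For (ii), the key observation is that under NTK parameterization each individual weight moves only by $O(n^{-1/2})$ on any bounded time interval. Indeed, $\dot W_l(t) = n_{l-1}^{-1/2} \sum_j \ell'_j(t) \, \delta_l(x_j;t) \, x_{l-1}(x_j;t)^\top$, where $\delta_l$ denotes the backpropagated signal, and by Lipschitzness of $\ell$ and $\phi$ the factors remain $O(1)$ uniformly in $n$, so each entry of $W_l$ moves by at most $O(t/\sqrt{n_{l-1}})$. A parallel induction in $l$ then shows that, on any compact time interval, preactivations, backpropagated signals, and the layer-wise empirical kernels all stay within $o(1)$ (in probability) of their time-zero analogues; a Gronwall-type estimate on this joint system closes the argument.

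The main obstacle is executing (ii) rigorously: one must maintain a priori bounds on preactivations, backpropagated gradients, and the empirical NTK simultaneously along the entire trajectory, with these bounds surviving the sequential large-width limit $n_1 \to \infty$, $n_2 \to \infty$, and so on. The $C^2$ assumption on $\phi$ enters precisely here, letting one control $\dot\Sigma_l(t) - \dot\Sigma_l(0)$ in terms of the vanishing perturbation of $h_l$. A further subtle point already present in step (i) is a gradient-independence issue: the recursion for $\hat\Theta_0^{(l)}$ involves the backpropagated signal $\delta_l(x;0)$, which uses the same weights $W_l$ as the forward pass, yet in the infinite-width limit behaves as if generated by an independent Gaussian copy of $W_l^\top$; this independence holds asymptotically via an LLN-type averaging but is the most delicate analytic point of the proof.
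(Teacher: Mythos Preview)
The paper is a survey and does not give a self-contained proof of this theorem; it merely states it and cites \cite{jacot2018neural}. What the paper \emph{does} contain are informal sketches of exactly the two ingredients you propose: in \cref{sec:convergence} it works out the two-layer case, showing that under NTK parameterization each weight receives an $O(n^{-1/2})$ increment so that $\hat\Theta_t$ is asymptotically equal to $\hat\Theta_0$, which in turn converges by the LLN; and in \cref{sec:limit_fc_nets} it derives the layerwise recursions for $q_l$ and $\dot q_l$ and explicitly flags the gradient-independence issue (that $g_{l+1}$ depends on $W_{l+1}$, yet one may proceed as if they were independent), deferring the rigorous justification to \cite{yang2020tensor_ii}. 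Your outline is therefore fully aligned with both the original Jacot argument and the paper's informal exposition.

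One small caveat worth keeping in mind for step (ii): the bound ``each entry of $W_l$ moves by at most $O(t/\sqrt{n_{l-1}})$'' is not by itself enough, since there are $n_l n_{l-1}$ such entries and the NTK involves all of them; what one actually needs is control on the \emph{operator} or Frobenius norm of $W_l(t)-W_l(0)$ relative to the scale of $W_l(0)$, and this is where the Gronwall-type bootstrap you mention is genuinely required rather than merely cosmetic. The paper's later discussion in \cref{sec:app_theory} (around \cref{thm:convergence_2layer}) makes this structure explicit in the shallow ReLU case.
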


    In fact, the theorem above can be generalized far from fully-connected nets with smooth activation functions.
    Define a tensor program as a set of initial variables of certain types and a sequence of operations.
    Each of the operations generates a new variable by acting on previously generated ones.
    The variable types are
    \begin{enumerate}
        \item $\Atype$: $n \times n$ matrices with iid $\NN(0,1)$ entries;
        \item $\Gtype$: vectors of size $n$ with asymptotically iid Gaussian entries;
        \item $\Htype$: images of $\Gtype$-vars by coordinatewise nonlinearities.
    \end{enumerate}
    The operations are
    \begin{enumerate}
        \item $\trspop$: $W: \Atype \to W^\trsp: \Atype$;
        \item $\matmul$: $(W: \Atype, \; x: \Htype) \to \frac{1}{\sqrt{n}} W x: \Gtype$; %if $W: \Atype$ and $x: \Htype$ then $W x: \Gtype$;
        \item {$\lincomb$: $(\{x_i: \Gtype, \; a_i \in \RR\}_{i=1}^k) \to \sum_{i=1}^k a_i x_i: \Gtype$;} %\textcolor{red}{This op is a syntax sugar;} %if $x_i: \Gtype$ and $a_i \in \RR$ $\forall i \in [k]$ then $\sum_{i=1}^k a_i x_i: \Gtype$;
        \item $\nonlin$: $(\{x_i: \Gtype\}_{i=1}^k, \; \phi: \RR^k \to \RR) \to \phi(x_{1:k}): \Htype$. %if $x_i: \Gtype$ $\forall i \in [k]$ and $\phi: \RR^k \to \RR$ then $\phi(x_{1:k}): \Htype$.
    \end{enumerate}
    The set of initial variables consists of variables of $\Atype$-type and $\Gtype$-type.
    As for input $\Gtype$-vars, we sample $\{x_\alpha: \text{$x$ is an input G-var}\} \sim \NN(\mu^{in}, \Sigma^{in})$ $\forall \alpha \in [n]$.

    The above formalism allows to express forward and backward passes of a very wide class of neural nets (including RNNs, ResNets, and Transformers).
    Besides none of the operations above generates new $\Atype$-vars (new weights), the whole gradient descent training process can be expressed as a single tensor program by backtracking the gradient steps.
    The real power of tensor programs comes from the following theorem:
    \begin{theorem}["Master theorem", \cite{yang2020tensor_iii}]
        \label{thm:master_theorem}
        Consider a tensor program with $M$ $\Gtype$-vars, under above assumptions.
        Suppose all the nonlinearities $\phi$ and a function $\psi: \, \RR^M \to \RR$ are polynomially bounded.
        Then the following holds:
        \begin{equation}
            \frac{1}{n} \sum_{\alpha=1}^n \psi(g^1_\alpha,\ldots,g^M_\alpha)
            \to \EE_{Z \sim \NN(\mu,\Sigma)} \psi(Z)
        \end{equation}
        a.s. as $n \to \infty$, where $\mu$ and $\Sigma$ can be computed using certain recurrent rules.
    \end{theorem}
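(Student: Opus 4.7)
The plan is induction on the number of operations in the tensor program. To close the induction I would strengthen the inductive hypothesis to the joint assertion: for every polynomially bounded $\psi$, the empirical average $\frac{1}{n} \sum_\alpha \psi(g^{1:M}_\alpha)$ converges almost surely to $\EE_{Z \sim \NN(\mu, \Sigma)} \psi(Z)$, and moreover the coordinate vectors $(g^1_\alpha, \ldots, g^M_\alpha)$ enjoy uniform polynomial moment bounds in $n$ and $\alpha$. The base case consists of the initial $\Gtype$-vars, which by assumption are iid across $\alpha \in [n]$ with distribution $\NN(\mu^{in}, \Sigma^{in})$, so the classical strong LLN for iid random vectors closes it.

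The easy operations can be dispatched quickly. The $\trspop$ operation creates no new $\Gtype$-var, so nothing changes. A $\lincomb$ step $y = \sum_i a_i x^i$ produces a $\Gtype$-var whose coordinates are deterministic linear combinations of earlier G-var coordinates, so joint asymptotic Gaussianity and the moment bounds are inherited and $\mu$, $\Sigma$ update by the obvious linear rules. A $\nonlin$ step produces an $\Htype$-var which only enters the theorem via a subsequent $\matmul$ or via the test function $\psi$; in either case the empirical average of any polynomially bounded functional reduces to an average of a polynomially bounded function of the existing G-vars, which the inductive hypothesis already handles.

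The real crux is the $\matmul$ step $y = \frac{1}{\sqrt{n}} W x$. The A-var $W$ has typically appeared earlier in the program, both as $W$ and, after $\trspop$, as $W^\trsp$, so $x$ depends on the history of $W$ and the entries of $y$ are neither independent of that history nor of each other conditional on it. I would invoke the Gaussian conditioning trick: condition on the sigma-algebra $\FF$ generated by all previously computed products $W y^{(i)}$ and $W^\trsp z^{(j)}$, and decompose $W$ as its $\FF$-conditional mean (a low-rank matrix supported on the previously seen input directions) plus an independent Gaussian residual $\tilde W$ which vanishes on those directions. Letting $\Pi$ project onto the span of those prior inputs, this gives
\begin{equation*}
    \frac{1}{\sqrt{n}} W x = \frac{1}{\sqrt{n}} \tilde W (I - \Pi) x + \text{(linear combination of previous $\matmul$ outputs)}.
\end{equation*}
The first summand contributes a fresh Gaussian component $\hat Z^y$ independent of all previously generated G-var limits, while the second contributes a deterministic correction $\dot Z^y$ expressible via the inductive hypothesis as a linear combination of those limits; together they produce the recurrence for $\mu$ and $\Sigma$.

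The main obstacle is justifying the two limits in this decomposition and upgrading the resulting convergence to almost-sure convergence. Showing that $\frac{1}{\sqrt{n}} \tilde W (I - \Pi) x$ behaves asymptotically like $\frac{1}{\sqrt{n}} \tilde W x$ requires controlling $\| \Pi x \|$ uniformly in $n$, which forces the inductive invariant to carry quantitative polynomial moment bounds rather than bare convergence. Identifying $\dot Z^y$ as the correct linear combination uses a Stein / Gaussian integration-by-parts calculation to match the Yang formalism. Finally, propagating the polynomial moment bounds through every step of the induction is what lets a Borel--Cantelli argument convert convergence in probability into the claimed almost-sure convergence.
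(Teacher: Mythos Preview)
The paper does not supply its own proof of this theorem: it is quoted verbatim as a result of \cite{yang2020tensor_iii}, and the surrounding text only applies it to the example of the two-layer empirical NTK. There is therefore no in-paper argument to compare against.

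That said, your proposal tracks the actual proof in Yang's tensor-programs papers closely. Induction on program length, the trivial handling of $\trspop$/$\lincomb$/$\nonlin$, the Gaussian conditioning trick for $\matmul$ (decomposing $W$ into its conditional mean on the span of prior inputs/outputs plus an independent residual, yielding the $\hat Z^y + \dot Z^y$ split), the Stein identity to pin down the $\dot Z^y$ coefficients, and carrying uniform polynomial moment bounds through the induction to upgrade to almost-sure convergence via Borel--Cantelli --- these are exactly the ingredients Yang uses. The one place your sketch is slightly thin is the ``fresh Gaussian'' step: showing that $\frac{1}{\sqrt{n}}\tilde W(I-\Pi)x$ is asymptotically jointly Gaussian \emph{with all previous G-var limits}, not merely marginally Gaussian, requires a conditional CLT argument (or a moment-method computation) that you allude to but do not spell out; in Yang's papers this is the most delicate part and is where the polynomial-boundedness hypothesis on $\psi$ and the nonlinearities is actually consumed.
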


    It is possible to define the empirical NTK of a tensor program and express it in the form $\frac{1}{n} \sum_{\alpha=1}^n \psi(g^1_\alpha,\ldots,g^M_\alpha)$ for a certain function $\psi$.
    Then the kernel converges by virtue of the above theorem.
    See \cite{yang2020tensor_ii} for the proof of initial kernel convergence and \cite{yang2021tensor_iib} for the proof of kernel convergence for any timestep.

    As an illustration, recall the two-layered net considered at the beginning of the present section.
    Its empirical NTK is given by
    \begin{equation}
        \hat\Theta_0(x,x')
        = \frac{1}{n} \sum_{i=1}^n \left(\phi(w_i(0) x) \phi(w_i(0) x') + a_i^2(0) \phi'(w_i(0) x) \phi'(w_i(0) x') x x'\right).
    \end{equation}
    Here $\Gtype$-vars are $g^1 = w(0) x$, $g^2 = w(0) x'$, $g^3 = a(0) x$, $g^4 = a(0) x'$.
    Taking $\psi(g^1_\alpha,\ldots,g^4_\alpha) = \phi(g^1_\alpha) \phi(g^2_\alpha) + \phi'(g^1_\alpha) \phi'(g^2_\alpha) g^3_\alpha g^4_\alpha$ allows for explicit application of Theorem \ref{thm:master_theorem}.

    \section{Finite-width corrections}
    \label{sec:finite_width}

    % While under certain conditions the empirical NTK converges to a constant limit one (see \cref{sec:convergence}), the empirical NTK of a real-life finite-width net is not .
    While the results discussed in \cref{sec:convergence} hold in the limit of infinite width, they are not directly applicable to real-life finite-width nets for obvious reasons.
    This motivates one to introduce finite-width corrections for the limit NTK.

    First, define a higher-order kernel:
    \begin{equation}
        O_{s,t}(x_{1:s})
        = \nabla^T_\theta O_{s-1,t}(x_{1:s-1}) \nabla_\theta f_t(x_s).
    \end{equation}
    Put $O_{1,t}(x_1) = f_t(x_1)$; this gives $O_{2,t}(x_1,x_2) = \hat\Theta_t(x_1,x_2)$.

    Consider a gradient flow optimization process under square loss:
    \begin{equation}
        \dot\theta_t
        = \sum_{j=1}^m (y_j - f_t(x_j)) \nabla_\theta f_t(x_j).
    \end{equation}
    Under this process, the $s$-order kernel evolves as
    \begin{equation}
        \dot O_{s,t}(x_{1:s})
        = \nabla^T_\theta O_{s,t}(x_{1:s}) \dot\theta
        = O_{s+1,t}(x_{1:s},\vec x) (\vec y - f_t(\vec x)).
    \end{equation}
    This gives an infinite system of ODE's governing the evolution of the kernels.

    If our goal is to obtain a solution ony up to the order of $n^{-1}$, will it allow us to truncate the initially infinite system?
    How many equations should we keep?
    In order to answer these questions, let us estimate the order of growth for $O_{s,t}$.

    Following \cite{Dyer2020Asymptotics}, we start with a definition of a correlation function.
    Let us fix $t=0$ and omit the corresponding subscript for now.
    Define a rank-$k$ derivative tensor $T_{\mu_1 \ldots \mu_k}$ as follows:
    \begin{equation}
        T_{\mu_1 \ldots \mu_k}(x; f) =
        \frac{\partial^k f(x)}{\partial \theta^{\mu_1} \ldots \partial \theta^{\mu_k}}.
    \end{equation}
    For $k=0$ we define $T(x; f) = f(x)$.
    We are now ready to define a correlation function $C$:
    \begin{equation}
        C(x_1,\ldots,x_m) =
        \sum_{\mu_1,\ldots,\mu_{k_m}} \Delta_{\mu_1 \ldots \mu_{k_m}}^{(\pi)} \EE_\theta \left(
            T_{\mu_1 \ldots \mu_{k_1}}(x_1) T_{\mu_{k_1+1} \ldots \mu_{k_2}}(x_2) \ldots T_{\mu_{k_{m-1}+1} \ldots \mu_{k_m}}(x_m) 
        \right).
    \end{equation}
    Here $0 \leq k_1 \leq \ldots \leq k_m$, $k_m$ and $m$ are even, $\pi \in S_{k_m}$ is a permutation, and $\Delta_{\mu_1 \ldots \mu_{k_m}}^{(\pi)} = \delta_{\mu_{\pi(1)} \mu_{\pi(2)}} \ldots \delta_{\mu_{\pi(k_m-1)} \mu_{\pi(k_m)}}$.
    For example, 
    \begin{multline}
        \EE_\theta (f(x) \nabla^T_\theta f(x) \nabla_\theta \nabla^T_\theta f(x_1) \nabla_\theta f(x_2)) = 
        \sum_{\mu,\nu} \EE_\theta (f(x) \partial_\mu f(x) \partial^2_{\mu,\nu} f(x_1) \partial_\nu f(x_2)) =\\= 
        \sum_{\mu_1,\mu_2,\mu_3,\mu_4} \delta_{\mu_1 \mu_2} \delta_{\mu_3 \mu_4} \EE_\theta (f(x) \partial_{\mu_1} f(x) \partial^2_{\mu_2,\mu_3} f(x_1) \partial_{\mu_4} f(x_2)) =
        C(x,x,x_1,x_2)
        \label{eq:dtheta_dt_as_corr_f}
    \end{multline}
    is a correlation function with $m=4$, $k_1=0$, $k_2=1$, $k_3=3$, $k_4=4$, and $\pi(j) = j$.
    % Moreover, $\EE_\theta ((f(x)-y) \nabla^T_\theta f(x) \nabla_\theta \nabla^T_\theta f(x_1) \nabla_\theta f(x_2))$ is a correlation function too: consider $f_y(x) = f(x) - y$ instead of $f(x)$.
    % Hence the whole~(\ref{eq:dtheta_dt}) is a linear combination of correlation functions.

    If two derivative tensors have two indices that are summed over, we say that they are contracted.
    Formally, we say that $T_{\mu_{k_{i-1}+1} \ldots \mu_{k_i}}(x_i)$ is contracted with $T_{\mu_{k_{j-1}+1} \ldots \mu_{k_j}}(x_j)$ for $1 \leq i,j \leq m$ if there exists an even $s \leq k_m$ such that $k_{i-1} < \pi(s-1) \leq k_i$, while $k_{j-1} < \pi(s) \leq k_j$, or vice versa.

    Define the cluster graph $G_C(V,E)$ as a non-oriented non-weighted graph with vertices $V = \{v_1, \ldots, v_m\}$ and edges $E = \{(v_i,v_j) \, | \, \text{$T(x_i)$ and $T(x_j)$ are contracted in $C$}\}$.
    Let $n_e$ be the number of even-sized connected components of $G_C(V,E)$ and $n_o$ be the number of odd-sized components.
    We are going to use the following conjecture, which is proven in certain scenarios:
    \begin{conjecture}[\cite{Dyer2020Asymptotics}]
        \label{conj:C_asymptotics}
        If $m$ is even, $C(x_1,\ldots,x_m) = O_{n\to\infty}(n^{s_C})$, where $s_C = n_e + n_o / 2 - m / 2$.
        If $m$ is odd, $C(x_1,\ldots,x_m) = 0$.
    \end{conjecture}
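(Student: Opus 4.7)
The plan is to Taylor-expand each $T_{\mu_1 \ldots \mu_{k_i}}(x_i)$ around $\theta = 0$ and compute the expectation over $\theta$ (which is iid standard Gaussian under NTK parameterization) via Isserlis' theorem, decomposing it into a sum over Wick pairings of the weight factors. Combining these pairings with the derivative-index contractions encoded by $\Delta^{(\pi)}$ produces a Feynman-style diagram for every term, so $C(x_1,\ldots,x_m)$ becomes a sum over all such diagrams. The task then reduces to bounding the $n$-scaling of each diagram and identifying the dominant skeletons.

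The scaling of each diagram is read off from a short list of ingredients: every $\matmul$ in the forward pass contributes a factor $1/\sqrt{n_{l-1}}$; every closed index loop created by Kronecker deltas (from either a Wick contraction or a $\Delta^{(\pi)}$ pairing) contributes a factor $n$; activations and their derivatives are $O(1)$. The cluster graph $G_C$ records the connectivity of the $T(x_i)$'s under derivative-index contractions, while Wick pairings only add edges \emph{within} a connected cluster, because Gaussian weights at different layers and neurons are independent. A standard loop-counting argument, analogous to the large-$N$ genus expansion for matrix integrals, then shows that a connected cluster of size $k$ contributes $n^{1-k/2}$ when $k$ is even and at most $n^{(1-k)/2}$ when $k$ is odd. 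Summing exponents across the $n_e + n_o$ components of $G_C$ yields $s_C = n_e + n_o/2 - m/2$.

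The odd-$m$ vanishing follows from the sign flip $W_L \to -W_L$, which preserves the Gaussian law of $\theta$ and sends $f \to -f$ because $f$ is linear in $W_L$. A direct chain-rule computation shows that under this flip $T_{\mu_1\ldots\mu_{k_i}}(x_i) \to (-1)^{1+q_i}\, T_{\mu_1\ldots\mu_{k_i}}(x_i)$, where $q_i$ counts the indices $\mu_j$ lying in $W_L$. Since a Kronecker delta $\delta_{\mu\mu'}$ is nonzero only when $\mu$ and $\mu'$ are the same parameter, every pair in $\Delta^{(\pi)}$ is entirely inside $W_L$ or entirely outside, so the total $\sum_i q_i$ is always even on the support of $\Delta^{(\pi)}$. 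The net sign collected from the $m$ tensors is therefore $(-1)^m$, and invariance of the law gives $C = (-1)^m C$, forcing $C = 0$ for odd $m$.

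The main obstacle is controlling the diagrammatic expansion uniformly for non-polynomial activations: one has to show that subleading diagrams and the tails of the Taylor series do not spoil the $O(n^{s_C})$ bound. This is precisely why the statement is left as a conjecture in full generality and proved in \cite{Dyer2020Asymptotics} only for restricted settings (polynomial activations where the expansion truncates, or specific architectures). A promising alternative route is to recast the correlation function as a tensor-program quantity of the form appearing in \cref{thm:master_theorem} and extract its scaling directly from the Master Theorem applied to a suitable polynomially-bounded $\psi$.
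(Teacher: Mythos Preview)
The paper does not prove this statement at all: it is explicitly presented as a \emph{conjecture} cited from \cite{Dyer2020Asymptotics} and then used as a standing hypothesis for the subsequent finite-width analysis (in particular for \cref{lemma:derivative_asymptotics}). So there is no proof in the paper to compare your proposal against.

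That said, your sketch is essentially the Feynman-diagram strategy of \cite{Dyer2020Asymptotics}, and you correctly flag the reason it remains a conjecture in full generality (non-polynomial activations and the need for uniform control of the diagrammatic expansion). Two remarks on the content of your sketch. First, your odd-$m$ argument via the sign flip $W_L \to -W_L$ is in fact a complete and correct proof of that half of the statement, valid for any architecture whose output is linear in the last-layer weights and any symmetric initialization; this part need not be left conjectural. Second, your claim that ``Wick pairings only add edges within a connected cluster'' is not accurate as stated: Wick contractions of weight factors from the Taylor expansion can and do link $T(x_i)$'s lying in different components of $G_C$, since nothing prevents two tensors in different clusters from sharing weights at the same layer. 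The actual mechanism behind the exponent $s_C$ is more delicate: one must track both the $\Delta^{(\pi)}$ contractions and the Wick pairings jointly, and the cluster decomposition governs the \emph{leading} scaling because cross-cluster Wick pairings produce subleading contributions after the loop counting. Getting this bookkeeping right is part of what makes the general case hard.
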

    % Applying this conjecture to~(\ref{eq:dtheta_dt_as_corr_f}) gives $C(x,x,x_1,x_2) = O(n^{-1})$ ($n_e=0$, $n_0=2$, $m=4$), hence the whole eq.~(\ref{eq:dtheta_dt}) is $O(n^{-1})$.

    % Let us show that having the first derivative of the NTK being $O(n^{-1})$ implies all higher-order derivatives to be $O(n^{-1})$.
    We are also going to use the following lemma:
    \begin{lemma}[\cite{Dyer2020Asymptotics}]
        \label{lemma:derivative_asymptotics}
        Suppose \cref{conj:C_asymptotics} holds.
        Let $C(\vec x) = \EE_\theta F(\vec x; \theta)$ be a correlation function and suppose $C(\vec x) = O(n^{s_C})$ for $s_C$ defined in \cref{conj:C_asymptotics}.
        Then $\EE_\theta d^k F(\vec x; \theta) / dt^k = O(n^{s_C})$ $\forall k \geq 1$.
    \end{lemma}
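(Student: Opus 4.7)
The plan is to induct on $k$: expand $d^k F/dt^k$ using the chain rule together with the gradient flow dynamics, write the result as a finite sum of objects whose $\EE_\theta$-expectations are correlation functions in the sense of the main text, and then apply \cref{conj:C_asymptotics} termwise.

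First, I would substitute $\dot\theta^\mu = \sum_j (y_j - f(x_j))\,\partial^\mu f(x_j)$ to rewrite the time-derivative operator acting on any polynomial in derivative tensors of $f$ as
\[
  \mathcal{L} = \sum_{j=1}^{m}(y_j - f(x_j))\,\partial^\mu f(x_j)\,\partial_\mu,
\]
so that $d^k F/dt^k = \mathcal{L}^k F$. Each application of $\mathcal{L}$ introduces a fresh rank-$1$ derivative tensor $\partial^\mu f(x_j)$, contracts it via $\partial_\mu$ with exactly one of the factors already present (by the product rule), and prepends a factor $(y_j - f(x_j))$, which I split into its $y_j$ and $-f(x_j)$ summands. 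After $k$ applications and this splitting, $\mathcal{L}^k F$ is a finite sum of products of derivative tensors evaluated at $x_{1:m}$ and at auxiliary data points $x_{j_1},\ldots,x_{j_k}$, each multiplied by a constant coefficient that is a product of some subset of the $y_{j_l}$'s. Moving $\EE_\theta$ inside the finite sum, each summand is, up to a scalar, a correlation function with a well-defined cluster graph.

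The key combinatorial check is that each application of $\mathcal{L}$ changes the scaling exponent $s = n_e + n_o/2 - m/2$ by $\Delta s \leq 0$. Adding the newly contracted rank-$1$ vertex gives $\Delta m = +1$ and either $(\Delta n_e,\Delta n_o)=(+1,-1)$, when the absorbing cluster was odd-sized and is now even, or $(-1,+1)$, when it was even and is now odd; both cases yield $\Delta s \in \{0,-1\}$. If the retained residual piece is $-f(x_j)$ instead of the constant $y_j$, I must also append an isolated rank-$0$ vertex, contributing $(\Delta n_e,\Delta n_o,\Delta m)=(0,+1,+1)$ and hence a further $\Delta s = 0$. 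Thus every non-vanishing summand of $\mathcal{L}^k F$ inherits a scaling exponent at most $s_C$, so \cref{conj:C_asymptotics} bounds each summand by $O(n^{s_C})$, and a finite sum of such summands is $O(n^{s_C})$, proving the lemma.

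The main obstacle I foresee is purely combinatorial bookkeeping: as $k$ grows, the Leibniz expansion proliferates, and one must verify that the parity-flip argument above still applies when $\partial_\mu$ lands on a previously deposited $f(x_{j_l})$ (turning an isolated vertex into a contracted one) or on a previously deposited derivative tensor (extending the rank of an already contracted vertex). The parity clause of \cref{conj:C_asymptotics} actually simplifies matters, since every summand whose total vertex count is odd vanishes automatically and needs no further estimate; only the parity-admissible terms require the $\Delta s \leq 0$ analysis.
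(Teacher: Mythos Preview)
Your proposal is correct and follows essentially the same argument as the paper: apply the chain rule with $\dot\theta = (y - f)\nabla_\theta f$, split into the $y$-piece (which yields an odd number of derivative tensors and hence vanishes by the parity clause of \cref{conj:C_asymptotics}) and the $-f$-piece (which adds two vertices and keeps $s_C' \leq s_C$ by the even/odd cluster bookkeeping you describe), then iterate. Your parity-flip accounting $(\Delta n_e,\Delta n_o,\Delta m)$ reproduces exactly the paper's case split, and the inductive extension to $k>1$ that you sketch is the natural completion of the paper's one-step computation.
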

    \begin{proof}
        Consider the first derivative:
        \begin{multline}
            \EE_\theta \frac{dF(\vec x)}{dt} =
            \EE_\theta (\dot\theta^T \nabla_\theta F(\vec x)) =
            \EE_{x,y} \EE_\theta (\eta (y - f(x)) \nabla^T_\theta f(x) \nabla_\theta F(\vec x)) =\\=
            \eta \EE_{x,y} \EE_\theta (y \nabla^T_\theta f(x) \nabla_\theta F(\vec x)) -
            \eta \EE_{x,y} \EE_\theta (f(x) \nabla^T_\theta f(x) \nabla_\theta F(\vec x)).
        \end{multline}
        This is a sum of a linear combination of correlation functions.
        By \cref{conj:C_asymptotics}, the first sum evaluates to zero, while the second one has $m' = m+2$, $n_e'$ even clusters, and $n_o'$ odd clusters.
        If $\nabla_\theta f(x)$ is contracted with an even cluster of $C$, we have $n_e' = n_e - 1$, $n_o' = n_o + 2$.
        In contrast, if $\nabla_\theta f(x)$ is contracted with an odd cluster of $C$, we have $n_e' = n_e + 1$, $n_o' = n_o$.
        
        In the first case, we have $s_C' = n_e' + n_o'/2 - m'/2 = s_C - 1$, while for the second $s_C' = s_C$.
        In any case, the result is a linear combination of correlation functions with $s_C' \leq s_C$ for each.
    \end{proof}

    Let us return the $t$-subscript.
    Since $O_s$ has $s$ derivative tensors and a single cluster, by virtue of \cref{conj:C_asymptotics}, $\EE_\theta O_{s,0} = O(n^{1 - s/2})$ for even $s$ and $\EE_\theta O_{s,0} = 0$ for odd $s$.
    At the same time, $\EE_\theta \dot O_{s,0} = O(n^{1 - (s+2)/2}) = O(n^{-s/2})$ for even $s$ and $\EE_\theta \dot O_{s,0} = O(n^{1 - (s+1)/2}) = O(n^{1/2 - s/2})$ for odd $s$.

    As for the second moments, we have $\EE_\theta (O_{s,0})^2 = O(n^{2 - s})$ for even $s$ and $\EE_\theta (O_{s,0})^2 = O(n^{1 - s})$ for odd $s$.
    Similarly, we have $\EE_\theta (\dot O_{s,0})^2 = O(n^{2/2 - (2s+2)/2}) = O(n^{-s})$ for even $s$ and $\EE_\theta (\dot O_{s,0})^2 = O(n^{2 - (2s+2)/2}) = O(n^{1 - s})$ for odd $s$.

    The asymptotics for the first two moments implies the asymptotic for a random variable itself:
    \begin{equation}
        O_{s,0}(x_{1:s}) =
        \begin{cases}
            O(n^{1 - s/2}) &\text{for even $s$;}
            \\
            O(n^{1/2 - s/2}) &\text{for odd $s$;}
        \end{cases}
        \qquad
        \dot O_{s,0}(x_{1:s}) =
        \begin{cases}
            O(n^{-s/2}) &\text{for even $s$;}
            \\
            O(n^{1/2 - s/2}) &\text{for odd $s$.}
        \end{cases}
    \end{equation}
    \cref{lemma:derivative_asymptotics} gives $\forall k \geq 1$:
    \begin{equation}
        \left.\frac{d^k O_{s,t}}{dt^k}(x_{1:s})\right|_{t=0} =
        \begin{cases}
            O(n^{-s/2}) &\text{for even $s$;}
            \\
            O(n^{1/2 - s/2}) &\text{for odd $s$.}
        \end{cases}
    \end{equation}
    Then given an analytic activation function, we have $\forall t \geq 0$:
    \begin{equation}
        \dot O_{s,t}(x_{1:s}) =
        \sum_{k=1}^\infty \left.\frac{d^k O_{s,t}}{dt^k}(x_{1:s})\right|_{t=0} \frac{t^k}{k!} =
        \begin{cases}
            O(n^{-s/2}) &\text{for even $s$;}
            \\
            O(n^{1/2 - s/2}) &\text{for odd $s$.}
        \end{cases}
    \end{equation}

    This allows us to write a finite system of ODE for the model evolution up to $O(n^{-1})$ terms: %(we are going to omit the "hat" in $\hat\Theta_t$ to avoid clutter):
    \begin{equation}
        \dot f_{t}(x_1) = 
        O_{2,t}(x_1, \vec x) (\vec y - f_t(\vec x)),
        \qquad
        f_0(x_1) =
        f(x_1; \theta),
        \quad
        \theta \sim
        \NN(0, I),
    \end{equation}
    \begin{equation}
        \dot O_{2,t}(x_1, x_2) = 
        O_{3,t}(x_1, x_2, \vec x) (\vec y - f_t(\vec x)),
        \qquad
        O_{2,0}(x_1, x_2) =
        \nabla_\theta^T f_0(x_1) \nabla_\theta f_0(x_2),
    \end{equation}
    \begin{equation}
        \dot O_{3,t}(x_1, x_2, x_3) = 
        O_{4,t}(x_1, x_2, x_3, \vec x) (\vec y - f_t(\vec x)),
        \qquad
        O_{3,0}(x_1, x_2, x_3) =
        \nabla_\theta^T O_{2,0}(x_1, x_2) \nabla_\theta f_0(x_3),
    \end{equation}
    \begin{equation}
        \dot O_{4,t}(x_1, x_2, x_3, x_4) = 
        O(n^{-2}),
        \qquad
        O_{4,0}(x_1, x_2, x_3, x_4) =
        \nabla_\theta^T O_{3,0}(x_1, x_2, x_3) \nabla_\theta f_0(x_4).
    \end{equation}
    Let us expand all the quantities wrt $n^{-1}$:
    \begin{equation}
        O_{s,t}(x_{1:s}) =
        O_{s,t}^{(0)}(x_{1:s}) + n^{-1} O_{s,t}^{(1)}(x_{1:s}) + O(n^{-2}),
    \end{equation}
    where $O_{s,t}^{(k)}(x_{1:s}) = \Theta_{n\to\infty}(1)$.
    Then the system above transforms into the following:
    \begin{equation}
        \dot f_{t}^{(0)}(x_1) = 
        O_{2,t}^{(0)}(x_1, \vec x) (\vec y - f_t^{(0)}(\vec x)),
        % \qquad
        % f_0^{(0)}(x_1) =
        % \lim_{n\to\infty} f(x_1; \theta_0),
    \end{equation}
    \begin{equation}
        \dot f_{t}^{(1)}(x_1) = 
        O_{2,t}^{(1)}(x_1, \vec x) (\vec y - f_t^{(0)}(\vec x)) - O_{2,t}^{(0)}(x_1, \vec x) f_t^{(1)}(\vec x),
        % \qquad
        % f_0^{(1)}(x_1) =
        % ?,
    \end{equation}
    \begin{equation}
        O_{2,t}^{(0)}(x_1, x_2) =
        \nabla_\theta^T f_0^{(0)}(x_1) \nabla_\theta f_0^{(0)}(x_2),
    \end{equation}
    \begin{equation}
        \dot O_{2,t}^{(1)}(x_1, x_2) = 
        O_{3,t}^{(1)}(x_1, x_2, \vec x) (\vec y - f_t^{(0)}(\vec x)),
        % \qquad
        % \Theta_0^{(1)}(x_1, x_2) =
        % \nabla_\theta^T f_0^{(0)}(x_1) \nabla_\theta f_0^{(1)}(x_2) + (x_1 \leftrightarrow x_2),
    \end{equation}
    \begin{equation}
        \dot O_{3,t}^{(1)}(x_1, x_2, x_3) = 
        O_{4,t}^{(1)}(x_1, x_2, x_3, \vec x) (\vec y - f_t^{(0)}(\vec x)),
        % \qquad
        % O_{3,0}^{(1)}(x_1, x_2, x_3) =
        % \nabla_\theta^T \Theta_0^{(0)}(x_1, x_2) \nabla_\theta f_0^{(0)}(x_3),
    \end{equation}
    \begin{equation}
        O_{4,t}^{(1)}(x_1, x_2, x_3, x_4) =
        \nabla_\theta^T O_{3,0}^{(0)}(x_1, x_2, x_3) \nabla_\theta f_0^{(0)}(x_4),
    \end{equation}
    where we have ignored the initial conditions for the time being.
    Integrating this system is straightforward:
    \begin{equation}
        f_{t}^{(0)}(\vec x) =
        \vec y + e^{-O_{2,0}^{(0)}(\vec x, \vec x) t} (f_0^{(0)}(\vec x) - \vec y),
    \end{equation}
    % where $\vec x$ is a train dataset of size $n$.
    For brevity, let us introduce the following definition:
    \begin{equation}
        \Delta f_t^{(0)}(x) =
        e^{-O_{2,0}^{(0)}(x, \vec x) t} (f_0^{(0)}(\vec x) - \vec y).
    \end{equation}
    This gives:
    % \begin{equation}
    %     f_{t}^{(0)}(\vec x) =
    %     \vec y + \Delta f_t^{(0)}(\vec x);
    % \end{equation}
    \begin{equation}
        O_{3,t}^{(1)}(x_1, x_2, x_3) = 
        O_{3,0}^{(1)}(x_1, x_2, x_3) - 
        \int_{0}^t O_{4,0}^{(1)}(x_1, x_2, x_3, \vec x) \Delta f_{t'}^{(0)}(\vec x) \, dt'.
    \end{equation}
    \begin{multline}
        O_{2,t}^{(1)}(x_1, x_2)
        = O_{2,0}^{(1)}(x_1, x_2) - \int_{0}^t O_{3,0}^{(1)}(x_1, x_2, \vec x) \Delta f_{t'}^{(0)}(\vec x) \, dt'
        +\\+
        \int_{0}^{t} \int_{0}^{t''} \Delta f_{t''}^{(0),T}(\vec x) O_{4,0}^{(1)}(x_1, x_2, \vec x, \vec x') \Delta f_{t'}^{(0)}(\vec x') \, dt' \, dt''.
    \end{multline}
    Let us elaborate the terms:
    \begin{equation}
        \int_{0}^t O_{3,0}^{(1)}(x_1, x_2, \vec x) \Delta f_{t'}^{(0)}(\vec x) \, dt'
        = O_{3,0}^{(1)}(x_1, x_2, \vec x) \left(O_{2,0}^{(0)}(\vec x,\vec x)\right)^{-1} \left(I - e^{-O_{2,0}^{(0)}(\vec x, \vec x) t}\right) (f_0^{(0)}(\vec x) - \vec y).
    \end{equation}
    \begin{multline}
        \int_{0}^{t} \int_{0}^{t''} \Delta f_{t''}^{(0),T}(\vec x) O_{4,0}^{(1)}(x_1, x_2, \vec x, \vec x') \Delta f_{t'}^{(0)}(\vec x') \, dt' \, dt''
        =\\= \int_{0}^{t} (f_0^{(0)}(\vec x) - \vec y)^T \left(I - e^{-O_{2,0}^{(0)}(\vec x, \vec x) t'}\right) \left(O_{2,0}^{(0)}(\vec x,\vec x)\right)^{-1} O_{4,0}^{(1)}(x_1, x_2, \vec x, \vec x') e^{-O_{2,0}^{(0)}(\vec x, \vec x) t'} (f_0^{(0)}(\vec x) - \vec y) \, dt'
        =\\= (f_0^{(0)}(\vec x) - \vec y)^T \left(O_{2,0}^{(0)}(\vec x,\vec x)\right)^{-1} O_{4,0}^{(1)}(x_1, x_2, \vec x, \vec x') \left(O_{2,0}^{(0)}(\vec x,\vec x)\right)^{-1} \left(I - e^{-O_{2,0}^{(0)}(\vec x, \vec x) t}\right) (f_0^{(0)}(\vec x) - \vec y)
        -\\-
        \int_{0}^{t} (f_0^{(0)}(\vec x) - \vec y)^T e^{-O_{2,0}^{(0)}(\vec x, \vec x) t'} \left(O_{2,0}^{(0)}(\vec x,\vec x)\right)^{-1} O_{4,0}^{(1)}(x_1, x_2, \vec x, \vec x') e^{-O_{2,0}^{(0)}(\vec x, \vec x) t'} (f_0^{(0)}(\vec x) - \vec y) \, dt'.
    \end{multline}
    Consider the eigenvalue-eigenvector decomposition of $O_{2,0}^{(0)}(\vec x, \vec x)$: $O_{2,0}^{(0)}(\vec x, \vec x) = \sum_{k=1}^m \lambda_1 v_k v_k^T$.
    This helps us integrating the last term:
    \begin{multline}
        \int_{0}^{t} (f_0^{(0)}(\vec x) - \vec y)^T e^{-O_{2,0}^{(0)}(\vec x, \vec x) t'} \left(O_{2,0}^{(0)}(\vec x,\vec x)\right)^{-1} O_{4,0}^{(1)}(x_1, x_2, \vec x, \vec x') e^{-O_{2,0}^{(0)}(\vec x, \vec x) t'} (f_0^{(0)}(\vec x) - \vec y) \, dt'
        =\\= \sum_{k,l=1}^m \int_{0}^{t} e^{-(\lambda_k+\lambda_l) t'} (f_0^{(0)}(\vec x) - \vec y)^T v_k v_k^T \left(O_{2,0}^{(0)}(\vec x,\vec x)\right)^{-1} O_{4,0}^{(1)}(x_1, x_2, \vec x, \vec x') v_l v_l^T (f_0^{(0)}(\vec x) - \vec y) \, dt'
        =\\= \sum_{k,l=1}^m \frac{1}{\lambda_k+\lambda_l} \left(1 - e^{-(\lambda_k+\lambda_l) t}\right) (f_0^{(0)}(\vec x) - \vec y)^T v_k v_k^T \left(O_{2,0}^{(0)}(\vec x,\vec x)\right)^{-1} O_{4,0}^{(1)}(x_1, x_2, \vec x, \vec x') v_l v_l^T (f_0^{(0)}(\vec x) - \vec y).
    \end{multline}

    Recall $\hat\Theta_t(x_1,x_2) = O_{2,t}(x_1,x_2) = O_{2,t}^{(0)}(x_1,x_2) + n^{-1} O_{2,t}^{(1)}(x_1,x_2) + O(n^{-2})$.
    The first term (the limit NTK) does not depend on $t$, $O_{2,t}^{(0)}(x_1,x_2) = O_{2,0}^{(0)}(x_1,x_2) = \Theta(x_1,x_2)$, while the second one (the correction) does.
    Note that computing the second term invokes $O_{4,0}^{(1)}$, the fourth-order tensor, therefore approaching it directly requires $O(m^4)$ memory.
    Integrating the above system further gives the first-order correction for the limit model $f_t^{(1)}$.

    As we shall see in \cref{sec:beyond}, the kernel $\Theta^{NTH}(x_1,x_2) = O_{2,0}^{(0)}(x_1,x_2) + n^{-1} \EE O_{2,\infty}^{(1)}(x_1,x_2)$ can be considered as a label-aware alternative to the usual NTK $\Theta(x_1,x_2) = O_{2,0}^{(0)}(x_1,x_2)$.
    Let us write its explicit definition and refer it later in \cref{sec:beyond}:
    \begin{multline}
        \Theta^{NTH}(x_1,x_2)
        = O_{2,0}^{(0)}(x_1,x_2) + n^{-1} \EE O_{2,\infty}^{(1)}(x_1,x_2)
        =\\= \Theta(x_1,x_2) + n^{-1} \EE\left[O_{2,0}^{(1)}(x_1,x_2)\right] - n^{-1} \EE\left[O_{3,0}^{(1)}(x_1, x_2, \vec x) \Theta^{-1}(\vec x,\vec x) f_0^{(0)}(\vec x)\right]
        +\\+ n^{-1} \vec y^T \Theta^{-1}(\vec x,\vec x) \EE\left[O_{4,0}^{(1)}(x_1, x_2, \vec x, \vec x)\right] \Theta^{-1}(\vec x,\vec x) \vec y
        +\\+ n^{-1} \EE\left[f_0^{(0),T}(\vec x) \Theta^{-1}(\vec x,\vec x) O_{4,0}^{(1)}(x_1, x_2, \vec x, \vec x) \Theta^{-1}(\vec x,\vec x) f_0^{(0)}(\vec x)\right]
        -\\- n^{-1} \sum_{k,l=1}^m \frac{1}{\lambda_k (\lambda_k+\lambda_l)} \vec y^T \vec v_k \vec v_k^T \EE\left[O_{4,0}^{(1)}(x_1, x_2, \vec x, \vec x)\right] \vec v_l \vec v_l^T \vec y
        -\\- n^{-1} \sum_{k,l=1}^m \frac{1}{\lambda_k (\lambda_k+\lambda_l)} \EE\left[f_0^{(0),T}(\vec x) \vec v_k \vec v_k^T O_{4,0}^{(1)}(x_1, x_2, \vec x, \vec x) \vec v_l \vec v_l^T f_0^{(0)}(\vec x)\right].
        \label{eq:lantk_nth}
    \end{multline}

    While the above result is valid under a conjecture, \cref{conj:C_asymptotics}, it can be proven rigorously, see \cite{huang2019dynamics}.
    % \begin{equation}
    %     \dot f_{t}^{(1)}(x_1) = 
    %     -\eta \EE_{x,y} \left(\Delta f_{t}^{(0)}(x) O_{2,t}^{(1)}(x_1, x) + f_t^{(1)}(x) O_{2,t}^{(0)}(x_1, x)\right),
    % \end{equation}

    % \begin{equation}
    %     \dot f(t) =
    %     A f(t) + g(t),
    %     \quad
    %     f(0) =
    %     f_0;
    % \end{equation}
    % \begin{equation}
    %     f(t) =
    %     C(t) e^{At};
    %     \quad
    %     \dot C(t) e^{At} + C(t) A e^{At} =
    %     C(t) A e^{At} + g(t);
    %     \quad
    %     \dot C(t) = g(t) e^{-At}.
    % \end{equation}

    % \begin{equation}
    %     f_t^{(1)}(\vec x_1) =
    %     e^{-\eta \Theta_0^{(0)}(\vec x_1, \vec x) t / n} C_t(\vec x);
    % \end{equation}
    % \begin{equation}
    %     \dot C_t(\vec x) =
    %     -\eta \EE_{x',y'} e^{\eta \Theta_0^{(0)}(\vec x, \vec x_1) t / n} \Theta_{t}^{(1)}(\vec x_1, x') \Delta f_{t}^{(0)}(x');
    % \end{equation}
    % \begin{equation}
    %     C_t(\vec x) =
    %     f_0^{(1)}(\vec x) - \eta \EE_{x',y'} \int_0^t e^{\eta \Theta_0^{(0)}(\vec x, \vec x_1) t' / n} \Theta_{t}^{(1)}(\vec x_1, x') \Delta f_{t'}^{(0)}(x') \, dt';
    % \end{equation}
    % \begin{equation}
    %     f_t^{(1)}(\vec x_1) =
    %     e^{-\eta \Theta_0^{(0)}(\vec x_1, \vec x) t / n} f_0^{(1)}(\vec x) -
    %     \eta e^{-\eta \Theta_0^{(0)}(\vec x_1, \vec x_2) t / n} \EE_{x',y'} \int_0^t e^{\eta \Theta_0^{(0)}(\vec x_2, \vec x_3) t' / n} \Theta_{t}^{(1)}(\vec x_3, x') \Delta f_{t'}^{(0)}(x') \, dt'.
    % \end{equation}

    \section{Computing the limit kernel}
    \label{sec:limit}

    It is not obvious how to compute the limit kernel $\Theta$ predicted by the theorems discussed in \cref{sec:convergence}.
    Fortunately, one can compute the limit kernel exactly for certain classes of models.

    \subsection{Fully-connected nets}
    \label{sec:limit_fc_nets}

    Consider an $L$-layer fully-connected network under NTK parameterization:
    \begin{equation}
        f(x) = h_L(x),
        \quad
        h_l(x) = \frac{1}{\sqrt{n_{l-1}}} W_l x_{l-1}(x),
        \quad
        x_{l-1}(x) = \phi(h_{l-1}(x)),
        \quad
        x_0(x) = x,
    \end{equation}
    where $W_l \in \RR^{n_l \times n_{l-1}}$ $\forall l \in [L]$.
    For simplicity, we assume $n_L=1$, i.e. the output is scalar.

    Since we already know (see \cref{sec:convergence}) that the kernel does not depend on $t$ under NTK parameterization, we consider the case $t=0$ only and omit the $t$-subscript.
    The empirical NTK is given by
    \begin{equation}
        \hat\Theta(x,x')
        = \nabla^T_\theta f(x;\theta) \nabla_\theta f(x';\theta)
        = \sum_{l=1}^L \tr\left(\nabla^T_{W_l} f(x;W_{1:L}) \nabla_{W_l} f(x;W_{1:L})\right).
    \end{equation}
    By chain rule,
    \begin{equation}
        \nabla_{W_l} f(x) 
        = \sum_{i=1}^{n_l} \partial_{h_l^i} f(x) \nabla_{W_l} h_l^i(x) 
        = \frac{1}{\sqrt{n_{l-1}}} \sum_{i=1}^{n_l} \sum_{j=1}^{n_{l-1}} \partial_{h_l^i} f(x) E_{ij} x_{l-1}^j(x) 
        = \frac{1}{\sqrt{n_{l-1}}} \nabla_{h_l} f(x) x_{l-1}^T(x).
    \end{equation}
    Therefore,
    \begin{equation}
        \hat\Theta(x,x')
        = \sum_{l=1}^L \tr\left(\nabla^T_{W_l} f(x) \nabla_{W_l} f(x)\right)
        = \sum_{l=1}^L \frac{1}{n_{l-1}} \left(\nabla^T_{h_l} f(x') \nabla_{h_l} f(x)\right) \times \left(x_{l-1}^T(x) x_{l-1}(x')\right).
    \end{equation}

    If $x_{l-1}$ had iid components with zero mean, $\frac{1}{n_{l-1}} x_{l-1}^T(x) x_{l-1}(x')$ would be an empirical covariance estimated with $n_{l-1}$ samples.
    In fact, when all weights are iid standard Gaussians, components of $h_{l-1}$ become iid Gaussian with zero mean as $n_{1:l-2} \to \infty$ sequentially.
    Hence their images under elementwise maps $\phi$ are also iid.
    
    Proof by induction.
    $h_1(x) = \frac{1}{\sqrt{n_0}} W_1 x$ has iid Gaussian components with zero mean and variance $q_1(x) = x^T x$.
    Suppose components of $h_{l-1}(x)$ become iid Gaussian with zero mean and $q_{l-1}(x)$ variance as $n_{1:l-2} \to \infty$ sequentially.
    Then $h_l(x) = \frac{1}{\sqrt{n_{l-1}}} W_l \phi(h_{l-1}(x))$ converges (in distribution) to a vector of Gaussians with zero mean and variance $q_l(x) = \EE_{z \sim \NN(0,q_{l-1}(x))} \phi^2(z)$ as $n_{1:l-1} \to \infty$ sequentially by the Central Limit Theorem (CLT).

    One can easily generalize the above proof to any finite set of inputs.
    In particular, $[h_l^i(x),h_l^i(x')]^T$ converges to a Gaussian with zero mean and covariance $\Sigma_l(x,x') = \begin{pmatrix} q_l(x) & q_l(x,x')\\q_l(x,x') & q_l(x') \end{pmatrix}$, where $q_l(x,x') = \EE_{[z,z']^T \sim \NN(0,\Sigma_{l-1}(x,x'))} \phi(z) \phi(z')$.
    Hence as $n_{1:l-2} \to \infty$ sequentially, $\frac{1}{n_{l-1}} x_{l-1}^T(x) x_{l-1}(x')$ converges to $q_l(x,x')$.

    Let $g_l(x) = \sqrt{n_l} \nabla_{h_l} f(x)$.
    Since
    \begin{equation}
        \nabla_{h_l^j} f(x) 
        = \sum_{i=1}^{n_{l+1}} \nabla_{h_{l+1}^i} f(x) \nabla_{h_l^j} h_{l+1}^i(x) 
        = \frac{1}{\sqrt{n_l}} \sum_{i=1}^{n_{l+1}} \nabla_{h_{l+1}^i} f(x) W_{l+1}^{ij} \phi'(h_l^j(x)),
    \end{equation}
    % $\nabla_{h_l^j} f(x) = \sum_{i=1}^{n_{l+1}} \nabla_{h_{l+1}^i} f(x) \nabla_{h_l^j} h_{l+1}^i(x) = \frac{1}{\sqrt{n_l}} \sum_{i=1}^{n_{l+1}} \nabla_{h_{l+1}^i} f(x) W_{l+1}^{ij} \phi'(h_l^j(x))$
    we have $g_l(x) = \frac{1}{\sqrt{n_{l+1}}} D_l(x) W_{l+1}^T g_{l+1}(x)$, where $D_l(x) = \diag(\phi'(h_l(x)))$.

    There are two obstacles that prevent us from following the same lines for $g_l$ as for $h_l$.
    First, $g_{l+1}$ depends on $D_{l+1}$ that depends on $h_{l+1}$ that depends on $W_{l+1}$.
    Since $W_{l+1}$ and $g_{l+1}$ are dependent, we cannot guarantee that components of $g_l$ become iid.
    Second, we know the distribution of $h_l$ as all the layers from the input side become infinitely wide sequentially, while induction for $g_l$ should be performed starting from the head.
    Nevertheless, it can be proven rigorously that ignoring these two obstacles still lead to a correct result \cite{yang2020tensor_ii}: $g_l(x)$ converges to a vector of iid Gaussians with zero mean and variance $\dot q_l(x) = \dot q_{l+1}(x) \EE_{z \sim \NN(0,q_l(x))} (\phi')^2(z)$ as $n_{1:L-1} \to \infty$.
    A similar result holds for a pair of inputs: $[g_l^i(x),g_l^i(x')]^T$ converges to a Gaussian with zero mean and covariance $\dot\Sigma_l(x,x') = \begin{pmatrix} \dot q_l(x) & \dot q_l(x,x')\\\dot q_l(x,x') & \dot q_l(x') \end{pmatrix}$, where $\dot q_l(x,x') = \dot q_{l+1}(x,x') \EE_{[z,z']^T \sim \NN(0,\Sigma_l(x,x'))} \phi'(z) \phi'(z')$.
    Hence $\nabla^T_{h_l} f(x') \nabla_{h_l} f(x) = \frac{1}{n_l} g_l^T(x') g_l(x)$ converges to $\dot q_l(x,x')$.

    Putting all together, $\hat\Theta(x,x')$ converges to $\Theta(x,x') = \sum_{l=1}^L \dot q_l(x,x') q_l(x,x')$, where
    \begin{equation}
        q_1(x,x') = x^T x',
        \quad
        q_l(x,x') = \EE_{[z,z']^T \sim \NN(0,\Sigma_{l-1}(x,x'))} \phi(z) \phi(z'),
    \end{equation}
    \begin{equation}
        \dot q_L(x,x') = 1, 
        \quad
        \dot q_l(x,x') = \dot q_{l+1}(x,x') \EE_{[z,z']^T \sim \NN(0,\Sigma_l(x,x'))} \phi'(z) \phi'(z'),
    \end{equation}
    % $q_1(x,x') = x^T x'$, $\dot q_L(x,x') = 1$, $q_l(x,x') = \EE_{[z,z']^T \sim \NN(0,\Sigma_{l-1}(x,x'))} \phi(z) \phi(z')$, $\dot q_l(x,x') = \dot q_{l+1}(x,x') \EE_{[z,z']^T \sim \NN(0,\Sigma_l(x,x'))} \phi'(z) \phi'(z')$, 
    and $\Sigma_l(x,x') = \begin{pmatrix} q_l(x,x) & q_l(x,x')\\q_l(x,x') & q_l(x',x') \end{pmatrix}$.
    Note that the Master theorem of \cite{yang2020tensor_ii} gives similar recurrent formulas for NTK of any architecture expressible by a tensor program and makes them mathematically rigorous.

    In fact, computing the NTK can be performed in a convenient sequential layer-wise manner, as implemented in Neural Tangents\footnote{\url{https://github.com/google/neural-tangents}} \cite{novak2019neural}.
    Define the NTK for the first $l$ layers as $\Theta_{:l}(x,x') = \sum_{l'=1}^l \tr(\nabla_{W_{l'}}^T h_l^i(x) \nabla_{W_{l'}} h_l^i(x'))$; in this case $\Theta_{:L}(x,x') = \Theta(x,x')$.
    Suppose $\Theta_{:l-1}(x,x')$ and $q_{l-1}(x,x')$ are already computed.
    Adding a nonlinearity and a linear layer with weights $W_l$ gives $q_l$ as listed above:
    \begin{equation}
        q_l(x,x') 
        = \EE_{[z,z']^T \sim \NN(0,\Sigma_{l-1}(x,x'))} \phi(z) \phi(z'),
        \quad 
        \text{where $\Sigma_{l-1}(x,x') 
        = \begin{pmatrix} q_{l-1}(x,x) & q_{l-1}(x,x')\\q_{l-1}(x,x') & q_{l-1}(x',x') \end{pmatrix}$.}
        \label{eq:q_iteration}
    \end{equation}
    % $q_l(x,x') = \EE_{[z,z']^T \sim \NN(0,\Sigma_{l-1}(x,x'))} \phi(z) \phi(z')$, $\Sigma_l(x,x') = \begin{pmatrix} q_l(x,x) & q_l(x,x')\\q_l(x,x') & q_l(x',x') \end{pmatrix}$.
    However, according to a formula above, $\dot q_l$ is computed using $\dot q_{l+1}$, which requires a sequential layer-wise "forward pass" to compute all $q_l$ and a "backward pass" to compute $\dot q_l$.
    In fact, one forward pass is enough:
    \begin{multline}
        \Theta_{:l}(x,x') 
        = \sum_{l'=1}^l \tr(\nabla_{W_{l'}}^T h_l^i(x) \nabla_{W_{l'}} h_l^i(x')) 
        = q_l(x,x') + \sum_{l'=1}^{l-1} \tr(\nabla_{W_{l'}}^T h_l^i(x) \nabla_{W_{l'}} h_l^i(x')) 
        =\\= q_l(x,x') + \Theta_{:l-1}(x,x') \EE_{[z,z']^T \sim \NN(0,\Sigma_{l-1}(x,x'))} \phi'(z) \phi'(z').
        \label{eq:Theta_iteration}
    \end{multline}
    % $\Theta_{:l}(x,x') = \sum_{l'=1}^l \tr(\nabla_{W_{l'}}^T h_l^i(x) \nabla_{W_{l'}} h_l^i(x')) = q_l(x,x') + \sum_{l'=1}^{l-1} \tr(\nabla_{W_{l'}}^T h_l^i(x) \nabla_{W_{l'}} h_l^i(x')) = q_l(x,x') + \Theta_{:l-1}(x,x') \EE_{[z,z']^T \sim \NN(0,\Sigma_{l-1}(x,x'))} \phi'(z) \phi'(z')$.
    In Neural Tangents, each operation in a neural network is mapped to a corresponding kernel transform.

    \subsection{Convolutional nets}

    The same idea can be applied for convolutional nets as well.
    Consider 1d-convolutions for simplicity.
    In this case, we are dealing with 1d "images" with $d$ pixels: $x \in \RR^{n_0 \times d}$.
    Consider a network with $L$ convolutions under NTK parameterization and an average pooling at the end:
    \begin{equation}
        f^i = \frac{1}{d} \sum_{s=1}^d x_L^{i,s},
        \quad
        h_l^{i,s} = \frac{1}{\sqrt{n_{l-1}}} \sum_{j=1}^{n_{l-1}} \sum_{r \in \ker} W_l^{ijr} x_{l-1}^{j,s+r},
        \quad
        x_{l-1}^{i,s} = \phi(h_{l-1}^{i,s}),
        \quad
        x_0^{i,s} = x^{i,s},
    \end{equation}
    where we omitted the argument $x$ for brevity, $W_l \in \RR^{n_l \times n_{l-1} \times |\ker|}$ with $W_l^{ijr} \sim \NN(0,1)$ iid $\forall l \in [L]$, and $\ker$ denotes the convolution filter; e.g. $\ker = [-1,0,1]$ for a convolution of size $3$.
    For simplicity, we assume $n_L=1$, i.e. the output is scalar.

    As before, the empirical NTK is given as
    \begin{equation}
        \hat\Theta(x,x')
        = \nabla^T_\theta f(x;\theta) \nabla_\theta f(x';\theta)
        = \sum_{l=1}^L \sum_{i=1}^{n_l} \sum_{j=1}^{n_{l-1}} \sum_{r \in \ker} \partial_{W_l^{ijr}} f(x) \partial_{W_l^{ijr}} f(x').
    \end{equation}
    By chain rule,
    \begin{equation}
        \partial_{W_l^{ijr}} f
        = \sum_{s=1}^d \partial_{h_l^{i,s}} f \partial_{W_l^{ijr}} h_l^{i,s}
        = \frac{1}{\sqrt{n_{l-1}}} \sum_{s=1}^{d} \partial_{h_l^{i,s}} f x_{l-1}^{j,s+r}.
    \end{equation}
    Therefore,
    \begin{equation}
        \hat\Theta(x,x')
        = \sum_{l=1}^L \frac{1}{n_{l-1}} \sum_{i=1}^{n_l} \sum_{j=1}^{n_{l-1}} \sum_{r \in \ker} \sum_{s,s'=1}^{d} \partial_{h_l^{i,s}} f(x) \partial_{h_l^{i,s'}} f(x') x_{l-1}^{j,s+r}(x) x_{l-1}^{j,s'+r}(x').
    \end{equation}

    As for the fully-connected case, we are going to prove that $h^{i,s}$ become Gaussian with zero mean and variance given by a certain recurrent formula as $n_{1:l-1} \to \infty$ sequentially.
    However for the convolutional case, not all $h^{i,s}$ become independent: they become independent for different $i$'s but not for different $s$.

    Let us induct on $l$.
    $h_1^{i,s} = \frac{1}{\sqrt{n_0}} \sum_{j=1}^{n_0} \sum_{r \in \ker} W_1^{ijr} x^{j,s+r}$ are independent for any two different $i$'s.
    For a fixed $i$, $h_1^{i,\cdot}$ is a Gaussian vector with zero mean and covariance $q_1^{s,s'} = \frac{1}{n_0} \sum_{j=1}^{n_0} \sum_{r \in \ker} x^{j,s+r} x^{j,s'+r}$.
    Suppose $h_{l-1}^{i,s}$ becomes Gaussian with zero mean, independent for any two different $i$'s, and $q_{l-1}^{s,s'}$ is its covariance as $n_{1:l-2} \to \infty$ sequentially.
    Then $h_l^{i,s} = \frac{1}{\sqrt{n_{l-1}}} \sum_{j=1}^{n_{l-1}} \sum_{r \in \ker} W_l^{ijr} x_{l-1}^{j,s+r}$ converges (in distribution) to a random variable with similar properties but with covariance $q_l^{s,s'} = \EE_{z \sim \NN(0,q_{l-1})} \sum_{r \in \ker} \phi(z^{s+r}) \phi(z^{s'+r})$ as $n_{1:l-1} \to \infty$ sequentially by the Central Limit Theorem (CLT).

    One can easily generalize the above proof to any finite set of inputs.
    In particular, $[h_l^{i,\cdot}(x),h_l^{i,\cdot}(x')]^T \in \RR^{2d}$ converges to a Gaussian with zero mean and covariance $\Sigma_l(x,x') = \begin{pmatrix} q_l(x) & q_l(x,x')\\q_l(x,x') & q_l(x') \end{pmatrix} \in \RR^{2d \times 2d}$, where $q_l^{s,s'}(x,x') = \EE_{[z,z']^T \sim \NN(0,\Sigma_{l-1}(x,x'))} \sum_{r \in \ker} \phi(z^{s+r}) \phi(z^{\prime,s'+r})$.
    Hence as $n_{1:l-2} \to \infty$ sequentially, $\frac{1}{n_{l-1}} \sum_{j=1}^{n_{l-1}} \sum_{r \in \ker} x_{l-1}^{j,s+r}(x) x_{l-1}^{j,s'+r}(x')$ converges to $q_l^{s,s'}(x,x')$.

    Let $g_l^{j,p} = \sqrt{n_l} \nabla_{h_l^{j,p}} f$.
    Since
    \begin{multline}
        \partial_{h_l^{j,p}} f 
        = \sum_{i=1}^{n_{l+1}} \sum_{s=1}^d \partial_{h_{l+1}^{i,s}} f \partial_{h_l^{j,p}} h_{l+1}^{i,s} 
        =\\= \frac{1}{\sqrt{n_l}} \sum_{i=1}^{n_{l+1}} \sum_{s=1}^d \partial_{h_{l+1}^{i,s}} f \sum_{r \in \ker} W_{l+1}^{ijr} 1_{s+r=p} \phi'(h_l^{j,p})
        = \frac{1}{\sqrt{n_l}} \sum_{i=1}^{n_{l+1}} \sum_{r \in \ker} \partial_{h_{l+1}^{i,p-r}} f W_{l+1}^{ijr} \phi'(h_l^{j,p}),
    \end{multline}
    $\partial_{h_L^{j,p}} f = \frac{1}{d} \phi'(h_L^{j,p})$, and $n_L=1$, we have
    \begin{equation}
        g_L^{j,p}
        = \frac{1}{d} \phi'(h_L^{j,p}),
        \quad
        g_l^{j,p}
        = \frac{1}{\sqrt{n_{l+1}}} \sum_{i=1}^{n_{l+1}} \sum_{r \in \ker} g_{l+1}^{i,p-r} W_{l+1}^{ijr} \phi'(h_l^{j,p}).
    \end{equation}

    With the same correctness remark as for convolutional nets, it is possible to show that $g_l^{j,p}$ become independent for different $j$'s and $g_l^{j,\cdot}$ become Gaussian with covariance $\dot q_l^{p,p'}$ as $n_{1:L-1} \to \infty$.
    Covariance is given by the following recurrence: $\dot q_L^{p,p'} = \frac{1}{d^2} \EE_{[z,z']^T \sim \NN(0,\Sigma_L)} \phi'(z^p) \phi'(z^{p'})$, $\dot q_l^{p,p'} = \EE_{z \sim \NN(0,q_l)} \phi'(z^{p}) \phi'(z^{p'}) \sum_{r \in \ker} \dot q_{l+1}^{p-r,p'-r}$.

    A similar result holds for a pair of inputs: $[g_l^{i,\cdot}(x),g_l^{i,\cdot}(x')]^T \in \RR^{2d}$ converges to a Gaussian with zero mean and covariance $\dot\Sigma_l(x,x') = \begin{pmatrix} \dot q_l(x) & \dot q_l(x,x')\\\dot q_l(x,x') & \dot q_l(x') \end{pmatrix} \in \RR^{2d \times 2d}$, where $\dot q_l^{s,s'}(x,x') = \EE_{[z,z']^T \sim \NN(0,\Sigma_l(x,x'))} \phi'(z^{s}) \phi'(z^{\prime,s'}) \sum_{r \in \ker} \dot q_{l+1}^{s-r,s'-r}(x,x')$.
    Hence
    \begin{equation}
        \sum_{i=1}^{n_l} \partial_{h_l^{i,s}} f(x) \partial_{h_l^{i,s'}} f(x') 
        = \frac{1}{n_l} \sum_{i=1}^{n_l} g_l^{i,s}(x) g_l^{i,s'}(x')
        \to \dot q_l^{s,s'}(x,x').
    \end{equation}
    % Hence $\sum_{i=1}^{n_l} \partial_{h_l^{i,s}} f(x) \partial_{h_l^{i,s'}} f(x') = \frac{1}{n_l} \sum_{i=1}^{n_l} g_l^{i,s}(x) g_l^{i,s'}(x')$ converges to $\dot q_l^{s,s'}(x,x')$.

    Putting all together, $\hat\Theta(x,x')$ converges to $\Theta(x,x') = \sum_{l=1}^L \sum_{s,s'=1}^d \dot q_l^{s,s'}(x,x') q_l^{s,s'}(x,x')$, where
    \begin{equation}
        q_1^{s,s'}(x,x') = \frac{1}{n_0} \sum_{j=1}^{n_0} \sum_{r \in \ker} x^{j,s+r} x^{\prime,j,s'+r},
    \end{equation}
    \begin{equation}
        q_l^{s,s'}(x,x') = \EE_{[z,z']^T \sim \NN(0,\Sigma_{l-1}(x,x'))} \sum_{r \in \ker} \phi(z^{s+r}) \phi(z^{\prime,s'+r}),
    \end{equation}
    \begin{equation}
        \dot q_L^{s,s'}(x,x') = \frac{1}{d^2} \EE_{[z,z']^T \sim \NN(0,\Sigma_L(x,x'))} \phi'(z^s) \phi'(z^{\prime,s'}), 
    \end{equation}
    \begin{equation}
        \dot q_l^{s,s'}(x,x') = \EE_{[z,z']^T \sim \NN(0,\Sigma_l(x,x'))} \phi'(z^{s}) \phi'(z^{\prime,s'}) \sum_{r \in \ker} \dot q_{l+1}^{s-r,s'-r}(x,x'),
    \end{equation}
    and $\Sigma_l(x,x') = \begin{pmatrix} q_l(x,x) & q_l(x,x')\\q_l(x,x') & q_l(x',x') \end{pmatrix}$.

    Same as for fully-connected nets, computing the NTK can be performed in a convenient sequential layer-wise manner.
    Define the empirical NTK for the first $l$ layers as
    \begin{equation}
        \hat\Theta_{:l}^{s,s'}(x,x') 
        = \sum_{l'=1}^l \sum_{i=1}^{n_{l'}} \sum_{j=1}^{n_{l'-1}} \sum_{r \in \ker} \partial_{W_{l'}^{ijr}} h_l^{1,s}(x) \partial_{W_{l'}^{ijr}} h_l^{1,s'}(x');
    \end{equation}
    in this case, by chain rule,
    \begin{multline}
        \hat\Theta(x,x')
        = \sum_{l=1}^L \sum_{i=1}^{n_l} \sum_{j=1}^{n_{l-1}} \sum_{r \in \ker} \partial_{W_l^{ijr}} f(x) \partial_{W_l^{ijr}} f(x')
        =\\= \sum_{l=1}^L \sum_{i=1}^{n_l} \sum_{j=1}^{n_{l-1}} \sum_{s,s'=1}^d \sum_{r \in \ker} \partial_{W_l^{ijr}} h_L^{1,s}(x) \partial_{W_l^{ijr}} h_L^{1,s'}(x') \partial_{h_L^{1,s}} f(x) \partial_{h_L^{1,s'}} f(x')
        =\\= \frac{1}{d^2} \sum_{s,s'=1}^d \phi'(h_L^{1,s}(x)) \phi'(h_L^{1,s'}(x')) \hat\Theta_{:L}^{s,s'}(x,x'),
    \end{multline}
    and therefore,
    \begin{equation}
        \Theta(x,x')
        = \frac{1}{d^2} \sum_{s,s'=1}^d \dot q_L^{s,s'}(x,x') \Theta_{:L}^{s,s'}(x,x').
    \end{equation}

    Suppose $\hat\Theta_{:l-1}(x,x')$ and $q_{l-1}(x,x')$ are already computed.
    Adding a nonlinearity and a convolutional layer with weights $W_l$ gives $q_l$ as listed above:
    \begin{equation}
        q_l^{s,s'}(x,x') 
        = \EE_{[z,z'] \sim \NN(0,\Sigma_{l-1}(x,x'))} \sum_{r \in \ker} \phi(z^{s+r}) \phi(z^{\prime,s'+r}),
        % \quad 
        % \text{where $\Sigma_{l-1}(x,x') 
        % = \begin{pmatrix} q_{l-1}(x,x) & q_{l-1}(x,x')\\q_{l-1}(x,x') & q_{l-1}(x',x') \end{pmatrix}$.}
        \label{eq:q_iteration_conv}
    \end{equation}
    where $\Sigma_{l-1}(x,x') = \begin{pmatrix} q_{l-1}(x,x) & q_{l-1}(x,x')\\q_{l-1}(x,x') & q_{l-1}(x',x') \end{pmatrix}$.
    We can compute $\hat\Theta_{:L}$ in a single forward pass using the following recurrence:
    \begin{multline}
        \hat\Theta_{:l}^{s,s'}(x,x') 
        = \sum_{l'=1}^l \sum_{i=1}^{n_{l'}} \sum_{j=1}^{n_{l'-1}} \sum_{\tilde r \in \ker} \partial_{W_{l'}^{ij\tilde r}} h_l^{1,s}(x) \partial_{W_{l'}^{ij\tilde r}} h_l^{1,s'}(x')
        =\\= \frac{1}{n_{l-1}} \sum_{j=1}^{n_{l-1}} \sum_{\tilde r \in \ker} x^{j,s+\tilde r}(x) x^{j,s'+\tilde r}(x')
        +\\+ \sum_{l'=1}^{l-1} \sum_{i=1}^{n_{l'}} \sum_{j=1}^{n_{l'-1}} \sum_{\tilde r \in \ker} \sum_{k,k'=1}^{n_{l-1}} \sum_{p,p'=1}^d \partial_{W_{l'}^{ij\tilde r}} h_{l-1}^{k,p}(x) \partial_{W_{l'}^{ij\tilde r}} h_{l-1}^{k',p'}(x') \partial_{h_{l-1}^{k,p}} h_l^{1,s}(x) \partial_{h_{l-1}^{k',p'}} h_l^{1,s'}(x')
        =\\= \frac{1}{n_{l-1}} \sum_{j=1}^{n_{l-1}} \sum_{\tilde r \in \ker} x^{j,s+\tilde r}(x) x^{j,s'+\tilde r}(x')
        +\\+ \frac{1}{n_{l-1}} \sum_{l'=1}^{l-1} \sum_{i=1}^{n_{l'}} \sum_{j=1}^{n_{l'-1}} \sum_{\tilde r,r,r' \in \ker} \sum_{k,k'=1}^{n_{l-1}} \partial_{W_{l'}^{ij\tilde r}} h_{l-1}^{k,s+r}(x) \partial_{W_{l'}^{ij\tilde r}} h_{l-1}^{k',s'+r'}(x') \times\\\times W_l^{1kr} \phi'(h_{l-1}^{k,s+r}(x)) W_l^{1k'r'} \phi'(h_{l-1}^{k',s'+r'}(x')).
        \label{eq:Theta_iteration_conv}
    \end{multline}
    A limit then gives
    \begin{equation}
        \Theta_{:l}^{s,s'}(x,x') 
        = q_l^{s,s'}(x,x') + \sum_{r,r' \in \ker} \Theta_{:l-1}^{s+r,s'+r'}(x,x') \EE_{[z,z']^T \sim \NN(0,\Sigma_{l-1}(x,x'))} \phi'(z^{s+r}) \phi'(z^{\prime,s'+r'}),
    \end{equation}
    which resembles the corresponding result for fully-connected nets when $\ker = [0]$.
    % In Neural Tangents, each operation in a neural network is mapped to a corresponding kernel transform.

    \subsection{Computing the expectations}

    The only obstacle that prevents explicit computation here is expectations over $[z,z']^T \sim \NN(0,\Sigma_l(x,x'))$.
    Fortunately, these expectations can be computed analytically for certain $\phi$: in particular, for ReLU and the error function.

    We cover only the case of ReLU here as it is more widely used in practice.
    Let us omit the $l$-subscript and the arguments $(x,x')$ for brevity: $\Sigma = \begin{pmatrix} q_{11} & q_{12}\\q_{12} & q_{22} \end{pmatrix}$, and we are interested in $\EE_{[u,v]^T \sim \NN(0,\Sigma)} [u]_+ [v]_+$ and $\EE_{[u,v]^T \sim \NN(0,\Sigma)} 1_{u>0} 1_{v>0}$.

    Following \cite{arora2019exact}, we start with assuming $q_{11} = q_{22} = 1$ and $q_{12} = \lambda$; $\Sigma \geq 0$ implies $|\lambda| \leq 1$.
    Then
    \begin{multline}
        \EE_{[u,v]^T \sim \NN(0,\Sigma)} [u]_+ [v]_+
        = \EE_{[u,\tilde v]^T \sim \NN(0,I)} [u]_+ \left[\lambda u + \sqrt{1-\lambda^2} \tilde v\right]_+
        =\\= \EE_{u \sim \NN(0,1)} \left([u]_+ \int_{-\frac{\lambda}{\sqrt{1-\lambda^2}} u}^\infty \left(\lambda u + \sqrt{1-\lambda^2} \tilde v\right) \frac{1}{\sqrt{2\pi}} e^{-\tilde v^2/2} \, d\tilde v \right)
        =\\= \EE_{u \sim \NN(0,1)} \left(
            [u]_+ \left(
                \lambda u \frac{1}{2} \left(1 - \erf\left(-\frac{\lambda}{\sqrt{2-2\lambda^2}} u\right)\right) + \sqrt{\frac{1-\lambda^2}{2\pi}} e^{-\frac{\lambda^2}{2-2\lambda^2} u^2}
            \right)
        \right)
        =\\= \int_0^\infty u \left(\lambda u \frac{1}{2} \left(1 - \erf\left(-\frac{\lambda}{\sqrt{2-2\lambda^2}} u\right)\right) + \sqrt{\frac{1-\lambda^2}{2\pi}} e^{-\frac{\lambda^2}{2-2\lambda^2} u^2}\right) \frac{1}{\sqrt{2\pi}} e^{-u^2/2} \, du
        =\\= \frac{\lambda}{4} + \int_0^\infty u \left(\lambda u \frac{1}{2} \erf\left(\frac{\lambda}{\sqrt{2-2\lambda^2}} u\right) + \sqrt{\frac{1-\lambda^2}{2\pi}} e^{-\frac{\lambda^2}{2-2\lambda^2} u^2}\right) \frac{1}{\sqrt{2\pi}} e^{-u^2/2} \, du
        =\\= \frac{\lambda}{4} + \frac{\lambda}{2} A + \sqrt{\frac{1-\lambda^2}{2\pi}} B.
    \end{multline}
    \begin{multline}
        A
        = \int_0^\infty u^2 \erf\left(\frac{\lambda}{\sqrt{2-2\lambda^2}} u\right) \frac{1}{\sqrt{2\pi}} e^{-u^2/2} \, du
        = -\int_0^\infty u \erf\left(\frac{\lambda}{\sqrt{2-2\lambda^2}} u\right) \frac{1}{\sqrt{2\pi}} \, d\left(e^{-u^2/2}\right)
        =\\= \int_0^\infty \left(\erf\left(\frac{\lambda}{\sqrt{2-2\lambda^2}} u\right) + u \frac{\lambda}{\sqrt{2-2\lambda^2}} \frac{2}{\sqrt{\pi}} e^{-\frac{\lambda^2}{2-2\lambda^2} u^2} \right) \frac{1}{\sqrt{2\pi}} e^{-u^2/2} \, du
        = C + \frac{\lambda}{\sqrt{2-2\lambda^2}} \frac{2}{\sqrt{\pi}} B.
    \end{multline}
    \begin{equation}
        C
        = \int_0^\infty \erf\left(\frac{\lambda}{\sqrt{2-2\lambda^2}} u\right) \frac{1}{\sqrt{2\pi}} e^{-u^2/2} \, du
        = \frac{1}{\pi} \arctan\left(\frac{\lambda}{\sqrt{1-\lambda^2}}\right)
        = \frac{1}{\pi} \arcsin\lambda.
    \end{equation}
    \begin{equation}
        B
        = \int_0^\infty u e^{-\frac{\lambda^2}{2-2\lambda^2} u^2} \frac{1}{\sqrt{2\pi}} e^{-u^2/2} \, du
        =  \frac{1}{\sqrt{2\pi}}\int_0^\infty u e^{-\frac{1}{2-2\lambda^2} u^2} \, du
        = \frac{1-\lambda^2}{\sqrt{2\pi}}.
    \end{equation}
    Putting all together,
    \begin{multline}
        \EE_{[u,v]^T \sim \NN(0,\Sigma)} [u]_+ [v]_+
        = \frac{\lambda}{4} + \frac{\lambda}{2} A + \sqrt{\frac{1-\lambda^2}{2\pi}} B
        = \frac{\lambda}{4} + \frac{\lambda}{2} C + \frac{\lambda^2}{\sqrt{1-\lambda^2}} \frac{1}{\sqrt{2\pi}} B + \sqrt{\frac{1-\lambda^2}{2\pi}} B
        =\\= \frac{\lambda}{4} + \frac{\lambda}{2} C + \frac{1}{\sqrt{1-\lambda^2}} \frac{1}{\sqrt{2\pi}} B
        = \frac{\lambda}{4} + \frac{\lambda}{2\pi} \arcsin\lambda + \frac{\sqrt{1-\lambda^2}}{2\pi}
        =\\= \frac{\lambda\left(\frac{\pi}{2} + \arcsin\lambda\right) + \sqrt{1-\lambda^2}}{2\pi}
        = \frac{\lambda\left(\pi - \arccos\lambda\right) + \sqrt{1-\lambda^2}}{2\pi}.
    \end{multline}
    And for the second quantity,
    \begin{multline}
        \EE_{[u,v]^T \sim \NN(0,\Sigma)} 1_{u>0} 1_{v>0}
        = \EE_{[u,\tilde v]^T \sim \NN(0,I)} 1_{u>0} 1_{\lambda u + \sqrt{1-\lambda^2} \tilde v > 0}
        =\\= \EE_{u \sim \NN(0,1)} \left(1_{u>0} \int_{-\frac{\lambda}{\sqrt{1-\lambda^2}} u}^\infty \frac{1}{\sqrt{2\pi}} e^{-\tilde v^2/2} \, d\tilde v \right)
        =\\= \EE_{u \sim \NN(0,1)} \left(
            1_{u>0} \frac{1}{2} \left(1 - \erf\left(-\frac{\lambda}{\sqrt{2-2\lambda^2}} u\right)\right)
        \right)
        =\\= \int_0^\infty \frac{1}{2} \left(1 - \erf\left(-\frac{\lambda}{\sqrt{2-2\lambda^2}} u\right)\right) \frac{1}{\sqrt{2\pi}} e^{-u^2/2} \, du
        =\\= \frac{1}{4} + \int_0^\infty \frac{1}{2} \erf\left(\frac{\lambda}{\sqrt{2-2\lambda^2}} u\right) \frac{1}{\sqrt{2\pi}} e^{-u^2/2} \, du
        =\\= \frac{1}{4} + \frac{1}{2} C
        = \frac{\frac{\pi}{2} + \arcsin\lambda}{2\pi}
        = \frac{\pi - \arccos\lambda}{2\pi}.
    \end{multline}

    A general positive semi-definite matrix $\Sigma$ can be expressed as $\Sigma = D \Lambda D$, where $\Lambda = \begin{pmatrix} 1 & \lambda\\\lambda & 1 \end{pmatrix}$, $D = \begin{pmatrix} \sqrt{q_{11}} & 0\\0 & \sqrt{q_{22}} \end{pmatrix}$, and $\lambda = \frac{q_{12}}{\sqrt{q_{11} q_{22}}}$.
    Then, using homogeneity of ReLU,
    \begin{multline}
        \EE_{[u,v]^T \sim \NN(0,\Sigma)} [u]_+ [v]_+
        = \EE_{[u,v]^T \sim \NN(0,D \Lambda D)} [u]_+ [v]_+
        = \EE_{[u,v]^T \sim \NN(0,\Lambda)} [\sqrt{q_{11}} u]_+ [\sqrt{q_{22}} v]_+
        =\\= \sqrt{q_{11} q_{22}} \EE_{[u,v]^T \sim \NN(0,\Lambda)} [u]_+ [v]_+
        = \sqrt{q_{11} q_{22}} \frac{\lambda\left(\pi - \arccos\left(\frac{q_{12}}{\sqrt{q_{11} q_{22}}}\right)\right) + \sqrt{1-\frac{q_{12}^2}{q_{11} q_{22}}}}{2\pi}
        =\\= \frac{\lambda \sqrt{q_{11} q_{22}} \left(\pi - \arccos\left(\frac{q_{12}}{\sqrt{q_{11} q_{22}}}\right)\right) + \sqrt{q_{11} q_{22} - q_{12}^2}}{2\pi}.
    \end{multline}
    \begin{multline}
        \EE_{[u,v]^T \sim \NN(0,\Sigma)} 1_{u>0} 1_{v>0}
        = \EE_{[u,v]^T \sim \NN(0,D \Lambda D)} 1_{u>0} 1_{v>0}
        =\\= \EE_{[u,v]^T \sim \NN(0,\Lambda)} 1_{u>0} 1_{v>0}
        = \frac{\pi - \arccos\left(\frac{q_{12}}{\sqrt{q_{11} q_{22}}}\right)}{2\pi}.
    \end{multline}

    Similar explicit computations are available for convolutional networks \cite{arora2019exact}, as well as for generic tensor programs, as long as the nonlinearities used belong to a certain list (which includes e.g. ReLU and the error function, see \cite{novak2019neural} for a concrete implementation and \cite{yang2020tensor_ii} for generic recurrent formulas in terms of expectations).
    
    However, a typical convolutional network also uses max poolings and other nonlinear maps for which explicit formulas for expectations are not available at the moment.
    In this case, one can rely on a finite-width Monte-Carlo estimate for $\Theta(x,x')$, i.e. $\hat\Theta^{(M)}(x,x') = \frac{1}{M} \sum_{k=1}^M \hat\Theta(x,x')$, where $M$ is a number of independent initializations and $\hat\Theta(x,x')$ is an empirical kernel for width $n$.
    According to convergence results, $\hat\Theta^{(M)}(x,x') \to \Theta(x,x')$ as $n \to \infty$ $\forall M \geq 1$.
    Also, $\hat\Theta^{(M)}(x,x') \to \EE \hat\Theta(x,x')$ as $M \to \infty$ $\forall n \to \infty$.
    Unfortunately, one cannot guarantee that $\EE \hat\Theta(x,x') = \Theta(x,x')$; therefore, $\hat\Theta^{(M)}(x,x')$ can be a biased estimate.
    However, according to experiments of \cite{novak2019neural}, discrepancy between $\hat\Theta^{(M)}$ and $\Theta$ decreases as $M$ grows for any finite $n$.
    This means that the main component of this discrepancy is not bias but variance decreased by adding more Monte-Carlo samples.

    We also have to note that \cite{arora2019exact} reports significant accuracy drops on a CNN of width $n=512$ when using a single-sample Monte-Carlo estimate for the NTK instead of the exact limit NTK.
    However, they haven't provided any results for $M > 1$, therefore, this accuracy drop could be caused by large variance of $\hat\Theta$.

    \subsection{NTK for attention layers}

    A neural tangent kernel is typically considered for architectures for which analytical computation is available, i.e. for fully-connected and convolutional ReLU nets, see \cref{sec:limit}.
    One of the necessary conditions for exact computations to be possible is the fact that the output of each individual pre-activation neuron becomes a Gaussian process in the limit of large width.
    This allows one to apply Master theorem (\cref{thm:master_theorem}), and express the NTK as an expectation over certain Gaussian variables.

    However, there exist layers which does not enjoy Gaussian behavior even in the limit of large width.
    Attention layer is one of the examples:
    \begin{equation}
        f(x)
        = \mathrm{Softmax}\left(G(x)\right) V(x),
        \qquad
        G(x)
        = \frac{1}{\sqrt{n}} Q^T(x) K(x),
    \end{equation}
    where we define queries $Q(x) = x W_Q$, keys $K(x) = x W_K$, and values $V(x) = x W_V$.
    Dimensions of the corresponding matrices are: $W_Q \in \RR^{n_0 \times n}$, $W_K \in \RR^{n_0 \times n}$, and $W_V \in \RR^{n_0 \times n_H}$, and $x \in \RR^{d \times n_0}$.

    If $W_Q$ and $W_K$ are independent with iid zero mean unit variance entries then $G_{\alpha\beta}(x) = n^{-1/2} \sum_{i=1}^n \sum_{j,k=1}^{n_0} x_{\alpha,j} x_{\beta,k} W_Q^{ji} W_K^{ki}$ converges by CLT to a Gaussian variable.
    The resulting limit matrix is therefore $d \times d$ matrix with (non-degenerate) Gaussian entries.
    Since $d$ stays fixed as $n \to \infty$, we cannot apply any limit theorem to reason about the distribution of $f_i(x)$ for some $i \in [n_H]$.

    \cite{hron2020infinite} consider a multi-head attention layer and show that it does enjoy Gaussian process behavior as width and number of heads go to infinity simultaneously:
    \begin{equation}
        f(x)
        = [f^1(x), \ldots, f^n(x)] W_O,
        \qquad
        f_i(x)
        = \mathrm{Softmax}\left(G_i(x)\right) V_i(x),
        \qquad
        G_i(x)
        = \frac{1}{\sqrt{n}} Q_i^T(x) K_i(x),
    \end{equation}
    where $W_O \in \RR^{n_H n \times n_H}$ and all $Q_i$, $K_i$, and $V_i$ are iid for different $i \in [n]$.
    To gain some intuition about the result of \cite{hron2020infinite}, consider $n_H=1$, i.e. outputs of all individual heads are scalars and the final output is also a scalar.
    In this case, $f(x)$ is a product of a vector with $n$ iid entries and a matrix with iid $\NN(0,n^{-1})$ entries.
    This product tends to a Gaussian as $n \to \infty$ by CLT.
    Considering a set of inputs gives a random Gaussian vector similar to the fully-connected case, see \cref{sec:limit_fc_nets}.

    \cite{hron2020infinite} gives exact formulas for covariances $q(x,x')$ and the kernel $\Theta(x,x')$; they are implemented as layers in NeuralTangents \cite{novak2019neural}.

    \section{Computational aspects}
    \label{sec:computations}

    \subsection{Inference optimizations}

    Suppose one is able to compute (or approximate) the limit kernel, $\Theta(x,x')$, on any pair of points $(x,x')$.
    The result of kernel regression at convergence ($t \to \infty$) in the limit of inifinite width is then given by (see Eq.~(\ref{eq:lin_solution_square_loss})):
    \begin{equation}
        f_\infty(x) 
        = f_0(x) - \Theta(x, \vec x) \Theta^{-1}(\vec x, \vec x) (f_0(\vec x) - \vec y).
        \label{eq:inf_wide_solution_square_loss}
    \end{equation}
    where $\Theta(\vec x, \vec x) \in \RR^{m \times m}$ and $\Theta(x, \vec x) \in \RR^{1 \times m}$.
    For multi-class problems, $f(x) \in \RR^k$, where $k$ is the number of classes, and the kernel evaluated at two points becomes a $k \times k$ matrix:
    \begin{equation}
        \hat\Theta_{jj'}(x,x')
        = \nabla_\theta^T f^j(x) \nabla_\theta f^{j'}(x').
    \end{equation}

    Define a Gram matrix as $\hat\Theta_{ik+j,i'k+j'}(\vec x, \vec x) = \hat\Theta_{jj'}(x_i,x_{i'})$ and its limit counterpart $\Theta(\vec x, \vec x) \in \RR^{mk \times mk}$ accordingly; similarly for $\Theta(x, \vec x) \in \RR^{k \times mk}$.
    If one defines $f_0^{ik+j}(\vec x) = f_0^j(x_i)$, the corresponding solution takes the same form as Eq.~(\ref{eq:inf_wide_solution_square_loss}).

    Evaluating this quantity naively requires storing and inverting the kernel Gram matrix $\Theta(\vec x, \vec x) \in \RR^{mk \times mk}$.
    Storing it requires $O(m^2 k^2)$ memory, while inverting it takes $O(m^3 k^3)$ time, making such a naive approach computationally infeasible for datasets with $m k \gtrsim 10^4$ (nevertheless, for small datasets, the naive approach for computing the NTK estimator (\ref{eq:inf_wide_solution_square_loss}) is feasible and may provide advantage over traditional SGD training, see \cite{arora2019harnessing}).

    Let us start with discussing two important optimizations implemented in Neural Tangents \cite{novak2019neural}.
    Note that as discussed in \cref{sec:limit}, for a fully-connected net (and, in fact, for any tensor program, see \cite{yang2019tensor_i}) preactivations of different neurons on a given layer become iid as width goes to infinity.
    This implies $\Theta_{jj'}(x,x') = \Theta_{11}(x,x') 1_{j=j'}$.
    Therefore the kernel Gram matrix has a block structure: $\Theta(\vec x, \vec x) = \Theta|_{k=1}(\vec x, \vec x) \otimes I_{k \times k}$.
    This reduces memory footprint to $O(m^2)$ and the time requirement to $O(m^3)$.

    The second optimization deals with convolutional networks.
    Note that computing $\Theta(x,x')$ requires computing all intermediate covariances $q_l(x,x')$.
    These covariances were scalars for fully-connected nets since different neurons of a given layer became iid as width went to infinity.
    However, for an image with $d$ pixels, different pixels of a given layer are dependent since their preactivations are computed using same weight matrices.
    That's why for convolutional nets, one has to construct intermediate covariance matrices of size $d \times d$; storing and computing them for each pair of points requires $O(m^2 d^2)$ memory and time, even surpassing the time required for Gram matrix inversion when $d^2 > m$ (this happens e.g. for CIFAR10 for which $d = 32 \times 32 = 1024$, $m = 50 000$, $k = 10$).
    However, as was noted e.g. in \cite{xiao2018dynamical}, if no pooling is used in the network, it suffices to compute and store $d$ independent $m \times m$ blocks of this covariance matrix, boiling down to $O(m^2 d)$ time requirement which is usually not greater than $O(m^3)$ time required for inversion.

    So far, the main computational bottleneck was the time required for inverting the kernel Gram matrix.
    This problem is not specific for NTK; it appears for any regularized kernel regression problem:
    \begin{equation}
        \hat f_\lambda
        = \argmin_{f \in \HH} \sum_{j=1}^m \ell(y_j, f(x_j)) + \lambda \| f \|_\HH^2.
        \label{eq:kernel_regression}
    \end{equation}
    Here $\HH$ is a Hilbert space of functions of the form $f(x) = \Phi^T(x) \theta$; the corresponding scalar product is $\langle \Phi^T(x) \theta, \Phi^T(x) \theta' \rangle = \theta^T \theta'$.
    Hence $\| f \|_\HH^2 = \langle f, f \rangle = \|\theta\|_2^2$ for $f(x) = \Phi^T(x) \theta$.

    Problem~(\ref{eq:kernel_regression}) has an associated kernel, which we denote with the same letter as NTK: $\Theta(x,x') = \Phi^T(x) \Phi(x')$.
    Due to the representer theorem \cite{kimeldorf1970correspondence}, any solution of Problem~(\ref{eq:kernel_regression}) has the form $f(x) = \sum_{j=1}^m \alpha_j \Theta(x,x_j)$.
    
    For now, consider quadratic loss: $\ell(y,z) = \frac{1}{2} \| y - z \|_2^2$.
    The problem above becomes:
    \begin{equation}
        \vec\alpha
        = \argmin_{\vec\alpha \in \RR^m} \frac{1}{2} \sum_{j=1}^m \left( \sum_{j'=1}^m \alpha_{j'} \Theta(x_j,x_{j'}) - y_j \right)^2 + \lambda \left\| \sum_{j=1}^m \alpha_j \Phi(x_j) \right\|_2^2.
        % \label{eq:kernel_regression}
    \end{equation}
    This problem is convex, therefore any critical point of the corresponding functional is a solution:
    \begin{equation}
        (\Theta(\vec x, \vec x) + \lambda I) \vec\alpha
        = \vec y.
    \end{equation}
    As long as $\Theta(\vec x, \vec x) + \lambda I$ is invertible, the solution is $\vec\alpha = (\Theta(\vec x, \vec x) + \lambda I)^{-1} \vec y$.
    Putting $\lambda = 0$, we recover expected Eq.(\ref{eq:inf_wide_solution_square_loss}) (since $\EE f_0(x) = 0$).

    While the represeneter theorem guarantees that it suffices to look for solutions only of the form $f(x) = \sum_{j=1}^m \alpha_j \Theta(x,x_j)$ instead of inspecting the whole $\HH$, we, following \cite{meanti2020kernel}, consider further contracting the search space by sampling $m'$ points $(\tilde x_1, \ldots, \tilde x_{m'})$ uniformly out of $m$ and looking for solutions of the form $f(x) = \sum_{j=1}^{m'} \tilde\alpha_j \Theta(x,\tilde x_j)$.
    This is known as Nystr\"om approximation.
    The minimization problem then becomes:
    \begin{equation}
        \vec{\tilde\alpha}
        = \argmin_{\vec{\tilde\alpha} \in \RR^{m'}} \frac{1}{2} \sum_{j=1}^m \left( \sum_{j'=1}^{m'} \tilde\alpha_{j'} \Theta(x_j,\tilde x_{j'}) - y_j \right)^2 + \lambda \left\| \sum_{j=1}^{m'} \tilde\alpha_j \Phi(\tilde x_j) \right\|_2^2.
        % \label{eq:kernel_regression}
    \end{equation}
    This problem is again convex and its critical points satisfy the following:
    \begin{equation}
        \left(\Theta\left(\vec{\tilde x}, \vec x\right) \Theta\left(\vec x, \vec{\tilde x}\right) + \lambda \Theta\left(\vec{\tilde x}, \vec{\tilde x}\right)\right) \vec{\tilde \alpha}
        = \Theta\left(\vec{\tilde x}, \vec x\right) \vec y.
        \label{eq:critical_points_nystrom}
    \end{equation}

    Computing the kernel-kernel product takes $O(m {m'}^2)$ time and solving the above system directly takes $O({m'}^3)$ time.
    The space requirement can be put to $O({m'}^2)$ as the "rectangular Gram matrix" can be computed in $m' \times m'$ blocks.

    Conjugate gradient methods are iterative methods designed for approximately solving linear systems of the form $A \vec z = \vec b$ without explicitly inverting the matrix $A$.
    The main operation used by these methods on each iteration is a matrix-vector product.
    In our case, the matrix-vector product requires $O(mm' + {m'}^2)$ time; note that it allows one to avoid computing the kernel-kernel product explicitly, by computing two matrix-vector product instead, costing $O(mm')$ time each.
    
    Putting all together, solving system~(\ref{eq:critical_points_nystrom}) with $s$ iterations of a conjugate gradient method requires $O(s(mm' + {m'}^2))$ time and $O({m'}^2)$ space.
    Based on certain theoretical results, \cite{meanti2020kernel} suggest taking $m' = O(\sqrt{m})$ and $s = O(\log m)$.
    The resulting $O(m \sqrt{m} \log m)$ time and $O(m)$ space allows for applying their method to datasets of size up to $m \sim 10^6$ (the size of ImageNet).
    \cite{meanti2020kernel} also discuss several optimizations aiming for improving GPU-efficiency of the method.
    While their method is publicly available as an open-source library\footnote{\url{https://github.com/FalkonML/falkon}}, we are not aware of any of its applications to NTK.

    \subsection{Computing the empirical kernel}

    All the previous discussion of the current section assumed that the kernel, $\Theta$, can be efficiently computed.
    This is the case for certain models for which analytic computations are available.
    Indeed, for $L$-layer fully-connected nets, the limit Gram matrix $\Theta(\vec x, \vec x)$ can be computed in $O(m^2 L)$ time while storing it requires $O(m^2)$ space, see Eqs. (\ref{eq:q_iteration}) and (\ref{eq:Theta_iteration}).
    For more complex models, e.g. for those including max-poolings, closed-form analytic expressions for the limit kernel are not currently available.
    However, the empirical kernel, $\hat\Theta$, can always be computed explicitly and is close to $\Theta$ for sufficiently large width (see convergence theorems in \cref{sec:convergence}).
    For this reason, we are looking for ways to compute $\hat\Theta$ efficiently.

    In order to simplify the illustration, we will discuss only time requirements in the sequel.
    Recall the empirical kernel is a product of two jacobians: $\hat\Theta_{jj'}(x,x') = \nabla^T_\theta f^j(x) \nabla_\theta f^{j'}(x')$.
    Therefore the time cost for computing the kernel consists of the time required to compute the jacobian and the time required for jacobian contraction.

    Denote $[FP]$ the cost of a single forward pass for our network; a single backward pass has approximately the same cost.
    Then computing a jacobian for a given point $x$ takes $O(k [FP])$ time.
    Contracting two jacobians for fixed $j$ and $j'$ takes $O(N)$ time, where $N$ is the total number of parameters: $\theta \in \RR^N$.
    Putting all together, computing the full $mk \times mk$ Gram matrix takes $O(m k [FP] + m^2 k^2 N)$ time.

    \cite{novakfast} propose a method for computing the NTK-vector product.
    It can be directly embedded into the method of \cite{meanti2020kernel} using conjugate gradients, or used for computing the kernel explicitly by applying it to columns of the $k \times k$ identify matrix.

    Their method boils down to casting a matrix-vector product where the matrix is the empirical NTK to a vector-jacobian product followed by a jacobian-vector product: $\sum_{j'=1}^k \hat\Theta_{jj'}(x,x') v_{j'} = \nabla^T_\theta f^j(x) \sum_{j'=1}^k \nabla_\theta f^{j'}(x') v_{j'}$.
    Both matrix-vector products can be computed in $O([FP])$ time.
    Therefore this method allows to compute the full $mk \times mk$ Gram matrix in $O(m^2 k [FP])$ time, which improves over the jacobian contraction method as long as $[FP] < C k N$ for a certain constant $C$.
    Memory requirements that we do not show here are, in fact, same for both methods, see \cite{novakfast}.

    \cite{novakfast} also propose another optimization exploiting certain stucture of the function $f$: e.g. weights of a fully-connected net are aligned sequentially, while weights of a convolutional layer are aranged in blocks.
    We do not discuss it in the present survey.
    Both optimizations are publicly available as JAX \cite{jax2018github} function transformations.\footnote{\url{https://github.com/iclr2022anon/fast_finite_width_ntk}}.
    % We hope that these transformations will be included into the Neural Tangents library.

    \section{Applications}

    \subsection{A kernel method}

    % \begin{itemize}
    %     \item \cite{arora2019harnessing}: Supervised learning on small datasets;
    %     \item \cite{hron2020infinite}: Attention;
    %     \item \cite{chen2021neural}: Neural architecture search;
    %     \item \cite{radhakrishnan2021simple}: Matrix completion, image impainting;
    %     \item \cite{yue2021neural}: Federated learning.
    % \end{itemize}

    \subsubsection{Supervised learning on small datasets}

    The NTK is a kernel, therefore it can be used in any kernel method itself, i.e. kernel ridge regression or kernel SVM.
    However, computing the kernel Gram matrix on a dataset of size $m$ requires $O(m^2)$ time, which is infeasible for large datasets.
    One can either rely on certain approximations, e.g. Nystr\"om approximation, see \cref{sec:computations}, or restrict oneself to small datasets.

    One possible advantage of kernel methods over neural nets is lower variance.
    Indeed, the only variance of a kernel method is induced by sampling the dataset, while a neural network has several more sources of variance; e.g. initialization randomness and batch sampling.
    It is likely that this difference in variances is especially important when the dataset is small.

    The other advantage of kernel methods is having smaller number of hyperparamaters compared to neural nets.
    This makes kernel methods useful as robust baseline methods that may outperform large neural nets in a situation when there is no budget for careful hyperparamater tuning.
    As an illustration, \cite{arora2019harnessing} demonstrated that kernel regression with 14-layer CNTK consistently outperforms ResNet-34 trained with standard hyperparameters on a random subset of CIFAR-10 with $\leq 640$ samples.

    \subsubsection{Neural architecture search using NTK conditional number}

    There are other setups where computing the Gram matrix on a small dataset is sufficient.
    For example, \cite{chen2021neural} proposes a condition number of the NTK Gram matrix as a proxy-measure of a given architecture performance; this proxy-measure is then used to guide neural architecture search (NAS).
    In this case, we do not need the Gram matrix itself but only the condition number, which motivates computing the matrix on a small subset of examples.
    While the condition number on a random subset Gram matrix provides only a random estimate, possibly noisy and biased, of a true condition number, the way we use it does not require exact estimates.
    Indeed, a performance measure in NAS algorithms is mainly used to cut-off pathologic, low-performing models from a population, rather than finding the best one.
    Therefore any measure that correlates positively with performance suffices.

    The use of condition number as a proxy-measure of performance relies on two hypotheses: (1) performance correlates with trainability, and (2) trainability correlates with NTK condition number.
    The first hypothesis is mainly motivated by a natural implication "bad trainability implies low performance".
    To motivate the second hypothesis, let us consider kernel ridge regression trained with usual discrete-time gradient descent:
    \begin{equation}
        f_{t+1}(\vec x)
        = f_t(\vec x) + \eta \Theta(\vec x, \vec x) (\vec y - f_t(\vec x)),
    \end{equation}
    where now $t$ is a discrete time-step and $\eta$ is a learning rate.

    Consider eigenvalue decomposition of the kernel: $\Theta(\vec x, \vec x) = \sum_{k=1}^m \lambda_k \vec v_k \vec v_k^T$, where $\lambda_1 \geq \ldots \geq \lambda_m \geq 0$, and $(\vec v_k)_{k=1}^m$ forms an orthonormal basis.
    Let us decompose our model's predictions as $f_t(\vec x) = \sum_{k=1}^m u_{t,k} \vec v_k$.
    Then the dynamics above decomposes as
    \begin{equation}
        u_{t+1,k}
        = u_{t,k} + \eta \lambda_k (\vec y^T \vec v_k - u_{t,k}).
    \end{equation}
    This gives
    \begin{equation}
        u_{t+1,k} - \vec y^T \vec v_k
        = (1 - \eta \lambda_k) (u_{t,k} - \vec y^T \vec v_k),
    \end{equation}
    and the solution is therefore
    \begin{equation}
        u_{t,k}
        = \vec y^T \vec v_k + (1 - \eta \lambda_k)^t (u_{0,k} - \vec y^T \vec v_k).
    \end{equation}

    The dynamics above converges as $t \to \infty$ for any $u_{0,k}$ if and only if $\eta < 2 / \lambda_k$.
    Since this should hold for all $k \in [m]$ and the maximal $\lambda$ is $\lambda_1$, we need to have $\eta < 2 / \lambda_1$.
    Therefore the $m$-th principal component converges at rate $\eta \lambda_m < 2 \lambda_m / \lambda_1$.
    $\kappa = \lambda_m / \lambda_1$ is our condition number.
    We see that small condition number implies low trainability and thus, by the first hypothesis, low performance.

    Using a combination of two proxy-measures, the condition number and the number of linear regions (we do not discuss it here), \cite{chen2021neural} constructed a NAS method that provided state-of-the-art performance on NAS-Bench-201 \cite{dong2020bench}, while using much smaller time compared to most of the other methods.
    \cite{chen2021neural} tested their method on CIFAR10 and ImageNet as well.
    In both cases, their method demonstrated competetive performance while using orders of magnitude less time.

    \subsubsection{Matrix completion and image impainting}

    In some cases, posing the problem as kernel regression allows for certain optimizations.
    In particular, \cite{radhakrishnan2021simple} proposed approaching the problem of matrix completion by minimizing the following loss:
    \begin{equation}
        \LL(\theta)
        = \sum_{(i,j) \in S} (Y_{ij} - \tr(f(Z;\theta) M^{(ij)}))^2,
    \end{equation}
    where $S \subset [k] \times [d]$ is a set of coordinates of known entries of the target matrix $Y \in \RR^{k \times d}$, $M^{(ij)} \in \RR^{k \times d}$ has $1$ at position $(i,j)$ and $0$ elsewhere, $f(\cdot;\theta)$ is a neural network with parameters $\theta$, $n_0$ inputs and $k$ outputs, and $Z \in \RR^{n_0 \times d}$ is an a-priori given matrix.
    The model $f$ is applied to each column of $Z$ seperately, therefore $f(Z;\theta)$ is $k \times d$ matrix.

    The above setup can be treated as a usual $l_2$ regression problem on a dataset $(Y_{ij}, M^{(ij)})_{(i,j) \in S}$.
    The corresponding empirical NTK is defined as $\hat K(M^{(ij)}, M^{(i'j')}) = \nabla^T_\theta \tr(f(Z;\theta) M^{(ij)}) \nabla_\theta \tr(f(Z;\theta) M^{(i'j')})$.
    Naturally, it does not depend on target matrix entries $Y$, and since there is only a finite set of possible inputs $M^{(ij)}$ (namely, $k d$), the resulting $kd \times kd$ Gram matrix will be the same for all possible matrix completion problems of a given target matrix dimensions.
    In other words, one can precompute the Gram matrix once and use it to all possible matrix completion problems of given dimensions.
    In contrast, original neural network formulation would require training a new network for each dataset $(Y_{ij}, M^{(ij)})_{(i,j) \in S}$.

    When $f(\cdot;\theta)$ is given by a fully-connected network with $L$ layers, \cite{radhakrishnan2021simple} provide a closed-form formula for its limit NTK: $K(M^{(ij)}, M^{(i'j')}) = \kappa_L\left(z_{\cdot,j}^T z_{\cdot,j'}\right) 1_{i=i'}$, where $\kappa_L$ is given by a certain recurrent relation.
    As we see, according to this kernel, elements of different rows of $Y$ are orthogonal (does not effect each other), while similarity of elements of the same row is given by a scalar product of the corresponding columns of $Z$.
    Therefore columns of $Z$ encodes a-priori similarities between columns of $Y$.

    The matrix $Z$ is called a feature-prior matrix.
    The ideal feature-prior matrix would be the target matrix $Y$ itself.
    Since one does not have access to it, \cite{radhakrishnan2021simple} suggest using the output $\hat Y$ of a separate matrix completion method instead.
    The resulting joint method performs better than the backbone one on popular collaborative filtering and virtual drug screening datasets.

    Image impainting can be viewed as a special case of matrix completion.
    Apart from using the same Gram matrix for all problems of a given size, image impainting with convolutional networks allows for one more optimization.
    
    When $f$ is a convolutional network, we pose the problem a bit differently to above.
    Suppose $f$ has $n_0$ input channels, $1$ output channel, and it maps an image to an image of the same size.
    Suppose $Z \in \RR^{n_0 \times 2^p \times 2^q}$ and it is treated as a $2^p \times 2^q$ image with $n_0$ channels.
    This in contrast to the previous considerations, where $Z$ was a matrix with columns treated as different inputs to a vector-valued model.
    Similar to the above, $Y \in \RR^{2^p \times 2^q}$ is a target image, and $M^{(ij)}$ of the same size has $1$ at $(i,j)$ and zero elsewhere.

    Note that $f$ applied to the "image" $Z$ has $2^p \times 2^q$ output and therefore its NTK $\Theta$ is a $2^p \times 2^q \times 2^p \times 2^q$ tensor.
    Suppose $f$ has no downsampling or upsampling layers.
    \cite{radhakrishnan2021simple} provides exact formula for the corresponding limit NTK in terms of the limit NTK of the model $f$ in this case: $K(M^{(ij)}, M^{(i'j')}) = \Theta(Z,Z)_{i,j,i',j'}$.

    Now suppose $f$ has $s$ downsampling and $s$ upsampling layers.
    Computing the Gram matrix for its NTK requires $O(2^{2p+2q})$ memory and $O(L 2^{2p+2q})$ time, where $L$ is the number of convolutions in $f$.
    It is already prohibitive for moderate-size images, i.e. when $p, q \approx 10$.
    \cite{radhakrishnan2021simple} propose a way to reconstruct the $2^p \times 2^q \times 2^p \times 2^q$ Gram matrix from a smaller Gram matrix of size $2^{2s+p+q}$.
    Moreover, this smaller Gram matrix requires computing the "usual" Gram matrices only for images of size $2^{s+1} \times 2^{s+1}$ which requires only $O(L 2^{4s})$ time.

    \subsubsection{Approximate integration with application to federated learning}

    Even in the case when the NTK Gram matrix can be computed and stored, the exact solution (\ref{eq:inf_wide_solution_square_loss}) requires inverting the kernel Gram matrix, which costs $O(m^3)$ when performed naively.
    Fortunately, mixing continuous-time and discrete-time formulations allows one to avoid computing the inverse explicitly.
    
    Denote $H_{t,ij} = \hat\Theta_t(x_i,x_j)$, $Z_{t,ik} = \partial_{\theta_i} f(x_k;\theta)$, and $u_{t,k} = f_t(x_k)$.
    Note that $H_t = Z_t^T Z_t$.
    Discrete-time weight evolution with learning rate $\eta$ is given by
    \begin{equation}
        \theta_{t+1}
        = \theta_t + \eta Z_t (\vec y - \vec u_t).
    \end{equation}
    Recall that assuming stationary kernel $H_t = H_0$ is equivalent to assuming stationary jacobian $Z_t = Z_0$.
    With this assumption, the dynamics above is solved as
    \begin{equation}
        \theta_t
        = \theta_0 + \eta Z_0 \sum_{s=0}^{t-1} (\vec y - \vec u_s).
    \end{equation}
    Recall that integrating continuous-time gradient descent dynamics under assumption $H_t = H_0$ gives
    \begin{equation}
        \vec u_s
        = \vec y + e^{-\eta s H_0} (\vec u_0 - \vec y).
    \end{equation}
    Combining the two latter equations, we get the weights at any time-step $t$:
    \begin{equation}
        \theta_t
        = \theta_0 + \eta Z_0 \sum_{s=0}^{t-1} e^{-\eta s H_0} (\vec y - \vec u_0).
    \end{equation}
    The continuous analogue of the above evolution is obtained by replacing the sum with an integral:
    \begin{equation}
        \theta_t
        = \theta_0 + \eta Z_0 \int_0^t e^{-\eta s H_0} (\vec y - \vec u_0) \, ds
        = \theta_0 + Z_0 H_0^{-1} \left(I - e^{-\eta s H_0}\right) (\vec y - \vec u_0).
    \end{equation}
    Here we get the inverse, as expected.

    Note that in this approach we do not assume that the network to be infinitely wide, we just assume it to be linear in its weights. 
    This allows us to reason in terms of the network weight vector $\theta_t$ instead of reasoning in terms of some abstract feature space associated to the kernel.
    This aspect gives us one additional advantage: we can integrate the dynamics up to some time $t_1$ and, since we know the weights $\theta_{t_1}$, compute $Z_{t_1}$ and $H_{t_1}$.
    We can then proceed integration with these updated matrices.
    This method lies in between the usual gradient descent training and kernel gradient descent with constant kernel.
    The latter never updates the kernel, while the former updates the kernel at each timestep.
    In contrast, the method we discuss updates the kernel only at given timesteps.
    
    The approach under discussion requires computing and storing $Z$ of size $N \times m$, which is an obvious disadvantage.
    As a remedy, \cite{yue2021neural} propose splitting the job of computing $Z$ between several workers.
    A server joins the parts together, integrates the dynamics up to some timestep $t$, and sends $\theta_t$ to all of the workers, starting a new iteration.
    Tuning the timesteps of kernel updates may help balancing load between the server and the workers.
    The data used to compute $Z$ is never stored on the server, making this approach promising for federated learning.
    However, since the server may attempt reconstructing the data from $Z$, one has to ensure each worker's privacy cannot be compromised; see \cite{yue2021neural} for further details.

    \subsection{Pathology analysis}

    % \begin{itemize}
    %     \item \cite{tancik2020fourier}: Image regression;
    %     \item \cite{dupuis2021dnn}: Topology optimization;
    %     \item \cite{huang2021fl}: Federated learning;
    %     \item \cite{martens2021rapid}: Very deep nets without normalization.
    % \end{itemize}

    \begin{figure}[t]
        \centering
        \subfigure[Ground truth]{\includegraphics[width=0.18\textwidth]{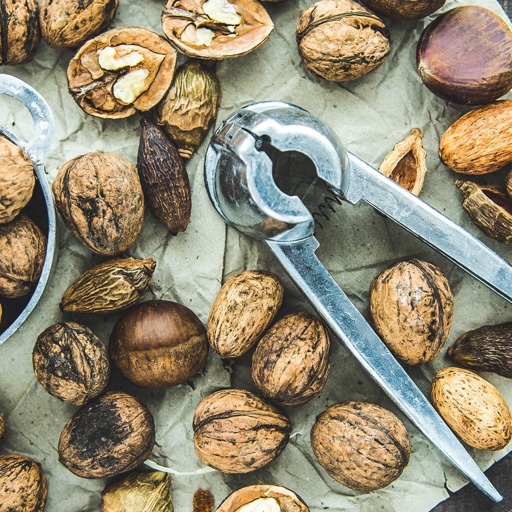}}
        \subfigure[No mapping]{\includegraphics[width=0.18\textwidth]{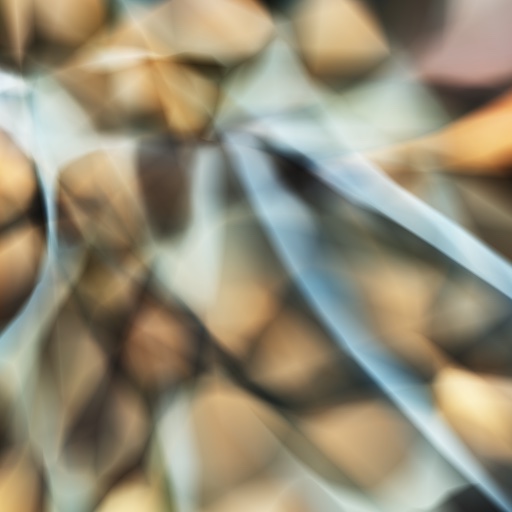}}
        \subfigure[Basic]{\includegraphics[width=0.18\textwidth]{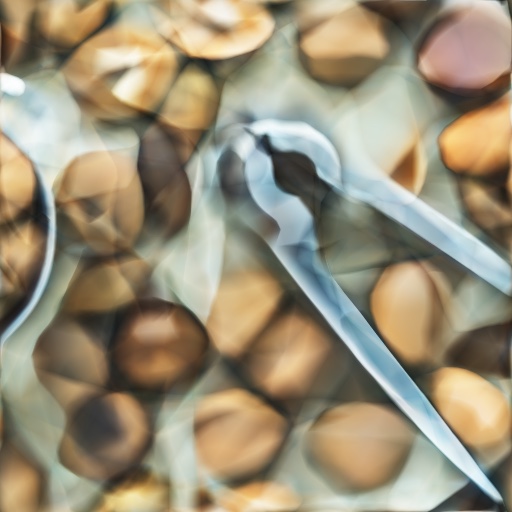}}
        \subfigure[Positional enc.]{\includegraphics[width=0.18\textwidth]{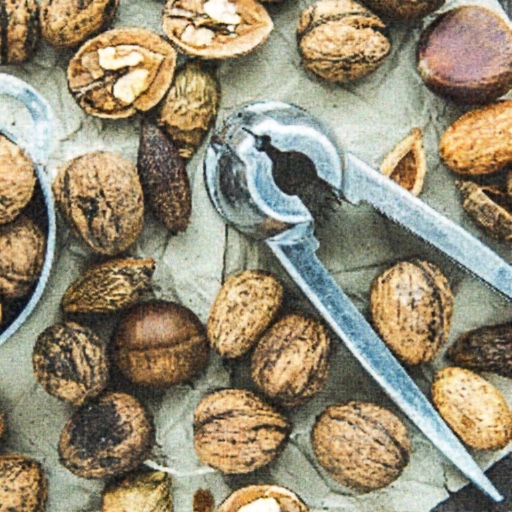}}
        \subfigure[Gaussian]{\includegraphics[width=0.18\textwidth]{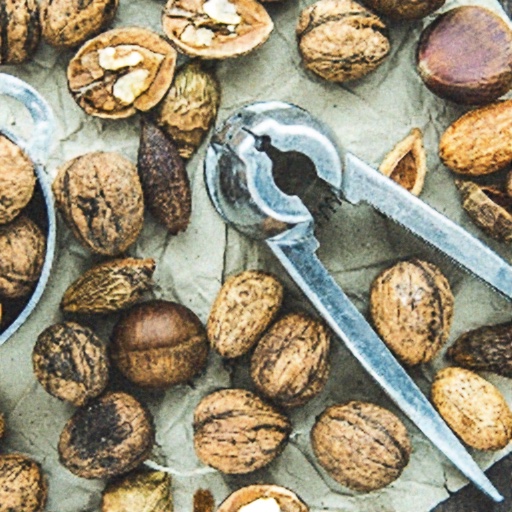}}
        \caption{Images are borrowed from \cite{tancik2020fourier}.}
        \label{fig:image_regression}
    \end{figure}
    \begin{figure}
        \centering
        \subfigure{\includegraphics[width=0.2\textwidth]{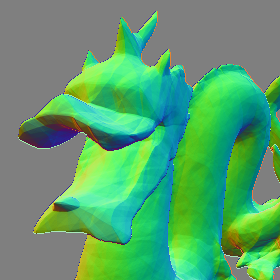}}
        \subfigure{\includegraphics[width=0.2\textwidth]{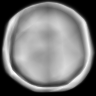}}
        \subfigure{\includegraphics[width=0.2\textwidth]{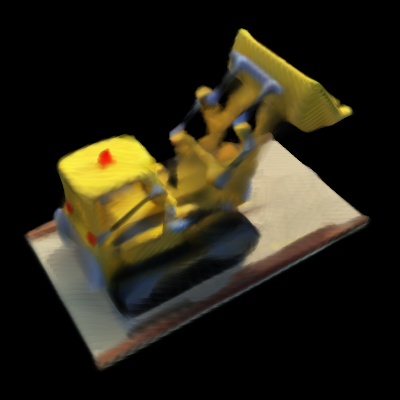}}
        \\
        \subfigure[3D shape regression]{\includegraphics[width=0.2\textwidth]{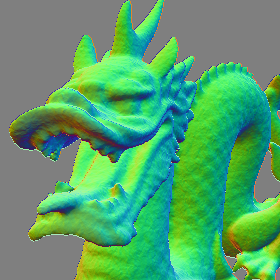}}
        \subfigure[MRI reconstruction]{\includegraphics[width=0.2\textwidth]{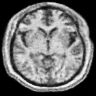}}
        \subfigure[Inverse rendering]{\includegraphics[width=0.2\textwidth]{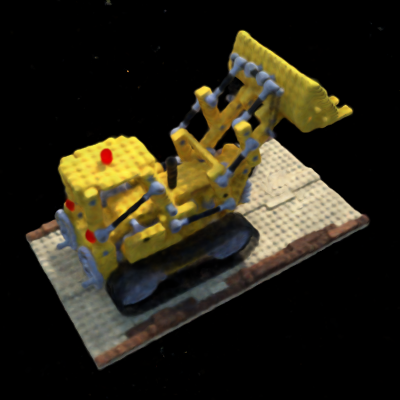}}
        \caption{Images are borrowed from \cite{tancik2020fourier}.}
        \label{fig:low_dim_regression}
    \end{figure}

    While the empirical NTK of a neural network is not the same as its limit NTK, they may have certain properties in common.
    In particular, certain issues of a finite-width network may reflect in certain issues of its limit NTK, and fixing these issues in the limit NTK may result in fixing them in a finite-width net.

    As an example where this approach is proven to work, consider image regression.
    In this task, input samples are image coordinates, $x \in [0,1]^d$ for $d=2$, and targets are pixel colors; we assume grey-scale images with $y \in [0,1]$.
    The task is therefore to regress the full image given a set of pixels.

    Let us consider applying a fully-connected network for this task.
    As we have already observed in \cref{sec:limit_fc_nets}, the limit NTK $\Theta(x,x')$ of a fully-connected network depends only on $x^T x$, $x^{\prime,T} x'$, and $x^T x'$.
    All of these terms are rotation-invariant, hence the kernel itself is rotation-invariant.
    However, none of this terms is translation-invariant, hence the kernel cannot be translation-invariant (otherwise, it has to be constant).
    Therefore it is quite unlikely that the empirical kernel will be invariant to translations.

    On the other hand, both translation and rotation invariance are desirable for a kernel used for image regression.
    Indeed, this means that applying these transformations to the train set of pixels results in the same image as without them, up to translation and rotation.
    In order to achieve this property, one may start working on translationaly invariant embeddings of image coordinates. 
    The simplest non-trivial embedding of this kind is $z(x) = [\cos(2\pi x), \sin(2\pi x)]^T$, where $\cos$ and $\sin$ are applied elementwise.
    Following \cite{tancik2020fourier}, we shall refer it as "basic".
    Comparing (b) and (c) of Figure~\ref{fig:image_regression}, this indeed results in better perceived quality.

    However the regressed image is still blurry: see Figure~\ref{fig:image_regression} (c).
    As we shall see shortly, NTK kernel regression learns low-frequency components of the image before its high-frequency ones.
    If we assume that the same property holds for the corresponding finite-width net then achieving sharp images may be impossible for a given number of gradient steps.

    Recall the training dynamics of a kernel regression with kernel $\Theta$ trained to minimize square loss on a training dataset $(\vec x, \vec y)$:
    \begin{equation}
        \dot f_t(\vec x)
        = \Theta(\vec x, \vec x) (\vec y - f_t(\vec x)).
    \end{equation}
    $\Theta$ is a kernel, therefore its Gram matrix is positive-semidefinite.
    Consider its eigenvalue decomposition: $\Theta(\vec x, \vec x) = \sum_{k=1}^m \lambda_k \vec v_k \vec v_k^T$, where $\lambda_1 \geq \ldots \geq \lambda_m \geq 0$, and $(\vec v_k)_{k=1}^m$ forms an orthonormal basis.

    Let us decompose our model's predictions as $f_t(\vec x) = \sum_{k=1}^m u_{t,k} \vec v_k$.
    Then the dynamics above decomposes as
    \begin{equation}
        u_{t,k}
        = \lambda_k (\vec v_k^T \vec y - u_{t,k}),
    \end{equation}
    which solves as
    \begin{equation}
        u_{t,k} 
        = \vec v_k^T \vec y - e^{-\lambda_k t} (\vec v_k^T \vec y - u_{0,k}).
    \end{equation}

    As one clearly sees, time required to learn the $k$-th principal component of the target is inversely proportional to its strength $\lambda_k$.
    In other words, strong components are learned before weak ones.
    
    The question is: what are the eigenvectors of the NTK Gram matrix?
    It is hard to answer this question in general since a Gram matrix depends on the dataset.
    However, for a kernel, there is an analogue of eigenvalue decomposition called Mercer's representation.

    Let $X$ be a compact metric space and let $\mu$ be a sigma-additive measure on $X$ with $\supp \mu = X$.
    Suppose $K: \; X \times X \to \RR$ is continuous, symmetric, and satisfies $\int_X \int_X K(x,x') f(x) f(x') \, d\mu(x) \, d\mu(x') < \infty$ $\forall f \in L^2_\mu(X)$.
    Define Gram-Schmidt operator $T_K: L^2_\mu(X) \to L^2_\mu(X)$ as $T_K[f](x) = \int_X K(x,x') \, d\mu(x')$.
    Then the above operator admits an eigenvalue decomposition with eigenfunctions $(\psi_k)_{k=1}^\infty$ and corresponding eigenvalues $(\lambda_k)_{k=1}^\infty$, and the set of eigenfunctions forms an orthonormal basis in $L^2_\mu(X)$.
    The Mercer's representation is the corresponding decomposition of the kernel:
    \begin{equation}
        K(x,x')
        = \sum_{k=1}^\infty \lambda_k \psi_k(x) \psi_k(x').
    \end{equation}
    The series converges uniformly in $X \times X$.

    From the above, we have $\int_X \int_X K(x,x') \psi_k(x) \psi_k(x') \, d\mu(x) \, d\mu(x') = \lambda_k$ $\forall k \geq 1$.
    Hence if $\vec x = (x_k)_{k=1}^m$ and $\vec x' = (x'_k)_{k=1}^m$ are sampled iid from $\mu$ then 
    \begin{multline}
        \frac{1}{m^2} \psi_k^T(\vec x) K(\vec x, \vec x') \psi_k(\vec x')
        =\\= \frac{1}{m^2} \sum_{i,j=1}^m K(x_i, x'_j) \psi(x_i) \psi(x'_k) 
        \to \int_X \int_X K(x,x') \psi_k(x) \psi_k(x') \, d\mu(x) \, d\mu(x') 
        = \lambda_k
    \end{multline}
    a.s. as $m \to \infty$ by the Law of Large Numbers (LLN).
    Note that considering $\psi^T(\vec x) K(\vec x, \vec x) \psi(\vec x)$ instead of $\psi^T(\vec x) K(\vec x, \vec x') \psi(\vec x')$ may result in a different limit because the diagonal of $K$ is now calculated on two dependent arguments.
    Nevertheless, there are only $m$ elements on the diagonal, which results in $O(m^{-1})$ error vanishing in the limit.
    Hence
    \begin{equation}
        \frac{1}{m^2} \psi_k^T(\vec x) K(\vec x, \vec x) \psi_k(\vec x)
        \to \lambda_k
    \end{equation}
    a.s. as $m \to \infty$.
    In other words, given $\vec x$ sampled iid from $\mu$, $(\psi_k(\vec x))_{k=1}^m$ are approximately the eigenvectors of $K(\vec x, \vec x)$ with eigenvalues $(m^2 \lambda_k)_{k=1}^m$.

    Recall that, as was noted above, the limit NTK of a fully-connected net $\Theta(z,z')$ depends only on $z^T z'$, $\|z\|_2$, and $\|z'\|_2$.
    Recall also that we have decided to embed inputs with $z(x) = [\cos(2\pi x), \sin(2\pi x)]^T$.
    This embedding maps $[0,1]^d$ on a $d$-dimensional torus that lies inside a $2d-1$-dimensional sphere.
    In this case, our $\Theta(x,x') = \Theta(z(x),z(x'))$ depends only on $z^T(x) z(x')$.

    Kernels with this property are called zonal.
    Any zonal kernel $K: S^{p-1} \times S^{p-1} \to \RR$ admits the following Mercer's decomposition with respect to the uniform measure on $S^{p-1}$:
    \begin{equation}
        K(z^T z')
        = \sum_{k=0}^\infty \lambda_k \sum_{j=1}^{N(p,k)} Y_{k,j}(z) Y_{k,j}(z'),
    \end{equation}
    where $N(p,k)$ are so-called Gegenbauer polynomials and $Y_{k,j}$ are spherical harmonics.
    For $p=2$, this decomposition gets a simpler form:
    \begin{equation}
        K(z^T z')
        = \frac{1}{4\pi^2} + \frac{1}{\pi^2} \sum_{k=1}^\infty \lambda_k \cos(k \arccos(z^T z')).
        \label{eq:mercer_zonal_2d}
    \end{equation}

    As we see, large $k$'s correspond to high-frequency harmonics, while small $k$'s correspond to low-frequency ones.
    A recent result of \cite{chen2020deep} states that the NTK of a fully-connected net with inputs lying on $S^{p-1}$ has eigenvalues decaying as a power-law: $\lambda_k \sim k^{-p}$ as $k \to \infty$; see also \cite{geifman2020similarity} for an earlier result for shallow nets and \cite{bietti2019inductive} for an even earlier result for bias-free shallow nets.
    This means that learning the $k$-th harmonic of the input image requires $O(k^p)$ time.
    Hence for a finite amount of training steps, high-frequency components remain not learned, which results in blurry images similar to Figure~\ref{fig:image_regression} (c).

    The possible remedy would be increasing $\lambda_k$ for large $k$.
    But how to achieve it?
    We illustrate the solution proposed in \cite{tancik2020fourier} in the following.
    
    Consider the case $d=1$ for simplicity.
    In this case, the embedding map $z(x) = [\cos(2\pi x), \sin(2\pi x)]^T$ traverses a circle.
    Consider a modified embedding $\tilde z(x) = [\cos(2\pi b x), \sin(2\pi b x)]^T$ instead, where $b \in \mathbb{N}$ is a tunable parameter.
    The corresponding kernel is then given as
    \begin{multline}
        K(\tilde z^T \tilde z')
        = \frac{1}{4\pi^2} + \frac{1}{\pi^2} \sum_{k=1}^\infty \lambda_k \cos(k \arccos(\tilde z^T \tilde z'))
        % = \frac{1}{4\pi^2} + \frac{1}{\pi^2} \sum_{k=1}^\infty \lambda_k \cos(k \arccos(\cos(4\pi b (x-x'))))
        =\\= \frac{1}{4\pi^2} + \frac{1}{\pi^2} \sum_{k=1}^\infty \lambda_k \cos(4\pi k b (x-x'))
        = \frac{1}{4\pi^2} + \frac{1}{\pi^2} \sum_{k=1}^\infty \lambda_k \cos(k b \arccos(z^T z')),
    \end{multline}
    which means that $\lambda_k$ becomes the $kb$-th eigenvalue in the original embedding space.
    If $\lambda_k$ decreased monotonically this would mean that each $kb$-th eigenvalue increased from $\lambda_{kb}$ to $\lambda_k$, implying faster convergence to $kb$-th principal component.

    The obvious downside of the method above is that in a new parameterization some of the eigenvalues become zero --- therefore they are never learned.
    A simple solution is to enlarge the embedding: $\tilde z(x) = [\cos(2\pi \sigma^{j/M} x), \sin(2\pi \sigma^{j/M} x)]^T$, where $M \in \mathbb{N}$ and $\sigma \in \RR_+$ are tunable parameters; this referred as "positional encoding" in \cite{tancik2020fourier}.
    Another solution proposed by \cite{tancik2020fourier} is random Gaussian projections: $\tilde z(x) = [\cos(2\pi B x), \sin(2\pi B x)]^T$, where $B \in \RR^{M \times d}$, each element of $B$ is sampled independently from $\NN(0,\sigma^2)$, and $M$ and $\sigma$ are tunable parameters.
    Both solution perform on par with each other and much better than the original embedding: compare (c), (d), and (e) in Figure~\ref{fig:image_regression}.

    The same method suites other low-dimensional regression problems as well; \cite{tancik2020fourier} provide examples of 3D shape regression, MRI reconstruction, and inverse rendering.
    See Figure~\ref{fig:low_dim_regression} for comparison of outputs of a neural net with no enconding of inputs (top row) and the proposed Gaussian encoding (bottom row).

    One more notable example is Solid Isotropic Material Penalisation, an instance of topology optimization.
    The task here is to optimize over material density at $N$ points $y \in [0,1]^N$ to obtain a shape that can withstand forces applied at certain points.

    Given a density $y$ and a force vector $F$, the SIMP method constructs a stiffness matrix $K(y)$, and derives a displacement vector $U(y)$ by solving a linear system $K(y) U(y) = F$.
    The resulting construction is stable if the forces do not do any work, i.e. $U^T(y) F = 0$.
    The density is therefore optimized to minimize the work $C(y) = U^T(y) F U(y) \to \min_y$ under a volume constraint $\sum_{i=1}^N y_i = V$; $C$ is usually called compliance.

    We can cast the constrained optimization problem as an unconstrained one by introducing pre-density $x \in \RR^N$ and constructing density as $y_i = \sigma(x_i + b(x))$, where $b$ is a function that ensures the volume constraint.
    Denoting this operation as $y = \Sigma(x)$, we get a new unconstrained optimization problem in the space of pre-densities: $C(\Sigma(x)) \to \min_x$.

    While the above problem is not a regression problem, we can still model $x$ as outputs of a neural net at the corresponding grid points.
    However, lack of translation invariance results in unplausible patterns.
    \cite{dupuis2021dnn} used a similar embedding scheme as \cite{tancik2020fourier} to control this issue.
    On the other hand, in contrast to \cite{tancik2020fourier}, \cite{dupuis2021dnn} used $\sin(\omega x)$ as activation instead of ReLU, and used $\omega$ together with bias initialization variance to control sharpness of output shapes, instead of modifying the embedding.
    Both methods aim to "widen" the spectrum of the limit NTK.

    \subsection{A theoretical tool}
    \label{sec:app_theory}

    % \begin{itemize}
    %     \item \cite{du2018gradient,du2019gradient}: GD convergence;
    %     \item \cite{arora2019fine}: Generalization bounds and fine-grained GD convergence.
    % \end{itemize}

    Apart from providing a meaningful kernel for kernel methods, NTK can be used as a concept useful for reasoning about neural nets of large width.
    Indeed, as stated in \cref{sec:convergence}, NTK, while being random and evolving, converges to a constant deterministic limit as width goes to infinity.
    One can hope that for large enough width, the NTK stays close to its limit with high probability.
    Therefore, any result valid for kernel regression with NTK taken as a kernel, may become also valid with high probability for a wide enough net.

    \subsubsection{Global GD convergence}

    Let us start with the following result valid for kernel regression with a constant kernel:
    when the kernel is positive-definite, kernel regression learns the dataset.
    Indeed, recall the training dynamics of a kernel regression with kernel $\Theta$ trained to minimize square loss on a training dataset $(\vec x, \vec y)$:
    \begin{equation}
        \dot f_t(\vec x)
        = \Theta(\vec x, \vec x) (\vec y - f_t(\vec x)).
    \end{equation}
    Assuming $\Theta(\vec x, \vec x) \geq \lambda$,
    \begin{equation}
        \frac{d}{dt}\left(\frac{1}{2} \| \vec y - f_t(\vec x) \|_2^2\right)
        = -(\vec y - f_t(\vec x))^T \Theta(\vec x, \vec x) (\vec y - f_t(\vec x))
        \leq -\lambda \| \vec y - f_t(\vec x) \|_2^2,
    \end{equation}
    which gives
    \begin{equation}
        \| \vec y - f_t(\vec x) \|_2^2
        \leq e^{-2\lambda t} \| \vec y - f_0(\vec x) \|_2^2.
    \end{equation}
    Hence $\lambda > 0$ suffices to guarantee that $f_t(\vec x)$ converges to $\vec y$ as $t \to \infty$.

    Suppose now our kernel regression uses a random time-dependent kernel $\hat\Theta_t$ instead of $\Theta$:
    \begin{equation}
        \dot f_t(\vec x)
        = \hat\Theta_t(\vec x, \vec x) (\vec y - f_t(\vec x)).
    \end{equation}
    If we manage to guarantee that with probability $\geq 1-\delta$ $\forall t \geq 0$ $\hat\Theta_t(\vec x, \vec x) \geq \lambda$ then $\lambda > 0$ suffices to guarantee that $f_t(\vec x)$ converges to $\vec y$ as $t \to \infty$ with probability $\geq 1-\delta$.
    Indeed,
    \begin{equation}
        \frac{d}{dt}\left(\frac{1}{2} \| \vec y - f_t(\vec x) \|_2^2\right)
        = -(\vec y - f_t(\vec x))^T \hat\Theta_t(\vec x, \vec x) (\vec y - f_t(\vec x))
        \leq -\lambda \| \vec y - f_t(\vec x) \|_2^2
        \quad
        \text{w.p. $\geq 1-\delta$},
    \end{equation}
    which gives
    \begin{equation}
        \| \vec y - f_t(\vec x) \|_2^2
        \leq e^{-2 \lambda t} \| \vec y - f_0(\vec x) \|_2^2
        \quad
        \text{w.p. $\geq 1-\delta$}.
    \end{equation}

    One of the first results of this kind concerns ReLU nets with one hidden layer under NTK parameterization:
    \begin{equation}
        f(x; a_{1:n}, w_{1:n})
        = \frac{1}{\sqrt{n}} \sum_{i=1}^n a_i [w_i^T x]_+.
        \label{eq:two_layered_ReLU_net_ntk}
    \end{equation}
    We aim to minimize square loss on a dataset $(\vec x, \vec y)$ of size $m$ with gradient descent on the input weights:
    \begin{equation}
        \dot w_i(t)
        = \frac{1}{\sqrt{n}} \sum_{k=1}^m (y_k - f(x_k; a_{1:n}, w_{1:n}(t))) a_i [w_i^T(t) x_k > 0] x_k
        \quad
        \forall i \in [n].
    \end{equation}
    We sample $w_i \sim \NN(0,I_{n_0})$ and $a_i \in U(\{-1,1\})$ $\forall i \in [n]$ independently.
    The goal of sampling $a_i$ from this particular distribution is mere simplification: in this case $a_i^2 = 1$, which simplifies the NTK Gram matrix a little bit:
    \begin{equation}
        \hat\Theta_t(x_k, x_l) =
        \frac{1}{n} \sum_{i=1}^n [w_i^T(t) x_k > 0] [w_i^T(t) x_l > 0] x_k^T x_l.
    \end{equation}
    However, it is possible to apply the same technique to any distribution of the output layer not depending on $n$.
    Note that the Gram matrix depends merely on activation patterns of the hidden layer computed on the dataset.

    The limit NTK is therefore given as:
    \begin{equation}
        \Theta(x_k, x_l) =
        \EE_{w \sim \NN(0, I_{n_0})} [w^T x_k > 0] [w^T x_l > 0] x_k^T x_l.
    \end{equation}
    Note that in our two-layered case, $\Theta(x,x') = \lim_{n \to \infty} \hat\Theta_t(x,x') = \EE \hat\Theta_0(x,x')$.
    In the sequel, we denote the Gram matrices $\hat\Theta_t(\vec x, \vec x)$ as $H(t)$ and $\Theta(\vec x, \vec x)$ as $H^\infty$.
    Let $\lambda_0$ to be the least eigenvalue of $H^\infty$.

    \begin{theorem}[\cite{du2018gradient}]
        Consider the setting discussed above and further assume $\|x_k\|_2 \geq 1$ and $|y_k| \leq 1$ $\forall k \in [m]$.
        Then $\exists C, C_0 > 0$ such that $\forall \delta \in (0,1)$ taking
        \begin{equation}
            n >
            \max\left(
                C \frac{m^6}{\lambda_0^4 \delta^3}, \;
                C_0 \frac{m^2}{\lambda_0^2} \log\left(\frac{2m}{\delta}\right)
            \right)
        \end{equation}
        guarantees $H(t) \geq \lambda_0/2$ $\forall t \geq 0$ w.p. $\geq 1-\delta$.
        \label{thm:convergence_2layer}
    \end{theorem}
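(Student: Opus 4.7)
The plan is to close a bootstrap (Grönwall-style continuity) loop built from three ingredients: (i) concentration of $H(0)$ around $H^\infty$ at initialization, (ii) Lipschitz-type stability of $H(t)$ under weight perturbations, and (iii) the exponential loss decay that a lower-bounded kernel Gram matrix automatically provides (as already derived in \cref{sec:app_theory}). Concretely, define $T = \sup\{t \ge 0 : H(s) \ge \lambda_0/2 \;\forall s \in [0,t]\}$; the goal is to show that under the stated width, $T = \infty$ with probability at least $1-\delta$.

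First I would handle initialization. Each entry $H_{kl}(0) = \tfrac{1}{n}\sum_{i=1}^n \mathbf{1}[w_i^T x_k > 0]\mathbf{1}[w_i^T x_l > 0] x_k^T x_l$ is an average of $n$ i.i.d.\ bounded terms with mean $H^\infty_{kl}$, so Hoeffding's inequality plus a union bound over the $m^2$ entries yields $\|H(0) - H^\infty\|_2 \le \|H(0) - H^\infty\|_F \le \lambda_0/4$ with probability $\ge 1 - \delta/2$ as soon as $n \gtrsim m^2 \log(m/\delta)/\lambda_0^2$, which matches the second branch of the max. Hence $H(0) \ge 3\lambda_0/4$.

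Next I would bound the perturbation $\|H(t) - H(0)\|$. The entries depend on the weights only through the activation indicators $\mathbf{1}[w_i^T x_k > 0]$; an indicator for neuron $i$ and sample $k$ can flip between time $0$ and $t$ only if $w_i(0)$ lies within distance $R_i(t) := \|w_i(t)-w_i(0)\|$ of the hyperplane $\{w : w^T x_k = 0\}$. Since $w_i(0) \sim \NN(0, I)$ and $\|x_k\|_2$ is $\Theta(1)$, the probability of this event is $O(R)$ for a uniform radius $R$. Therefore if all $R_i(t) \le R$, the expected fraction of flipped patterns per pair $(k,l)$ is $O(R)$, giving $\EE \|H(t) - H(0)\|_F^2 = O(m^2 R)$ and hence, via Markov, $\|H(t) - H(0)\| \le \lambda_0/4$ with probability $\ge 1-\delta/2$ provided $R \lesssim \lambda_0^2 \delta / m^2$.

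Finally I would bound $R_i(t)$ using the exponential loss decay that $H(s) \ge \lambda_0/2$ supplies for $s \in [0,T]$. From the ODE for $w_i$, $\|\dot w_i(s)\|_2 \le \tfrac{1}{\sqrt{n}} \sum_k |y_k - f_s(x_k)| \|x_k\|_2 \le \tfrac{\sqrt{m}}{\sqrt{n}} \|\vec y - f_s(\vec x)\|_2$; integrating $\|\vec y - f_s(\vec x)\|_2 \le e^{-\lambda_0 s/2}\|\vec y - f_0(\vec x)\|_2$ over $s$ yields $R_i(t) \le \tfrac{2\sqrt{m}}{\sqrt{n}\,\lambda_0}\|\vec y - f_0(\vec x)\|_2 \le O(m/(\sqrt{n}\,\lambda_0))$ since $\|\vec y - f_0(\vec x)\|_2 = O(\sqrt{m})$ w.h.p.\ by the assumptions $|y_k|\le 1$ and Gaussian concentration on $f_0(x_k)$. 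Demanding $R \lesssim \lambda_0^2\delta/m^2$ then forces $n \gtrsim m^6/(\lambda_0^4 \delta^3)$ (after pushing an additional $\delta^{-1}$ through Markov for the per-sample concentration of the number of flipping neurons), matching the first branch. Now the bootstrap closes: on $[0,T]$ we have $R_i(t) \le R$, hence $\|H(t)-H(0)\| \le \lambda_0/4$, hence $H(T) \ge H(0) - \lambda_0/4 \ge \lambda_0/2$ strictly, so by continuity $T = \infty$.

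The main obstacle is the interlock between steps two and three: the perturbation bound requires a uniform cap $R$ on weight movement, but the cap on weight movement presupposes $H(s) \ge \lambda_0/2$ throughout, which is exactly what one is trying to prove. Executing this carefully requires the definition of $T$ above together with a strict-inequality argument at $t = T$ to rule out $T < \infty$; getting the quantitative constants to align so that the same $n$ simultaneously gives concentration at initialization, tolerable perturbation, and small weight motion, is where the $m^6/(\lambda_0^4\delta^3)$ rate comes from, and is the delicate part of the calculation.
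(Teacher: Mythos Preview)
Your proposal is correct and follows essentially the same four-step structure as the paper's proof: Hoeffding plus union bound for $H(0) \approx H^\infty$, an activation-flip argument bounding $\|H - H(0)\|$ in terms of the weight displacement radius $R$, the weight-movement bound from integrating the exponentially decaying loss, and the continuity/bootstrap closure via the stopping time $T$. The paper presents exactly these four lemmas in the same order, with the same interlock between lemmas two and three that you flag as the delicate point.
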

    This result implies $\| \vec y - f_t(\vec x) \|_2^2 \leq e^{-\lambda_0 t} \| \vec y - f_0(\vec x) \|_2^2$ w.p. $\geq 1-\delta$, as discussed above.
    
    For the full proof, see the original paper \cite{du2018gradient} or lecture notes \cite{golikov2020notes}.
    We are going to discuss, very briefly, only crucial parts of the proof in the sequel.

    The proof is based on four lemmas.
    The first lemma states that as long as $n = \Omega(m^2 \lambda_0^{-2} \log(m/\delta))$, where $\Omega$ hides a certain constant, $\|H(0) - H^\infty\|_2 \leq \lambda_0/4$, where $\|\cdot\|_2$ denotes a singular norm, w.p. $\geq 1-\delta$; this implies $H(0) \geq 3\lambda_0/4$ with the same probability.
    As already noted above, $\EE H(0) = H^\infty$.
    This allows one to apply a concentration inequality to each element of $H(0)$.
    Union bound then gives a bound that holds uniformly for all elements of $H(0)$.
    This implies a bound on $\|H(0) - H^\infty\|_F$, hence on a singular norm as well.

    The second lemma states that as long as $\forall i \in [n]$ $\|w_i - w_i(0)\|_2 \leq R$ for certain $R = R(\delta,\lambda_0,m)$, $\| H - H(0) \|_2 \leq \lambda_0/4$ w.p. $\geq 1-\delta$.
    In other words, as long as weights are close to initialization, the corresponding Gram matrix is close to the initial one too.
    The idea is that as long as the weights are not far from their initialization, with certain probability, not many of the hidden neurons can alter their activation patterns on the train dataset.
    Since as already noted above, our Gram matrices depend only on activation patterns on the train dataset, this implies a tail bound on $|H_{kl}(0) - H_{kl}^\infty|$ $\forall k,l \in [m]$, which gives a tail bound on $\|H(0) - H^\infty\|_2$ with the same technique as used in the first lemma.

    The third lemma states that as long as $H(s) \geq \lambda_0/2$ $\forall s \in [0,t]$ (we haven't proven it yet), weights indeed stay close to their initialization: $\forall i \in [n]$ $\|w_i(t) - w_i(0)\|_2 \leq R'$ for certain $R' = R'(\lambda_0,m,n)$.
    This can be proven by a very simple estimate:
    \begin{multline}
        \left\|\frac{dw_i(s)}{ds}\right\|_2 =
        \left\|\frac{1}{\sqrt{n}} \sum_{k=1}^m (y_k - f_s(x_k)) a_i [w_i^T(s) x_k > 0] x_k\right\|_2 \leq
        \\\leq
        \frac{1}{\sqrt{n}} \sum_{k=1}^m |y_k - f_s(x_k)| \leq
        \sqrt{\frac{m}{n}} \|\vec y - f_s(\vec x)\|_2 \leq
        \sqrt{\frac{m}{n}} e^{-\lambda_0 s / 2} \|\vec y - f_0(\vec x)\|_2.
    \end{multline}
    This gives $\forall i \in [n]$:
    \begin{multline}
        \| w_i(t) - w_i(0) \|_2 =
        \left\|\int_0^t \frac{dw_i(s)}{ds} \, ds\right\|_2 \leq
        \int_0^t \left\|\frac{dw_i(s)}{ds}\right\|_2 \, ds \leq
        \\\leq
        \frac{2 \sqrt{m}}{\lambda_0 \sqrt{n}} \left(1 - e^{-\lambda_0 t / 2}\right) \|\vec y - f_0(\vec x)\|_2 \leq
        \frac{2 \sqrt{m}}{\lambda_0 \sqrt{n}} \|\vec y - f_0(\vec x)\|_2.
    \end{multline}

    Finally, the fourth lemma states that as long as $R' < R$, $\| H(t) - H(0) \|_2 \leq \lambda_0/4$ $\forall t \geq 0$ w.p. $\geq 1-\Omega(\delta)$ where $\Omega$ hides a certain constant.
    Combined with the first lemma, this implies $H(t) \geq \lambda_0/2$ $\forall t \geq 0$ w.p. $\geq 1-\Omega(\delta)$.
    The condition $R'(\lambda_0,m,n) < R(\delta,\lambda_0,m)$ gives the second lower bound on $n$ (the first one is given be the first lemma).
    By changing $\delta$, we get the desired result.

    The fourth lemma is proven as follows.
    Let $t_0$ be the first moment of time when the second lemma becomes no longer applicable, i.e. $t_0 = \inf\left\{t \geq 0: \; \max_{i \in [n]} \| w_i(t) - w_i(0) \|_2 > R\right\}$.
    Assume it is finite.
    Since weights are continuous functions of time, $\max_{i \in [n]} \| w_i(t_0) - w_i(0) \|_2 = R$.
    Hence the second lemma holds for $w_{1:n} = w_{1:n}(t)$ $\forall t \in [0,t_0]$ and $\| H(t) - H(0) \|_2 \leq \lambda_0/4$ w.p. $\geq 1-\delta$ $\forall t \in [0,t_0]$, therefore $H(t) \geq \lambda_0/2$ w.p. $\geq 1-\Omega(\delta)$ $\forall t \in [0,t_0]$.
    But then the third lemma holds as well: $\forall i \in [n]$ $\|w_i(t_0) - w_i(0)\|_2 \leq R' < R$; contradiction.
    Hence $\forall t \geq 0$ $\max_{i \in [n]} \| w_i(t) - w_i(0) \|_2 \leq R$ and the second lemma gives the desired statement.

    \cref{thm:convergence_2layer} requires the number of hidden units $n$ to grow as $m^6$ with the size of a train dataset and as $\delta^{-3}$ with the failure probability.
    This bound is way too loose for practical purposes: indeed, even for very small datasets $m \geq 100$ which results in a bound of the order at least $10^8$.
    If we want the bound to be valid with at least $90\%$ probability, we pay three orders of magnitude more.
    Note that modern architectures designed to be trained on large datasets like ImageNet ($m=10^6$) have width barely exceeding $10^4$.

    We state one of the existing improvements of \cref{thm:convergence_2layer} below:
    \begin{theorem}[\cite{song2019quadratic}]
        Under the same setting as \cref{thm:convergence_2layer}, $\exists C, C_0 > 0$ such that $\forall \delta \in (0,1)$ taking
        \begin{equation}
            n >
            \max\left(
                C \frac{m^4}{\lambda_0^4} \log^3\left(\frac{m}{\delta}\right), \;
                C_0 \frac{m^2}{\lambda_0^2} \log\left(\frac{2m}{\delta}\right)
            \right)
        \end{equation}
        guarantees $H(t) \geq \lambda_0/2$ $\forall t \geq 0$ w.p. $\geq 1-\delta$.
        \label{thm:convergence_2layer_quartic}
    \end{theorem}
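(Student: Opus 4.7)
The plan is to preserve the four-lemma architecture of \cite{du2018gradient} sketched just above, but to sharpen the concentration bounds at two of the four steps so that the dependence on the failure probability becomes $\log^3(m/\delta)$ rather than $\delta^{-3}$ and the dependence on $m$ drops from $m^6$ to $m^4$. The first lemma --- concentration of $H(0)$ around $H^\infty$ --- is already logarithmic in $\delta$ via the entrywise Hoeffding-plus-union-bound argument and gives the $m^2/\lambda_0^2$ branch of the bound, which can be recycled verbatim. The third lemma --- a deterministic integration-in-time estimate showing that as long as $H(s) \geq \lambda_0/2$ holds for $s \in [0,t]$ the weights obey $\|w_i(t) - w_i(0)\|_2 \leq R' := 2\sqrt{m/n}\,\|\vec y - f_0(\vec x)\|_2/\lambda_0$ --- is purely an ODE computation, is identical to Du's, and carries over unchanged. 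Therefore all of the slack sits in the second lemma: given a per-neuron perturbation budget $R$, bound $\|H(t) - H(0)\|_2$ with high probability.

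To tighten that lemma I would quantify the contribution of each neuron to each entry $H_{kl}(t) - H_{kl}(0)$: a neuron $i$ contributes only if it flips its ReLU sign on $x_k$ or on $x_l$ as $w_i$ moves inside the ball of radius $R$ around $w_i(0)$. The event that $w_i(0)$ lies within distance $R$ of the hyperplane $\{w:w^\top x_k=0\}$ has Gaussian probability $O(R/\|x_k\|_2) = O(R)$, so the number of flipping neurons for each fixed $x_k$ is a sum of $n$ independent Bernoulli($O(R)$) indicators. Rather than bounding this sum by its expectation through a Markov step --- which is where the $\delta^{-1}$ factor in \cite{du2018gradient} originates --- I would apply a Bernstein/Chernoff tail bound, obtaining a deviation of order $nR + \sqrt{nR\log(m/\delta)} + \log(m/\delta)$ uniformly over the $m$ data points after a union bound. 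This converts the entrywise estimate $|H_{kl}(t)-H_{kl}(0)| = O(R)$ into one holding with probability $1 - \delta/m^2$, and taking a Frobenius-norm union over the $m^2$ entries gives $\|H(t)-H(0)\|_F = O\bigl(mR\sqrt{\log(m/\delta)}\bigr)$. Requiring this to stay below $\lambda_0/4$ sets the budget $R = \Theta\!\left(\lambda_0\bigl/(m\sqrt{\log(m/\delta)})\right)$, which is the critical sharpening that turns $m^2$ into $m$ in the per-neuron budget and hence $m^6$ into $m^4$ in the final bound.

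Combining with lemma three via $R' \leq R$ yields $m/(\lambda_0\sqrt{n}) \lesssim \lambda_0/(m\sqrt{\log(m/\delta)})$, i.e.\ $n \gtrsim m^4\log(m/\delta)/\lambda_0^4$; the extra $\log^2$ factors needed to reach $\log^3(m/\delta)$ appear when one unions the Bernstein estimate over (i) the neurons, (ii) a fine enough time net for the continuity-in-time argument of the fourth lemma, and (iii) entries of $H$. The fourth lemma, which turns the conditional bound of lemma two into a uniform-in-time statement by contradiction at the first bad time, is then identical to Du's. The main obstacle is ensuring that the Bernstein-style tail is genuinely valid: the flipping indicators are independent across neurons once the initialization is fixed, but their parameters depend on the (random and evolving) weights $w_i(t)$, so the step requires a decoupling argument --- freezing the initialization and showing that any trajectory confined to the ball of radius $R$ around initialization can only realize activation patterns drawn from a small combinatorial family --- before the independence-based concentration can be legitimately applied. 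Once this decoupling is in place, every other step is a routine rewrite of \cite{du2018gradient}.
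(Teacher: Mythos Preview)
Your four-lemma scaffold and the Markov\,$\to$\,Bernstein swap on the per-point flip counts $|S_k|=\#\{i:|w_i(0)^\top x_k|\le R\}$ are sound and do deliver the $m^4/\lambda_0^4$ rate: once $|S_k|\lesssim nR$ holds with a $\log(m/\delta)$ tail, the entrywise bound $|H_{kl}(t)-H_{kl}(0)|\lesssim R$ and the Frobenius step give $\|H(t)-H(0)\|_F\lesssim mR$, hence $R\asymp\lambda_0/m$ and $R'\le R$ forces $n\gtrsim m^4/\lambda_0^4$ up to logs. This is, however, not the route the paper attributes to \cite{song2019quadratic}. There the improvement is framed as replacing the elementwise-then-union pathway altogether by a \emph{matrix} Chernoff/Bernstein inequality applied directly to the spectral norm of $H(0)-H^\infty$ and of $H(t)-H(0)$; that device removes the $m^2$ loss incurred when one passes from entrywise control to a Frobenius bound, and together with the scalar Bernstein refinement yields the stated $\log(m/\delta)$ dependence. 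Your route recovers the same polynomial order by a more elementary argument (scalar concentration only), at the cost of slightly looser log factors and a cruder $\|\cdot\|_F\ge\|\cdot\|_2$ step; the paper's route buys a cleaner spectral estimate and is what the cited reference actually does.

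Two small overcomplications in your sketch. First, no decoupling is needed: the flip-eligibility set $S_k$ depends only on the initialization $w_i(0)$, and any trajectory confined to the $R$-ball can only flip neurons in $S_k$, so the indicators you sum are genuinely i.i.d.\ Bernoulli over the randomness of $w_i(0)$. Second, the extra union over a time net is unnecessary --- the fourth lemma's first-bad-time contradiction already upgrades the conditional estimate to all $t\ge 0$ without discretising time; the $\log^3$ factor in the statement comes from stacking the Bernstein tails for $|S_k|$, for $\|\vec y-f_0(\vec x)\|_2$, and for the first-lemma concentration, not from a time grid.
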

    This result decreases the exponent of $m$ from $6$ to $4$ and makes the $\delta$-dependence logarithmic.
    The proof follows the same path as above.
    Note however that the previous result aimed for elementwise tail bounds on $H(0) - H^\infty$ or $H - H(0)$ which lead to tail bounds on $\|H(0) - H^\infty\|_2$ and $\|H - H(0)\|_2$ by union bound, which gives an $m^2$ factor.
    One of the improvements proposed by \cite{song2019quadratic} is to replace these elementwise bounds with matrix-Chernoff bounds --- they do not give this $m^2$ factor, thus leading to better bounds.
    The other improvement is to replace Markov inequalities that result in $1/\delta$ factors with Bernstein inequality that results only in $\log(1/\delta)$ ones.

    The $m^4$ width bound is still far from being realistically tight.
    We are not aware of any further improvements of the results discussed above that apply the idea of NTK stability.
    Global gradient descent convergence can be, however, proved by first proving gurantees on convergence to local minima and then proving that all minima are global for wide enough nets.
    See \cite{lee2016gradient,panageas2017gradient,mertikopoulos2020almost} for the first line of works and \cite{yu1995local,nguyen2017loss,nguyen2019connected,nguyen2021note} for the second.
    None of the works of both lines use the idea of NTK stability and they neither rely on NTK parameterization.
    \cite{nguyen2019connected} proves that $n = m$ is enough of leaky ReLU nets to have only global "local valleys" (generalization of global minima to certain losses such as cross-entropy) and \cite{nguyen2021note} demonstrates that this bound cannot be improved for two-layered nets and general data.

    \cite{du2019gradient} extends \cref{thm:convergence_2layer} to deep nets.
    Their proof idea is the same: first show that $H(0)$ is close to $H^\infty$, then show that $H(t)$ stays close to $H(0)$.
    However for the multilayer case, $H(0)$ cannot be proven to be close to $H^\infty$ just by concentration of measure.
    When layers are many, perturbations caused by finite width result in deviations exponential with respect to the number of layers $L$.
    For this reason, their bound grows exponentially with $L$.
    See also \cite{allen2019convergence} for a similar result with a bound depending on $m$ only polynomially, proved using a different technique.

    \subsubsection{Generalization guarantees}

    Stability of NTK has another interesting consequence.
    Suppose the empirical NTK is constant, i.e. $\hat\Theta_t = \hat\Theta_0$.
    It is equivalent to say that the corresponding model is linearized:
    \begin{equation}
        f(x; \theta)
        = f(x; \theta_0) + \nabla_\theta^T f(x; \theta_0) (\theta - \theta_0).
    \end{equation}
    For brevity, denote $\vec u_t = f_t(\vec x)$ and $Z_t^{ik} = \partial_{\theta_i} f(x_k; \theta_t)$.
    Hence $Z_t \in \RR^{N \times m}$ where $N$ is the total number of parameters and $\vec u_t = \vec u_0 + Z_0^T (\theta_t - \theta_0)$.

    Note that $H_t = Z_t^T Z_t$.
    Recall the train set predictions for constant kernel:
    \begin{equation}
        \vec u_t
        = \vec y + e^{-H_0 t} (\vec u_0 - \vec y).
    \end{equation}
    In our linearized dynamics, the weights evolve as follows:
    \begin{equation}
        \dot\theta_t
        = Z_0 (\vec y - \vec u_t)
        = Z_0 e^{-H_0 t} (\vec y - \vec u_0).
    \end{equation}
    Straightforward integration gives:
    \begin{equation}
        \theta_t
        = \theta_0 + Z_0 H_0^{-1} \left(I - e^{-H_0 t}\right) (\vec y - \vec u_0).
    \end{equation}
    Recalling $H_0 = Z_0^T Z_0$, at the end of training ($t \to \infty$) we get
    \begin{equation}
        \|\theta_\infty - \theta_0\|_2^2
        = (\theta_\infty - \theta_0)^T (\theta_\infty - \theta_0)
        = (\vec y - \vec u_0)^T H_0^{-1} (\vec y - \vec u_0).
    \end{equation}

    Define $\FF_B^{w_{1:n}(0), a_{1:n}}$ as a set of models of the form (\ref{eq:two_layered_ReLU_net_ntk}) with output weights $a_{1:n}$ and input weights $w_{1:n}$ such that $\| W - W(0) \|_F \leq B$ for given $w_{1:n}(0)$.
    The above considerations state that a trained model always lies in $\FF_B^{w_{1:n}(0), a_{1:n}}$ with $B = (\vec y - \vec u_0)^T H_0^{-1} (\vec y - \vec u_0)$.
    
    Hence our training procedure outputs models in a certain set rather than any model in of the form (\ref{eq:two_layered_ReLU_net_ntk}).
    Upper-bounding Rademacher complexity of this model set will give us a generalization bound as we shall see below.
    Let us upper-bound the Rademacher complexity conditioned on a dataset $(\vec x, \vec y)$ of size $m$:
    \begin{multline}
        \Rad{\FF_B^{w_{1:n}(0), a_{1:n}}}{\vec x, \vec y} =
        \EE_{\sigma_{1:m} \sim \{-1,1\}^m} \sup_{f \in \FF_B^{w_{1:n}(0), a_{1:n}}} \left(\frac{1}{m} \sum_{k=1}^m \sigma_k u_k\right) =
        \\=
        \frac{1}{m} \EE_{\sigma_{1:m} \sim \{-1,1\}^m} \sup_{\| W - W(0) \|_F \leq B} \left(\sum_{k=1}^m \sigma_k \frac{1}{\sqrt{n}} \sum_{i=1}^n a_i [w_i^T(0) x_k \geq 0] w_i^{T} x_k\right) =
        \\=
        \frac{1}{m} \EE_{\sigma_{1:m} \sim \{-1,1\}^m} \sup_{\| W - W(0) \|_F \leq B} \left( \vec\sigma^T Z^{T}(0) \theta \right) =
        \\=
        \frac{1}{m} \EE_{\sigma_{1:m} \sim \{-1,1\}^m} \sup_{\| W - W(0) \|_F \leq B} \left( \vec\sigma^T \tilde Z^{T}(0) (\theta - \theta_0) \right) =
        \\=
        \frac{B}{m} \EE_{\sigma_{1:m} \sim \{-1,1\}^m} \| Z(0) \vec\sigma \|_2 \leq
        \frac{B}{m} \sqrt{\EE_{\sigma_{1:m} \sim \{-1,1\}^m} \| Z(0) \vec\sigma \|_2^2} =
        \frac{B}{m} \| Z(0) \|_F.
    \end{multline}
    
    Note that
    \begin{equation}
        \| Z(0) \|_F^2 =
        \frac{1}{n} \sum_{i=1}^n \sum_{k=1}^m [w_i^T(0) x_k \geq 0].
    \end{equation}
    It is an average of i.i.d random variables, which allows for Hoeffding's inequality:
    \begin{equation}
        \PP(\| Z(0) \|_F^2 - \frac{m}{2} \geq \epsilon) \leq 
        e^{-2n \epsilon^2 / m^2}.
    \end{equation}
    This gives w.p. $\geq 1-\delta$ over initialization,
    \begin{equation}
        \| Z(0) \|_F^2 \leq
        \frac{m}{2} + \sqrt{\frac{m^2}{2n} \log\left(\frac{1}{\delta}\right)}.
    \end{equation}
    Finally, we got that w.p. $\geq 1-\delta$ over initialization,
    \begin{equation}
        \Rad{\FF_B^{w_{1:n}(0), a_{1:n}}}{(\vec x, \vec y)} \leq
        \frac{B}{\sqrt{m}} \sqrt{\frac{1}{2} + \sqrt{\frac{1}{2n} \log\left(\frac{1}{\delta}\right)}}.
    \end{equation}

    Consider zero-one risk: $r(y,z) = [y z < 0]$; we have $R(f) = \EE_{x,y \sim \DD} r(y,f(x))$ and $\hat R(f) = \EE_{x,y \in S_m} r(y,f(x))$, correspondingly.
    % Note that we are still minimizing the square loss.
    From the generalization theory, we know that for any $B$ and for any initialization $w_{1:n}(0), a_{1:n}$, w.p. $\geq 1-\tilde\delta$ over the training dataset, $\forall f \in \FF_B^{w_{1:n}(0), a_{1:n}}$,
    \begin{equation}
        R(f) \leq
        \hat R_m(f) + \EE_{(\vec x, \vec y)} \Rad{\FF_B^{w_{1:n}(0), a_{1:n}}}{(\vec x, \vec y)} + \sqrt{\frac{1}{2m} \log \frac{1}{\tilde\delta}}
        \quad
        \text{w.p. $\geq 1 - \tilde\delta$ over $(\vec x, \vec y)$.}
    \end{equation}

    We want to take $B = (\vec y - \vec u_0)^T H_0^{-1} (\vec y - \vec u_0)$ but it depends on the dataset $(\vec x, \vec y)$.
    Take a sequence $\{B_j\}_{j=1}^\infty$ monotonically increasing to infinity and a sequence $\{\tilde\delta_j\}_{j=1}^\infty$ of deltas $\in (0,1)$ that sum to $\tilde\delta$.
    This allows us to apply a union bound: w.p. $\geq 1-\tilde\delta$ over the training dataset, for any initialization $w_{1:n}(0), a_{1:n}$, $\forall j \in \mathbb{N}$, $\forall f \in \FF_{B_j}^{w_{1:n}(0), a_{1:n}}$,
    \begin{equation}
        R(f) \leq
        \hat R_m(f) + \EE_{(\vec x, \vec y)} \Rad{\FF_{B_j}^{w_{1:n}(0), a_{1:n}}}{(\vec x, \vec y)} + \sqrt{\frac{1}{2m} \log \frac{1}{\tilde\delta_j}}.
    \end{equation}
    We are free to choose minimal $j$ such that $B_j \geq (\vec y - \vec u_0)^T H_0^{-1} (\vec y - \vec u_0)$; denote it by $\hat j$.
    Let for definiteness $B_j = j$.
    Then $B_{\hat j} \leq 1 + (\vec y - \vec u_0)^T \hat\Theta_0^{-1} (\vec y - \vec u_0)$.
    
    Putting all together, we have w.p. $\geq 1-\tilde\delta$ over the training dataset, w.p. $\geq 1-\delta$ over initialization,
    \begin{multline}
        R(f(\theta_\infty)) \leq
        \hat R_m(f(\theta_\infty)) + 
        \\+
        \frac{1 + (\vec y - \vec u_0)^T H_0^{-1} (\vec y - \vec u_0)}{\sqrt{m}} \sqrt{\frac{1}{2} + \sqrt{\frac{1}{2n} \log\left(\frac{1}{\delta}\right)}} + \sqrt{\frac{1}{2m} \log \frac{1}{\tilde\delta_{\hat j}}}.
        \label{eq:generalization_bound_for_fixed_acts_model}
    \end{multline}

    Recall that the bound above was obtained under the assumption of constant NTK.
    In order to relax this assumption, one has to show that, possibly for large enough width, $H_t^{-1}$ stays close to $H_0^{-1}$.
    Note that when proving global GD convergence we had to prove that $H_t$ stays close to $H_0$, which is different.
    The required closeness result is proven in \cite{arora2019fine}, it leads to the following theorem:
    \begin{theorem}[\cite{arora2019fine}]
        Under the same setting as \cref{thm:convergence_2layer}, $\exists p, C, C_0 > 0$ such that $\forall \delta \in (0,1)$ taking
        \begin{equation}
            n >
            \max\left(
                C \frac{m^7}{\lambda_0^4 \delta^p}, \;
                C_0 \frac{m^2}{\lambda_0^2} \log\left(\frac{2m}{\delta}\right)
            \right)
        \end{equation}
        guarantees w.p. $\geq 1-\delta$ over the training dataset of size $m$ and w.p. $\geq 1-\delta$ over initialization,
        \begin{multline}
            R(f(\theta_\infty)) \leq
            \hat R_m(f(\theta_\infty)) + 
            \\+
            \frac{1 + (\vec y - \vec u_0)^T \left(H^{\infty}\right)^{-1} (\vec y - \vec u_0)}{\sqrt{m}} \sqrt{\frac{1}{2} + \sqrt{\frac{1}{2n} \log\left(\frac{1}{\delta}\right)}} + \sqrt{\frac{1}{2m} \log \frac{1}{\delta}}.
        \end{multline}
        \label{thm:generalization_2layer}
    \end{theorem}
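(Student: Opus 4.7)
The plan is to bootstrap from inequality~(\ref{eq:generalization_bound_for_fixed_acts_model}), already derived in the excerpt under the assumption that the empirical NTK is constant along the trajectory, and to discharge that assumption together with the replacement of $H_0^{-1}$ by $(H^\infty)^{-1}$ using the very NTK-stability tools already assembled in the proof sketch of \cref{thm:convergence_2layer}.

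First I would verify that the strengthened width lower bound forces, jointly with probability at least $1-\delta$ over initialization and training data and uniformly in $t\geq 0$, the three estimates
\[
\|H_0 - H^\infty\|_2 \leq \lambda_0/4, \qquad \|H_t - H_0\|_2 \leq \lambda_0/4, \qquad \max_i \|w_i(t) - w_i(0)\|_2 \leq R'(\lambda_0, m, n),
\]
which are exactly the first, fourth and third lemmas of the proof sketch recalled earlier. In particular $H_t \geq \lambda_0/2$ along the entire trajectory, so the training residual decays as $\|\vec y - \vec u_t\|_2 \leq e^{-\lambda_0 t/2}\|\vec y - \vec u_0\|_2$, and the standard perturbation identity for positive-definite matrices gives $\|H_t^{-1} - (H^\infty)^{-1}\|_2 \leq (4/\lambda_0^2)\|H_t - H^\infty\|_2$, which is $O(n^{-1/2})$ with high probability.

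Next I would promote the linearized weight trajectory used in the excerpt to the actual nonlinear one by tracking $\theta_\infty - \theta_0$ directly. Integrating $\dot\theta_s = Z_s(\vec y - \vec u_s)$ and using Cauchy--Schwarz together with the residual decay yields
\[
\|\theta_\infty - \theta_0\|_2^2 \leq (\vec y - \vec u_0)^T (H^\infty)^{-1} (\vec y - \vec u_0) + \epsilon_n,
\]
where $\epsilon_n \to 0$ as $n\to\infty$ is controlled by the inverse-stability bound above and by the drift of $Z_s$ away from $Z_0$. This is the fine-grained stability estimate proved in \cite{arora2019fine}, and it is precisely this step that forces the $m^7/\delta^p$ width lower bound: the constant-kernel analysis of the excerpt only needs $H_t \approx H_0$, whereas bounding an inverse over an infinite time horizon requires a strictly stronger concentration.

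Finally, with $\theta_\infty$ certified to lie in $\FF_{B}^{w_{1:n}(0), a_{1:n}}$ for some $B \leq 1 + (\vec y - \vec u_0)^T(H^\infty)^{-1}(\vec y - \vec u_0)$, I would reuse verbatim the Rademacher complexity computation for $\FF_{B}^{w_{1:n}(0), a_{1:n}}$ carried out in the excerpt (it relies only on the explicit form of $Z(0)$ and is independent of the constant-kernel assumption), then invoke the standard Rademacher generalization inequality and the sequence-of-radii union bound already set up in the derivation of~(\ref{eq:generalization_bound_for_fixed_acts_model}). The $O(n^{-1/2})$ slack already present in the Rademacher bound comfortably absorbs $\epsilon_n$ for the width prescribed by the theorem, and the stated inequality follows. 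The main obstacle is unquestionably the second step: transferring stability of the Gram matrix itself to stability of its inverse over an unbounded time horizon, while simultaneously controlling the drift of $Z_s$ away from $Z_0$, is the delicate part of \cite{arora2019fine} and is what ultimately yields the weaker $m^7$ width dependence.
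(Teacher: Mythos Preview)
Your proposal is correct and follows essentially the same route the paper sketches: derive the Rademacher bound~(\ref{eq:generalization_bound_for_fixed_acts_model}) under the constant-kernel assumption, then discharge that assumption by invoking the inverse-stability result $H_t^{-1}\approx (H^\infty)^{-1}$ from \cite{arora2019fine}, which is precisely the step the paper isolates as the additional ingredient and the source of the $m^7/\delta^p$ width requirement. You have also correctly flagged that controlling $\|\theta_\infty-\theta_0\|$ along the true (nonlinear) trajectory, not just the linearized one, is where the delicate work lies.
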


    \section{Standard parameterization and kernel evolution}
    \label{sec:standard_param}

    % \cite{yang2020feature,golikov2020dynamically,fort2020deep}.

    \begin{figure}
        \label{fig:kernel_velocity}
        \includegraphics[width=0.9\textwidth]{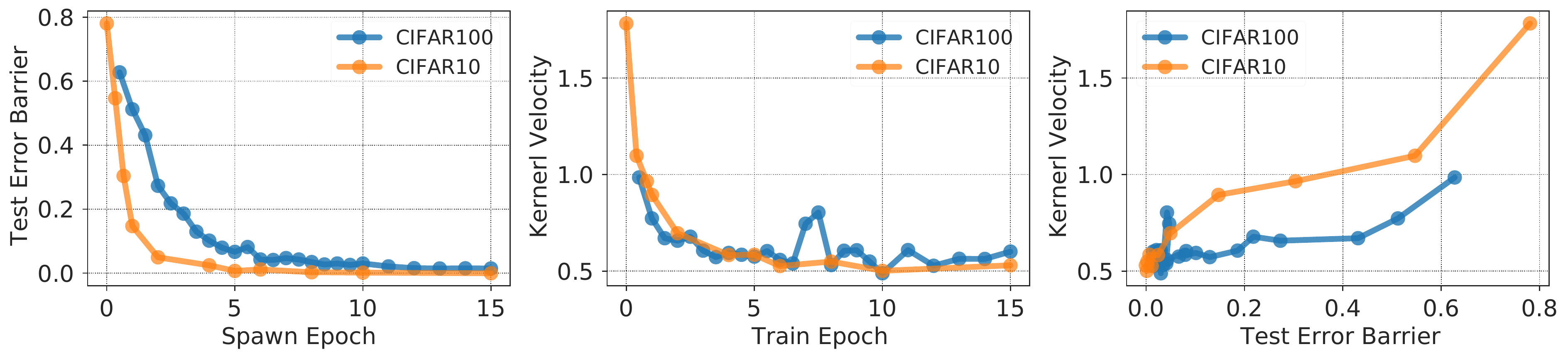}
        \caption{The figure is borrowed from \cite{fort2020deep}.}
    \end{figure}

    As was noted in \cref{sec:convergence}, NTK diverges under standard parameterization.
    Recall the example of a two-layered net:
    \begin{equation}
        f(x; a_{1:n}, w_{1:n})
        = \sum_{i=1}^n a_i \phi(w_i x),
        \quad
        a_{1:n} \sim \NN(0, n^{-1} I),
        \quad
        w_{1:n} \sim \NN(0, I);
        % \label{eq:1_hid_net_standard}
    \end{equation}
    \begin{equation}
        \hat\Theta_t(x,x')
        = \sum_{i=1}^n \left(\phi(w_i(t) x) \phi(w_i(t) x') + a_i^2(t) \phi'(w_i(t) x) \phi'(w_i(t) x') x x'\right).
        % \sim\\\sim n \times \EE_{w \sim \NN(0,1)} \phi(w x) \phi(w x').
    \end{equation}
    At $t=0$, since $w_i$ are independent and of the order of $O(1)$, the sum diverges proportionaly to $n$.
    Since under square loss, $\dot f_t(x) = \hat\Theta_t(x,\vec x) (\vec y - f_t(\vec x))$, the model prediction at any point $x$ receive a $O(n)$ increment at the very beginning of training.
    In other words, model predictions diverge with width, making the model useless for regression.

    However, if the goal is classification, magnitude of predictions does not matter; what matters is their signs for binary classification, or indices of the largest logits when classes are multiple.
    Therefore in this case, an infinite-width limit under standard parameterization still may make sense besides of divergent NTK, see \cite{golikov2020dynamically}.

    In order to deal with divergence, consider a normalized empirical NTK $\tilde\Theta_t(x,x') = \hat\Theta_t(x,x') / n$; its infinite-width limit at initialization is $\EE_{w \sim \NN(0,1)} \phi(w x) \phi(w x')$; we shall refer it as normalized NTK and denote as $\tilde\Theta(x,x')$.
    In contrast to NTK under NTK parameterization, normalized NTK under standard parameterization evolves with time \cite{golikov2020dynamically}:
    \begin{multline}
        \frac{d\tilde\Theta_t(x,x')}{dt}
        = \frac{1}{n} \sum_{i=1}^n \left(\phi(w_i(t) x) \phi'(w_i(t) x') x' + \phi'(w_i(t) x) \phi(w_i(t) x') x\right) \frac{dw_i(t)}{dt} 
        +\\+ \frac{1}{n} \sum_{i=1}^n a_i^2(t) x x' \left(\phi'(w_i(t) x) \phi''(w_i(t) x') x' + \phi''(w_i(t) x) \phi(w_i(t) x') x\right) \frac{dw_i(t)}{dt}
        +\\+ \frac{1}{n} \sum_{i=1}^n 2 a_i(t) \phi'(w_i(t) x) \phi'(w_i(t) x') x x' \frac{da_i(t)}{dt}.
    \end{multline}
    Recall the gradient flow dynamics under standard parameterization:
    \begin{equation}
        \frac{a_k(t)}{dt} 
        = \sum_{j=1}^m \phi(w_k(t) x_j),
        \quad
        \frac{w_k(t)}{dt} 
        = \sum_{j=1}^m a_k(t) \phi'(w_k(t) x_j) x_j.
    \end{equation}
    At $t=0$, we have $\dot a_k = O(1)$, while $\dot w_k = O(n^{-1/2})$.
    Since $a_k(0) = O(n^{-1/2})$ and $w_k(0) = O(1)$, it means that for any $t > 0$ independent on $n$, $a_k(t) = O(1)$, $\dot a_k(t) = O(1)$, $w_k(t) = O(1)$, and $\dot w_k(t) = O(1)$.
    A naive estimate of the sums then gives $\frac{d\tilde\Theta_t(x,x')}{dt} = O(1) + O(1) + O(1) = O(1)$ for any $t > 0$ independent on $n$.
    Therefore the normalized kernel keeps evolving with time even in the limit of infinite width.
    
    This can be the reason for superior performance of neural networks to conventional kernel methods and NTK.
    A kernel measures similarity between points in a feature space.
    While for NTK this feature space is fixed, a neural net varies its corresponding kernel feature space, hopefully making it better suitable for the task at hand; moreover, under standard parameterization, this feature does not vanish for large width.

    The way an empirial NTK varies with time can be measured with kernel velocity, defined as kernel distance between the kernels corresponding to two consequent optimization steps.
    Kernel distance is in its turn defined as one minus cosine similarity between Gram matrices $H$ and $H'$ of the corresponding kernels:
    \begin{equation}
        \rho(H, H') = 1 - \frac{\tr(H H^{\prime,T})}{\sqrt{\tr(H H^T) \tr(H' H^{\prime,T})}}.
    \end{equation}

    After measuring kernel velocity for a realistic net under standard parameterization, \cite{fort2020deep} distinguished two phases of training: a phase of rapid kernel evolution, and a phase of almost constant NTK, see Figure~\ref{fig:kernel_velocity}.
    The first phase is called \emph{chaotic}, while the second one is coined \emph{ordered}.
    Curiously enough, these two phases can be distinguished not only by kernel velocity.
    Suppose the network is trained up to time $T$, called \emph{spawn epoch}.
    Two independent copies of the same network is then trained further.
    In other words, we train two networks which remain the same up to time $T$ and may diverge afterwards due to randomness of training procedure.
    We then measure \emph{test error barrier} between these two networks, i.e. height of the error "hill" on a straight segment between their corresponding weights.
    A small error barrier would mean that training of the two networks ended up in the same valley of test error, which likely means that they are similar.
    As one can see in Figure~\ref{fig:kernel_velocity}, the test error barrier drops dramatically with growth of spawn epoch.
    Also, the two quantities under discussion, kernel velocity and error barrier appear to be strongly correlated, see again Figure~\ref{fig:kernel_velocity}.
    There are also other quantities that experience sharp transition on the border of the two phases: kernel distance between child networks as a function of spawn epoch, ReLU activation Hamming distance, and Hamming distance between responses on the test set; see \cite{fort2020deep} for details.

    \section{Beyond NTK}
    \label{sec:beyond}

    % \begin{itemize}
    %     \item \cite{shankar2020neural}: Compositional kernels;
    %     \item \cite{chen2020label}: Label-aware NTK.
    % \end{itemize}

    While NTK kernel regression has a natural interpretation of training an infinitely wide neural network under certain parameterization with gradient flow (see \cref{sec:convergence}), NTK is not the only possible kernel that can be constructed using a neural net.

    \subsection{NNGP kernel}

    One of the other notable "neural kernels" is the NNGP-kernel \cite{lee2018deep}, defined as $K(x,x') = \EE_\theta f(x; \theta) f(x'; \theta)$, where $f(\cdot; \theta)$ is a parametric model with weights $\theta$ and scalar output.
    Suppose $f$ is a neural network with the output layer of the form $f(x) = v^T h(x)$, where $h(x) \in \RR^n$ is its last layer representation and $v \sim \NN(0, I_n / n)$ independent on $h$.
    Then $K(x,x') = \frac{1}{n} \EE h^T(x) h(x')$.
    As we have seen in \cref{sec:limit} on the example of fully-connected and convolutional nets, the last layer representations tend to iid Gaussians as width go to infinity.
    In other words, $\forall i \in [n]$ $h^i$ tend to identical and independent Gaussian processes with covariance $\EE h^i(x) h^i(x') = \frac{1}{n} \EE h^T(x) h(x')$, which is exactly $K(x,x')$.
    This motivates the term "NNGP" --- \emph{Neural Network Gaussian Process}.
    
    Note that we have already seen the object $\EE h^i(x) h^i(x')$ in \cref{sec:limit}: when $h = h_l$ --- the $l$-th layer hidden representation of a fully-connected network, the above object is hidden layer covariance $q_l(x,x')$.
    Therefore the NNGP of this fully-connected network is nothing else but $q_L(x,x')$.
    This can be generalized to the whole class of architectures expressible by tensor programs: see the Master theorem of \cite{yang2019tensor_i} mentioned in \cref{sec:convergence}.
    That is, any neuron of any hidden representation of a neural network expressible by a tensor program tends to a Gaussian process.

    Learning a Gaussian process with zero mean and covariance $K(\cdot,\cdot)$ on a training dataset $(\vec x, \vec y)$ means computing its Bayesian prosterior, which is again a Gaussian with mean $\mu(\cdot \,|\, (\vec x, \vec y))$ and covariance $K(\cdot,\cdot \,|\, (\vec x, \vec y))$ given below:
    \begin{equation}
        \mu(x \,|\, (\vec x, \vec y)) = K(x,\vec x) K^{-1}(\vec x, \vec x) \vec y;
    \end{equation}
    \begin{equation}
        K(x,x' \,|\, (\vec x, \vec y)) = K(x,x') - K(x,\vec x) K^{-1}(\vec x, \vec x) K(\vec x,x').
    \end{equation}
    
    Interestingly, training the last layer of an infinitely wide network with NNGP $K(\cdot,\cdot)$ results in exactly the same Gaussian process.
    When only the last layer is trained, the NNGP coincides with the NTK.
    Indeed, an NTK-parameterized NN of width $n$ with readout weights $v$ can be expressed as $f(x) = \frac{1}{\sqrt{n}} v^T h(x)$ with $v \sim \NN(0, I_n)$.
    The empirical NTK is therefore given by $\hat\Theta_0(x,x') = \frac{1}{n} \nabla^T_v (v^T h(x)) \nabla_v (v^T h(x')) = \frac{1}{n} h^T(x) h(x')$, which converges to $\EE h^i(x) h^i(x') = K(x,x')$ as $n \to \infty$; note that $h(\cdot)$ also depends on $n$.
    
    Recall the model prediction dynamics under constant NTK which is $K$ in our case:
    \begin{equation}
        f_t(x) = f_0(x) - K(x,\vec x) K^{-1}(\vec x,\vec x) \left(I - e^{-K(\vec x,\vec x) t}\right) (f_0(\vec x) - \vec y).
    \end{equation}
    Since $f_0(\cdot)$ is a Gaussian process as discussed before and $K(\vec x,\vec x)$ is deterministic, $f_t(\cdot)$ is a Gaussian process for any $t \geq 0$.
    Its mean $\mu_t(\cdot)$ and covariance $K_t(\cdot,\cdot)$ are:
    \begin{equation}
        \mu_t^{NNGP}(x) = K(x,\vec x) K^{-1}(\vec x,\vec x) \left(I - e^{-K(\vec x,\vec x) t}\right) \vec y;
    \end{equation}
    \begin{multline}
        K_t^{NNGP}(x,x') 
        = K(x,x') 
        +\\+ K(x,\vec x) K^{-1}(\vec x,\vec x) \left(I - e^{-K(\vec x,\vec x) t}\right) K(\vec x,\vec x) \left(I - e^{-K(\vec x,\vec x) t}\right) K^{-1}(\vec x,\vec x) K(\vec x,x') 
        -\\- \left[K(x,\vec x) K^{-1}(\vec x,\vec x) \left(I - e^{-K(\vec x,\vec x) t}\right) K(\vec x,x') + K(x',\vec x) K^{-1}(\vec x,\vec x) \left(I - e^{-K(\vec x,\vec x) t}\right) K(\vec x,x)\right].
    \end{multline}
    It is easy to see that $\mu_t^{NNGP}(x) \to \mu(x \,|\, (\vec x, \vec y))$ and $K_t^{NNGP}(x,x') \to K(x,x' \,|\, (\vec x, \vec y))$ as $t \to \infty$ $\forall x,x'$.

    If not only the last layer is trained, NNGP does not generally correspond to NTK.
    The corresponding training dynamics is given by
    \begin{equation}
        f_t(x) = f_0(x) - \Theta(x,\vec x) \Theta^{-1}(\vec x,\vec x) \left(I - e^{-\Theta(\vec x,\vec x) t}\right) (f_0(\vec x) - \vec y).
    \end{equation}
    While $f_t(\cdot)$ is again a Gaussian process for any $t \geq 0$, its mean and covariance are different.
    In particular, as $t \to \infty$, they tend to
    \begin{equation}
        \mu_\infty^{NTK}(x) = \Theta(x,\vec x) \Theta^{-1}(\vec x,\vec x) \vec y;
    \end{equation}
    \begin{multline}
        K_\infty^{NTK}(x,x') 
        = K(x,x') 
        + \Theta(x,\vec x) \Theta^{-1}(\vec x,\vec x) K(\vec x,\vec x) \Theta^{-1}(\vec x,\vec x) \Theta(\vec x,x') 
        -\\- \left[\Theta(x,\vec x) \Theta^{-1}(\vec x,\vec x) K(\vec x,x') + \Theta(x',\vec x) \Theta^{-1}(\vec x,\vec x) K(\vec x,x)\right].
    \end{multline}
    As was shown in \cite{lee2019wide}, there does not exist an initial covariance matrix (a "prior") such that these mean and covariance correspond to Bayesian posterior given the training data.

    The "empirical" counterpart of NNGPs is $\hat K(x,x') = \frac{1}{n} h^T(x) h(x')$.
    Compared to empirical NTKs, empirical NNGPs are easier to compute as they do not require a backward pass.
    The corresponding memory footprint is also lower for empirical NNGPs as they do not require computing Jacobian matrices that scale as $O(N)$ where $N$ is the number of weights.
    This makes NNGPs more suitable for large models.
    As an example, \cite{park2020towards} used performance of empirical NNGPs as a proxy measure for neural architecture search.
    They argue that first, empirical NTKs are too costly to compute, and second, they provide worse learning signal for their task.

    NNGP of a generic neural network can be computed in a recursive manner, as was demonstrated in \cref{sec:limit} on the example of fully-connected and convolutional nets: $q_{l+1}(x,x') = \EE_{[z,z']^T \sim \NN(0,\Sigma_l(x,x'))} \phi(z) \phi(z')$, where $\Sigma_l(x,x') = \begin{pmatrix} q_l(x,x) & q_l(x,x') \\ q_l(x',x) & q_l(x',x') \end{pmatrix}$; the Master theorem of \cite{yang2019tensor_i} gives similar fomulas for a generic neural net.
    In the above example, there is an operation that maps a kernel $q_l(x,x')$ to a subsequent kernel $q_{l+1}(x,x')$.
    \cite{shankar2020neural} presents an algebra of operations on kernels.
    While this algebra consists of operations of only three types, it is enough to express NNGP of a fully-connected or a convolutional network with any elementwise nonlinearities.

    \subsection{Label-aware NTK}

    One of the major problems of kernel methods is \emph{label agnosticism.}
    Recall that a kernel evaluated at a pair of points is a scalar product of their mappings to some feaure space: $K(x,x') = \langle \Phi(x), \Phi(x') \rangle$.
    Therefore a kernel measures how similar the two points are, and a kernel method uses this information to derive responses on unseen data: $f(x) = K(x,\vec x) \vec\alpha$.
    Intuitively, a kernel $K$ should result in a good-generalizing model if $K(x,x')$ is positive when $y=y'$ and negative otherwise.
    Therefore the "perfect" kernel would be $K^*(x,x') = y y'$; the obvious problem is that it cannot be computed on unseen data.

    A kernel that can be computed on unseen data cannot depend on labels.
    Therefore, if data has several possible labelings, for a pair of data points $(x,x')$, there could be a labeling with $y=y'$ and a labeling with $y\neq y'$.
    At the same moment, $K(x,x')$ stays the same on both cases; therefore, the corresponding kernel method cannot generalize well on both of the labelings.

    As an example of several possible labelings on a single dataset, consider a dataset of pictures with two objects in each frame, and let the two objects belong to two disjoint sets of classes.
    Then one of the labelings may consider only the objects of the first classes set, while the other may consider the objects of the second set.

    \cite{chen2020label} propose two ways of making a kernel \emph{label-aware.}
    The first is mixing the kernel at hand with the perfect kernel $K^*(x,x') = y y'$: $K^{HR}(x,x') = (1-\lambda) K(x,x') + \lambda K^*(x,x')$ for $\lambda \in [0,1]$.
    If the perfect kernel was available, the best choice would be to take $\lambda=1$.
    Since it is not available, we have to approximate it somehow, therefore making the optimal $\lambda$ to become less than one.

    In order to approximate $K^*(x,x')$, we need a model that maps $(x,x')$ to $y y'$.
    Since the training dataset for this model consists $O(m^2)$ samples, and since the model itself has to be evaluated on $O(m)$ samples for each test point $x$, the model has to be relatively simple.
    \cite{chen2020label} consider models of the form $Z(x,x') = \vec y^T M(x,x',\vec x) \vec y$, where $M \in \RR^{m \times m}$.
    One of the possible choices of $M$ is $M(x,x',\vec x)_{ij} = \psi(K(x,x'),K(x_i,x_j))$, where $\psi(z_1,z_2)$ measures similarity.
    As one can see, this choice of $Z$ takes a linear combination of $y_i y_j$ with weights being similarities of $K(x,x')$ and $K(x_i,x_j)$.
    Intuitively, this reads as "$y y'$ and $y_i y_j$ are similar if $K(x,x')$ and $K(x_i,x_j)$ are close".

    While the above proposal can be applied to any kernel $K$, the second label-aware kernel of \cite{chen2020label} is a specific modification of NTK.
    Let us recall the construction of $\Theta^{NTH}$ resulted from integrating the learning dynamics up to the order $n^{-1}$, taking the limit of $t \to \infty$, and taking expectation (see \cref{sec:finite_width} and specifically Eq.~(\ref{eq:lantk_nth})):
    \begin{multline}
        \Theta^{NTH}(x_1,x_2)
        = O_{2,0}^{(0)}(x_1,x_2) + n^{-1} \EE O_{2,\infty}^{(1)}(x_1,x_2)
        =\\= \Theta(x_1,x_2) + n^{-1} \EE\left[O_{2,0}^{(1)}(x_1,x_2)\right] - n^{-1} \EE\left[O_{3,0}^{(1)}(x_1, x_2, \vec x) \Theta^{-1}(\vec x,\vec x) f_0^{(0)}(\vec x)\right]
        +\\+ n^{-1} \vec y^T \Theta^{-1}(\vec x,\vec x) \EE\left[O_{4,0}^{(1)}(x_1, x_2, \vec x, \vec x)\right] \Theta^{-1}(\vec x,\vec x) \vec y
        +\\+ n^{-1} \EE\left[f_0^{(0),T}(\vec x) \Theta^{-1}(\vec x,\vec x) O_{4,0}^{(1)}(x_1, x_2, \vec x, \vec x) \Theta^{-1}(\vec x,\vec x) f_0^{(0)}(\vec x)\right]
        -\\- n^{-1} \sum_{k,l=1}^m \frac{1}{\lambda_k (\lambda_k+\lambda_l)} \vec y^T \vec v_k \vec v_k^T \EE\left[O_{4,0}^{(1)}(x_1, x_2, \vec x, \vec x)\right] \vec v_l \vec v_l^T \vec y
        -\\- n^{-1} \sum_{k,l=1}^m \frac{1}{\lambda_k (\lambda_k+\lambda_l)} \EE\left[f_0^{(0),T}(\vec x) \vec v_k \vec v_k^T O_{4,0}^{(1)}(x_1, x_2, \vec x, \vec x) \vec v_l \vec v_l^T f_0^{(0)}(\vec x)\right].
    \end{multline}
    Since $\hat\Theta_0(x_1,x_2) = O_{2,0}^{(0)}(x_1,x_2) + n^{-1} O_{2,0}^{(1)}(x_1,x_2) + O(n^{-2})$, we have $\Theta(x_1,x_2) + n^{-1} \EE\left[O_{2,0}^{(1)}(x_1,x_2)\right] = \EE\hat\Theta_0(x_1,x_2) + O(n^{-2})$ and $\Theta(x_1,x_2) = \EE\hat\Theta_0(x_1,x_2) + O(n^{-1})$.
    For the same reason, $\EE\left[O_{4,0}(x_1, x_2, x_3, x_4)\right] = n^{-1} \EE\left[O_{4,0}^{(1)}(x_1, x_2, x_3, x_4)\right] + O(n^{-2})$.
    Suppose $f_0^{(0)}(\vec x) = 0$.
    Given this approximation, up to order $O(n^{-2})$,
    \begin{multline}
        \Theta^{NTH}(x_1,x_2)
        \approx \EE\hat\Theta_0(x_1,x_2) + \vec y^T \left(\EE\hat\Theta_0(\vec x,\vec x)\right)^{-1} \EE\left[O_{4,0}(x_1, x_2, \vec x, \vec x)\right] \left(\EE\hat\Theta_0(\vec x,\vec x)\right)^{-1} \vec y
        -\\- \sum_{k,l=1}^m \frac{1}{\lambda_k (\lambda_k+\lambda_l)} \vec y^T \vec v_k \vec v_k^T \EE\left[O_{4,0}(x_1, x_2, \vec x, \vec x)\right] \vec v_l \vec v_l^T \vec y.
    \end{multline}
    As one can see, $\Theta^{NTH}(x_1,x_2)$ depends on train labels $\vec y$.
    Roughly speaking, this kernel corresponds to the NTK of a network trained until convergence ($t\to\infty$); obviously, this kernel should depend on training data.

    As an interesting observation $\Theta^{NTH}(x_1,x_2) = \EE\hat\Theta_0(x_1,x_2) + \vec y^T M(x_1,x_2,\vec x) \vec y$ for a certain matrix $M$ --- recall that $K^{(HR)}(x_1,x_2)$ considered previously has a similar form.

    Note that computing the Gram matrix $\Theta^{NTH}(\vec x, \vec x)$ requires computing the Gram "matrix" of the expected 4-th order empirical kernel $\EE\left[O_{4,0}(\vec x, \vec x, \vec x, \vec x)\right]$.
    Instantiating this tensor requires $O(m^4)$ time and $O(m^4)$ memory which is only possible for very small datasets.

    \section{Limits of applicability}
    \label{sec:experiments}

    \begin{figure}
        \centering
        \includegraphics[width=0.9\textwidth]{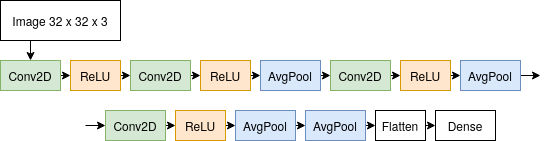}
        \caption{
            \label{fig:myrtle}
            Myrtle architecture.
            % See \url{https://myrtle.ai/how-to-train-your-resnet-4-architecture/}.
        }
    \end{figure}

    \begin{figure}
        \centering
        \includegraphics[width=0.49\textwidth]{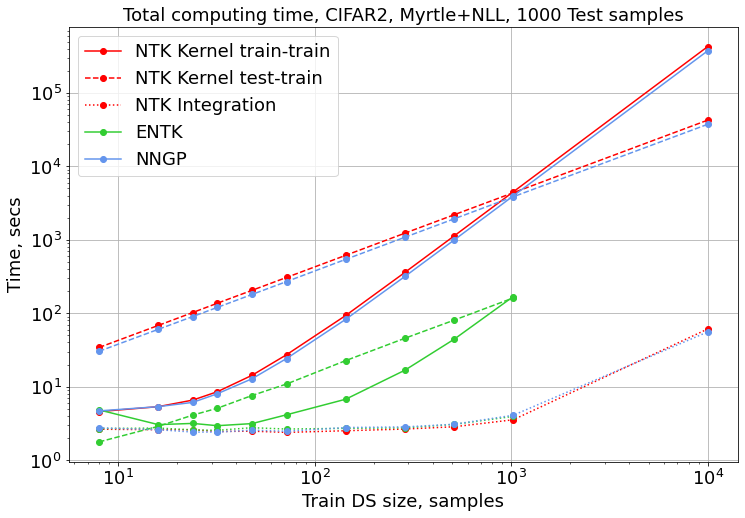}
        \includegraphics[width=0.49\textwidth]{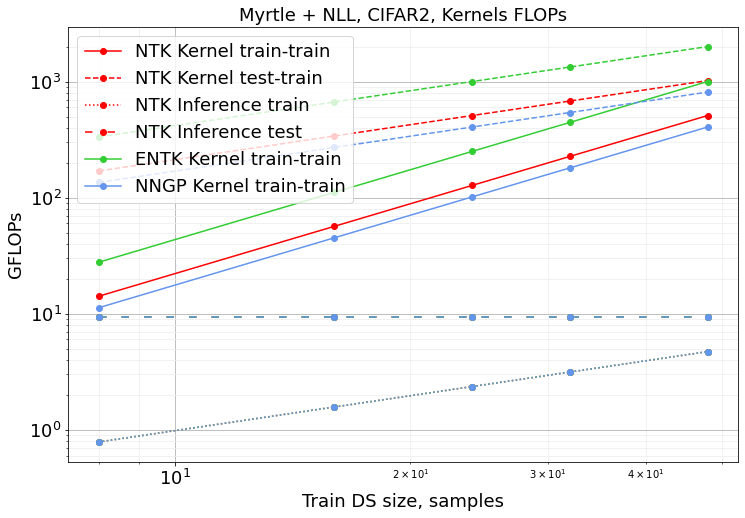}
        % \\
        % \includegraphics[width=0.49\textwidth]{}
        % \includegraphics[width=0.49\textwidth]{}
        \caption{
            \label{fig:myrtle_bce_cifar2_time_flops}
            Myrtle network trained on subsets of CIFAR2 of different sizes.
            Different lines refer to different regimes of training (e.g. NTK, NNGP etc.) and different stages of training (e.g. cosntructing the kernel, integrating the dynamics etc.).
            We use BCE loss, and integrate the dynamics numerically for $T=10^4$ steps.
            We measure training time and number of FLOPS.
        }
    \end{figure}

    \begin{figure}
        \centering
        \includegraphics[width=0.49\textwidth]{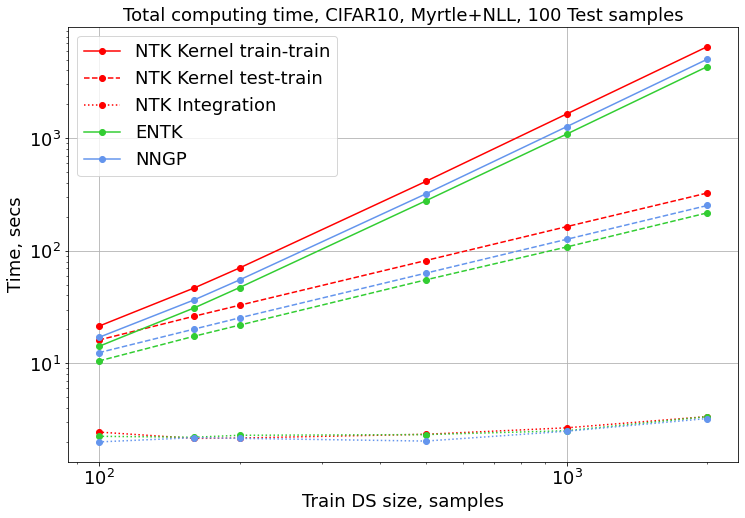}
        \includegraphics[width=0.49\textwidth]{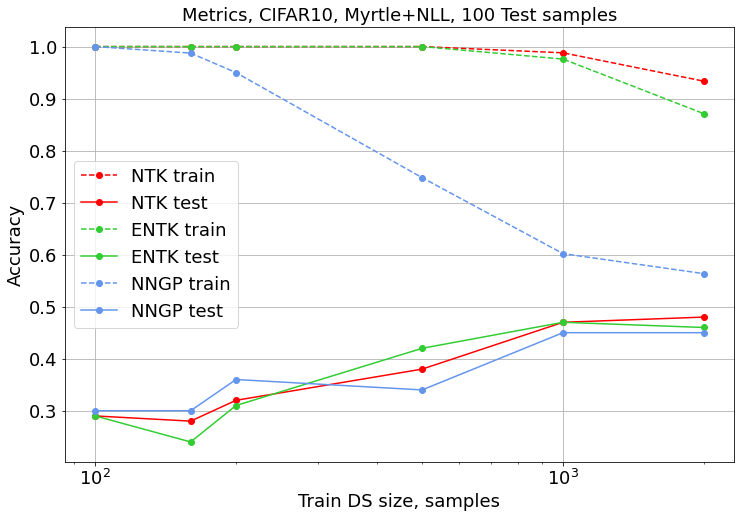}
        \caption{
            \label{fig:myrtle_bce_cifar10_time_accuracy}
            Myrtle network trained on subsets of CIFAR10 of different sizes.
            Different lines refer to different regimes of training (e.g. NTK, NNGP etc.) and different stages of training (e.g. cosntructing the kernel, integrating the dynamics etc.).
            We use cross-entropy loss, and integrate the dynamics numerically for $T=10^4$ steps.
            We measure training time and accuracy.
        }
    \end{figure}

    \begin{figure}
        \centering
        \includegraphics[width=0.49\textwidth]{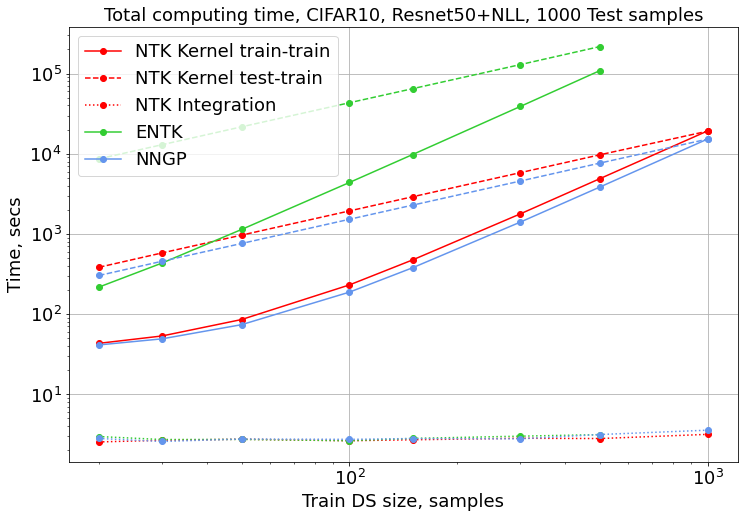}
        \includegraphics[width=0.49\textwidth]{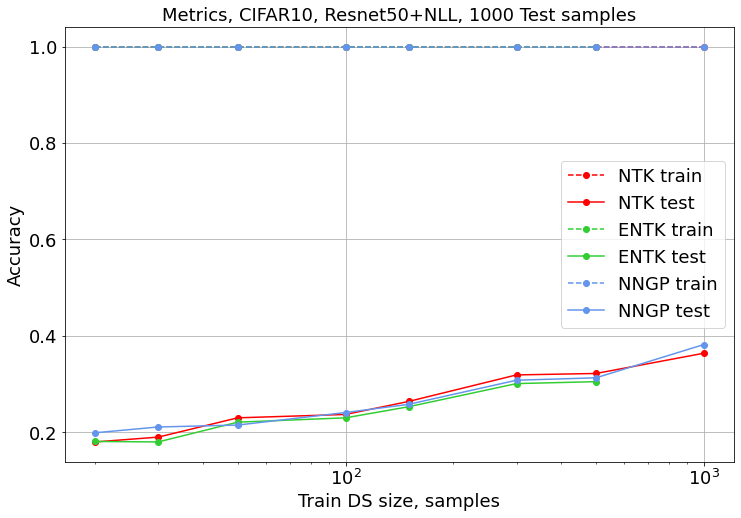}
        \caption{
            \label{fig:resnet50_nll_cifar10_time_accuracy}
            Resnet50 trained on subsets of CIFAR10 of different sizes.
            Different lines refer to different regimes of training (e.g. NTK, NNGP etc.) and different stages of training (e.g. cosntructing the kernel, integrating the dynamics etc.).
            We use cross-entropy loss, and integrate the dynamics numerically for $T=10^4$ steps.
            We measure training time and accuracy.
        }
    \end{figure}

    \begin{figure}
        \centering
        \includegraphics[width=0.49\textwidth]{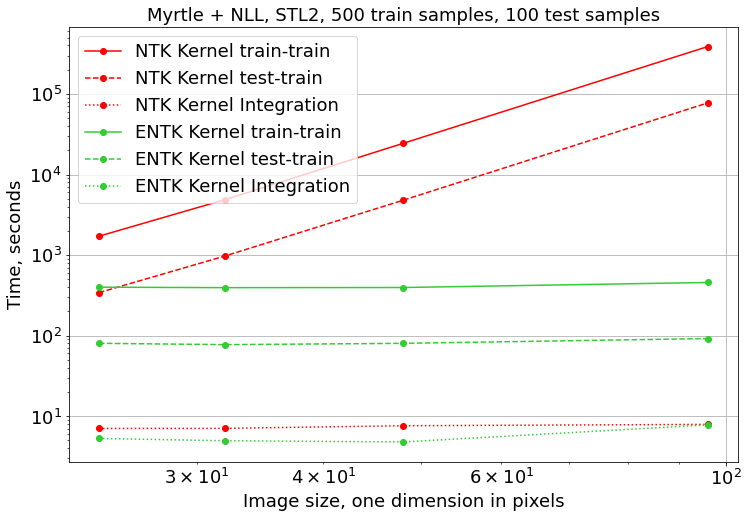}
        \includegraphics[width=0.49\textwidth]{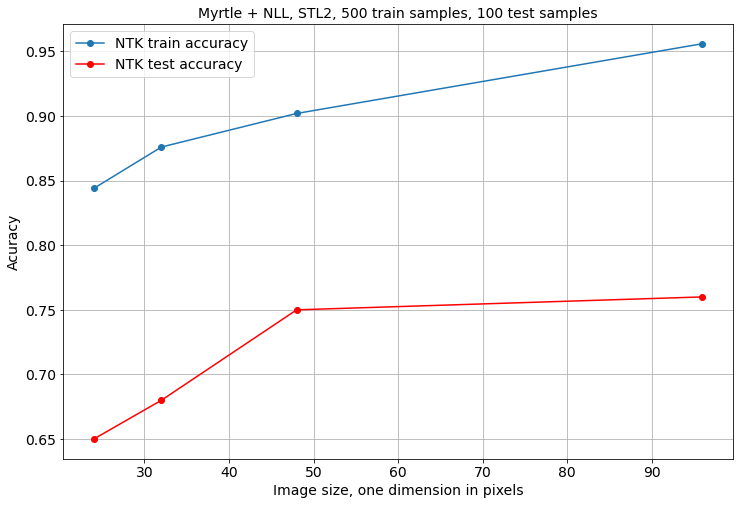}
        \caption{
            \label{fig:myrtle_bce_stl2_time_accuracy}
            Myrtle network trained on a subset of STL2 of size 500 with images of different resolutions.
            Different lines refer to different regimes of training (e.g. NTK, NNGP etc.) and different stages of training (e.g. cosntructing the kernel, integrating the dynamics etc.).
            We use BCE loss, and integrate the dynamics numerically for $T=10^4$ steps.
            We measure training time and accuracy.
        }
    \end{figure}
    
    In this section, we present a small experimental study on scope of applicability for NTK regression to real-time scenarios.
    In particular, we would like to investigate first, what is the maximal size $m$ of training dataset of images of given size we can afford with limited computational resources.
    Second, what is the maximal image resolution $d$ we can afford given fixed dataset size.
    We restrict ourselves to these two questions since for practical purposes, dependence of NTK regression complexity on these two parameters is the most worrying: it is $O(m^2 d^4)$ for constructing the Gram matrix, $O(m^3)$ for integrating the dynamics analytically, and $O(m^2 T)$ for integrating the dynamics numerically for $T$ steps; see \cref{sec:computations}.

    We use NeuralTangents \cite{novak2019neural} and perform all our experiments on a single GTX 1080Ti GPU with 12 GiB of memory.
    We consider a Myrtle network\footnote{\url{https://myrtle.ai/how-to-train-your-resnet-4-architecture/}} with 64 channels in all convolutional layers, see \cref{fig:myrtle}.
    We pick this architecture because it is lightweight and uses only those layers for which NTK can be computed analytically.

    For the first experiment, we consider two classes of CIFAR10 and refer this dataset as CIFAR2.
    We pick a subset of 1000 samples of the original test set of CIFAR2 and vary the size of the training subset.
    We optimize binary cross-entropy (BCE) and integrate the dynamics numerically for $T = 10^4$.
    We compute the Gram matrix of a kernel using batch size 4.
    On \cref{fig:myrtle_bce_cifar2_time_flops}, we plot training time and the number of floating-point operations (FLOPS) for different stages (i.e. Gram matrix computation, integrating the dynamics, inference on a test set) and for different regimes of training (analytical NTK, analytical NNGP, and empirical NTK) versus size of training dataset.
    As one can see, already for relatively small datasets ($m=10^4$), the most time-demanding stage is construction of the Gram matrix $\Theta(\vec x, \vec x)$ (solid line), but not integration (which is also takes time quadratic to size of the dataset) (dotted line).
    % This time is several orders of magnitude larger than the one of conventional Adam training (black line), while accuracy is comparable.
    Also, the time to compute the NNGP kernel is almost the same as the one for NTK, since both are computed analytically; see \cref{sec:limit}.
    We could not obtain the point $m=10^4$ for empirical NTK (ENTK) due to numerical reasons.
    If we extrapolate the solid line to $m=10^6$, the size of ImageNet, noting the quadratic growth, we will get $5 \times 10^9$ seconds, which is around 160 years of computations.
    While our time measurements are device-dependent, we also measure the number of FLOPS, which while being device-independent, grows the same way as time and is also quite large.
    This experiment demonstrates that indeed, the naive approach for integrating the NTK dynamics falls short on datasets of realistic sizes, thus striving for major optimizations.
    As mentioned in \cref{sec:computations}, a promising approach could be the one of \cite{meanti2020kernel}.

    On \cref{fig:myrtle_bce_cifar10_time_accuracy}, we present the same experiment but with all 10 classes of CIFAR10.
    We observe the same quadratic time growth issue for all three regimes of training (analytical NTK, analytical NNGP, and empirical NTK).
    We also report accuracy for comparison with previous works on small data training with kernel methods (i.e. \cite{arora2019harnessing}).

    In addition to experiments with a small network, we experimented with a variant of Resnet50 \cite{he2016deep}.
    We modify this architecture by removing batch normalizations and substituting max poolings with average poolings, so to make analytical computations possible.
    Results are shown on \cref{fig:resnet50_nll_cifar10_time_accuracy}.
    Doing the same extrapolation to ImageNet size, we get $6.25 \times 10^{11}$ seconds, which is around $20000$ years.

    % On \cref{fig:myrtle_bce_cifar2_time_accuracy}, we also plot sizes of different objects created during the whole process of computations as functions of train dataset size.
    % These measurements aim to answer the following question: "How much the "NTK model" weights?"
    % In other words, if an NTK dynamics is integrated on a server while being supposed to be used on another machine, what should we transfer to this machine?
    % Of course, we need to transfer the Gram matrix $\Theta(\vec x, \vec x)$ (blue line), but also the train targets $\vec y$ (brown line) and the initial model predictions on the train set $f_0(\vec x)$ (green line).
    % Note also that we have to be able to compute the kernel $\Theta(x, \vec x)$ on any possible point $x$.
    % For this reason, we have to transfer the whole train dataset as well (purple line).
    % Note that conventional gradient descent training requires transfering only network weights (pink line).
    % As one can see, for our simple network, the Gram matrix $\Theta(\vec x, \vec x)$ starts weighting more than network's weights already at $m=400$ and grows quadratically with $m$, while size of weights stays constant with $m$.
    % This aspect of size therefore can also be a serious bottleneck for practical applications of NTK.

    Lastly, we consider two classes of STL10 and similarly to CIFAR2, refer this dataset as STL2.
    We pick a subset of 100 samples of the original test set of STL2 and 500 samples of its original train set.
    While STL10 has fewer labeled examples compared to CIFAR10, it has larger images: $96 
    \times 96$ for STL10 versus $32 \times 32$ for CIFAR10.
    We vary size of the input image and measure training time and accuracy, similarly to the first experiment.
    As before, we optimize binary cross-entropy (BCE) and integrate the dynamics numerically for $T = 10^4$.
    However, we use batch size 1 for computing the Gram matrix, since larger batch sizes do not fit in GPU memory for large image sizes.
    Results are shown on \cref{fig:myrtle_bce_stl2_time_accuracy}.
    As before, the most time-demanding part is kernel Gram matrix computation (blue line): it grows as $O(d^4)$, where $d$ is image resolution; see \cref{sec:limit}.
    If we extrapolate this line to $d=224$, the resolution on which traditional ImageNet classification models operate, we will get around 150 days of computations.
    This experiment therefore demonstrates that not only dataset size, but also image resolution complexity can also be a serious bottleneck in applying NTK approach in practice.
    Also, while for dataset size, certain optimizations are available (e.g. \cite{meanti2020kernel}), we are not aware of any optimizations aiming for decreasing image resolution complexity.

    \section{Conclusions}

    The use of NTK theory is twofold: first, it relates neural networks to kernel methods, a far more well-developped class of models.
    Second, it gives a machine learning practitioner a kernel that shares some properties with neural nets.
    
    Recall what we have concerning the first application.
    We have a theorem (\cref{thm:master_theorem}) that implies that a neural tangent kernel of a wide class of architectures is deterministic and does not evolve with time in the limit of infinite width, and provides a recurrent formula for the limit.
    Therefore a network that is wide enough should share some properties, i.e. convergence and generalization, see \cref{sec:app_theory}, with the corresponding kernel method.
    However, the resulting width bounds are far from realistic.
    Second, the limit kernel does not evolve with time only under certain non-standard parameterization rarely used in practice.
    In contrast, standard parameterization results in evolving (normalized) kernel, see \cref{sec:standard_param}.
    The fact that the kernel evolves may be the key to understanding superior performance of neural nets to kernel methods.
    Unfortunately, we have little understanding of this aspects at the moment.
    Lastly, \cref{thm:master_theorem} requires Gaussian weight initialization rarely used in practice.
    Generalizing it to non-Gaussian weight distribution remains to be done in the future.

    Let us discuss the second application.
    At the moment of writing, computing the exact limit kernel was available only for convolutional and fully-connected networks with average poolings and nonlinearities in a certain class, see \cref{sec:computations}.
    For other architectures, one has to rely on empirical NTK which is a biased estimate of the limit one.
    Computing the empirical NTK requires instantiating output-by-weight jacobians at every pair of training points, which is especially memory risky for realistically large architectures.
    Storing the Gram matrix of the kernel also requires $O(m^2)$ memory where $m$ is dataset size.
    Even if the kernel is sucessfully computed on every pair of training points, integrating the training dynamics naively requires inverting the Gram matrix, which costs $O(m^3)$ time, while for datasets of size $10^6$ one can barely afford more than $O(m)$ time and memory.
    We study applicability limits of this naive approach in \cref{sec:experiments}.
    Still, certain optimization are available, see \cref{sec:computations}.

    Also concerning the second application, NTK is not the only kernel that can be constructed using a neural network; certain other kernels may have computational or performance gains compared to NTK, see \cref{sec:beyond}.

    \bibliography{references}

\begin{thebibliography}{}

\bibitem[Allen-Zhu et~al., 2019]{allen2019convergence}
Allen-Zhu, Z., Li, Y., and Song, Z. (2019).
\newblock A convergence theory for deep learning via over-parameterization.
\newblock In {\em International Conference on Machine Learning}, pages
  242--252. PMLR.

\bibitem[Arora et~al., 2019a]{arora2019fine}
Arora, S., Du, S., Hu, W., Li, Z., and Wang, R. (2019a).
\newblock Fine-grained analysis of optimization and generalization for
  overparameterized two-layer neural networks.
\newblock In {\em International Conference on Machine Learning}, pages
  322--332.

\bibitem[Arora et~al., 2019b]{arora2019exact}
Arora, S., Du, S.~S., Hu, W., Li, Z., Salakhutdinov, R.~R., and Wang, R.
  (2019b).
\newblock On exact computation with an infinitely wide neural net.
\newblock In {\em Advances in Neural Information Processing Systems}, pages
  8141--8150.

\bibitem[Arora et~al., 2019c]{arora2019harnessing}
Arora, S., Du, S.~S., Li, Z., Salakhutdinov, R., Wang, R., and Yu, D. (2019c).
\newblock Harnessing the power of infinitely wide deep nets on small-data
  tasks.
\newblock {\em arXiv preprint arXiv:1910.01663}.

\bibitem[Bietti and Mairal, 2019]{bietti2019inductive}
Bietti, A. and Mairal, J. (2019).
\newblock On the inductive bias of neural tangent kernels.
\newblock {\em arXiv preprint arXiv:1905.12173}.

\bibitem[Bradbury et~al., 2018]{jax2018github}
Bradbury, J., Frostig, R., Hawkins, P., Johnson, M.~J., Leary, C., Maclaurin,
  D., Necula, G., Paszke, A., Vander{P}las, J., Wanderman-{M}ilne, S., and
  Zhang, Q. (2018).
\newblock {JAX}: composable transformations of {P}ython+{N}um{P}y programs.

\bibitem[Chen and Xu, 2020]{chen2020deep}
Chen, L. and Xu, S. (2020).
\newblock Deep neural tangent kernel and laplace kernel have the same rkhs.
\newblock {\em arXiv preprint arXiv:2009.10683}.

\bibitem[Chen et~al., 2020]{chen2020label}
Chen, S., He, H., and Su, W.~J. (2020).
\newblock Label-aware neural tangent kernel: Toward better generalization and
  local elasticity.
\newblock {\em arXiv preprint arXiv:2010.11775}.

\bibitem[Chen et~al., 2021]{chen2021neural}
Chen, W., Gong, X., and Wang, Z. (2021).
\newblock Neural architecture search on imagenet in four gpu hours: A
  theoretically inspired perspective.
\newblock {\em arXiv preprint arXiv:2102.11535}.

\bibitem[Dong and Yang, 2020]{dong2020bench}
Dong, X. and Yang, Y. (2020).
\newblock Nas-bench-201: Extending the scope of reproducible neural
  architecture search.
\newblock {\em arXiv preprint arXiv:2001.00326}.

\bibitem[Du et~al., 2019a]{du2019gradient}
Du, S., Lee, J., Li, H., Wang, L., and Zhai, X. (2019a).
\newblock Gradient descent finds global minima of deep neural networks.
\newblock In {\em International Conference on Machine Learning}, pages
  1675--1685. PMLR.

\bibitem[Du et~al., 2019b]{du2018gradient}
Du, S.~S., Zhai, X., Poczos, B., and Singh, A. (2019b).
\newblock Gradient descent provably optimizes over-parameterized neural
  networks.
\newblock In {\em International Conference on Learning Representations}.

\bibitem[Dupuis and Jacot, 2021]{dupuis2021dnn}
Dupuis, B. and Jacot, A. (2021).
\newblock Dnn-based topology optimisation: Spatial invariance and neural
  tangent kernel.
\newblock {\em arXiv preprint arXiv:2106.05710}.

\bibitem[Dyer and Gur-Ari, 2020]{Dyer2020Asymptotics}
Dyer, E. and Gur-Ari, G. (2020).
\newblock Asymptotics of wide networks from feynman diagrams.
\newblock In {\em International Conference on Learning Representations}.

\bibitem[Fort et~al., 2020]{fort2020deep}
Fort, S., Dziugaite, G.~K., Paul, M., Kharaghani, S., Roy, D.~M., and Ganguli,
  S. (2020).
\newblock Deep learning versus kernel learning: an empirical study of loss
  landscape geometry and the time evolution of the neural tangent kernel.
\newblock {\em arXiv preprint arXiv:2010.15110}.

\bibitem[Geifman et~al., 2020]{geifman2020similarity}
Geifman, A., Yadav, A., Kasten, Y., Galun, M., Jacobs, D., and Basri, R.
  (2020).
\newblock On the similarity between the laplace and neural tangent kernels.
\newblock {\em arXiv preprint arXiv:2007.01580}.

\bibitem[Golikov, 2020a]{golikov2020dynamically}
Golikov, E.~A. (2020a).
\newblock Dynamically stable infinite-width limits of neural classifiers.
\newblock {\em arXiv preprint arXiv:2006.06574}.

\bibitem[Golikov, 2020b]{golikov2020notes}
Golikov, E.~A. (2020b).
\newblock Notes on deep learning theory.
\newblock {\em arXiv preprint arXiv:2012.05760}.

\bibitem[He et~al., 2015]{he2015delving}
He, K., Zhang, X., Ren, S., and Sun, J. (2015).
\newblock Delving deep into rectifiers: Surpassing human-level performance on
  imagenet classification.
\newblock In {\em Proceedings of the IEEE international conference on computer
  vision}, pages 1026--1034.

\bibitem[He et~al., 2016]{he2016deep}
He, K., Zhang, X., Ren, S., and Sun, J. (2016).
\newblock Deep residual learning for image recognition.
\newblock In {\em Proceedings of the IEEE conference on computer vision and
  pattern recognition}, pages 770--778.

\bibitem[Hron et~al., 2020]{hron2020infinite}
Hron, J., Bahri, Y., Sohl-Dickstein, J., and Novak, R. (2020).
\newblock Infinite attention: Nngp and ntk for deep attention networks.
\newblock In {\em International Conference on Machine Learning}, pages
  4376--4386. PMLR.

\bibitem[Huang and Yau, 2019]{huang2019dynamics}
Huang, J. and Yau, H.-T. (2019).
\newblock Dynamics of deep neural networks and neural tangent hierarchy.
\newblock {\em arXiv preprint arXiv:1909.08156}.

\bibitem[Jacot et~al., 2018]{jacot2018neural}
Jacot, A., Gabriel, F., and Hongler, C. (2018).
\newblock Neural tangent kernel: Convergence and generalization in neural
  networks.
\newblock In {\em Advances in neural information processing systems}, pages
  8571--8580.

\bibitem[Kimeldorf and Wahba, 1970]{kimeldorf1970correspondence}
Kimeldorf, G.~S. and Wahba, G. (1970).
\newblock A correspondence between bayesian estimation on stochastic processes
  and smoothing by splines.
\newblock {\em The Annals of Mathematical Statistics}, 41(2):495--502.

\bibitem[Lee et~al., 2018]{lee2018deep}
Lee, J., Bahri, Y., Novak, R., Schoenholz, S.~S., Pennington, J., and
  Sohl-Dickstein, J. (2018).
\newblock Deep neural networks as gaussian processes.
\newblock In {\em International Conference on Learning Representations}.

\bibitem[Lee et~al., 2019]{lee2019wide}
Lee, J., Xiao, L., Schoenholz, S., Bahri, Y., Novak, R., Sohl-Dickstein, J.,
  and Pennington, J. (2019).
\newblock Wide neural networks of any depth evolve as linear models under
  gradient descent.
\newblock In {\em Advances in neural information processing systems}, pages
  8572--8583.

\bibitem[Lee et~al., 2016]{lee2016gradient}
Lee, J.~D., Simchowitz, M., Jordan, M.~I., and Recht, B. (2016).
\newblock Gradient descent only converges to minimizers.
\newblock In {\em Conference on learning theory}, pages 1246--1257.

\bibitem[Martens et~al., 2021]{martens2021rapid}
Martens, J., Ballard, A., Desjardins, G., Swirszcz, G., Dalibard, V.,
  Sohl-Dickstein, J., and Schoenholz, S.~S. (2021).
\newblock Rapid training of deep neural networks without skip connections or
  normalization layers using deep kernel shaping.
\newblock {\em arXiv preprint arXiv:2110.01765}.

\bibitem[Meanti et~al., 2020]{meanti2020kernel}
Meanti, G., Carratino, L., Rosasco, L., and Rudi, A. (2020).
\newblock Kernel methods through the roof: handling billions of points
  efficiently.
\newblock {\em arXiv preprint arXiv:2006.10350}.

\bibitem[Mertikopoulos et~al., 2020]{mertikopoulos2020almost}
Mertikopoulos, P., Hallak, N., Kavis, A., and Cevher, V. (2020).
\newblock On the almost sure convergence of stochastic gradient descent in
  non-convex problems.
\newblock {\em arXiv preprint arXiv:2006.11144}.

\bibitem[Nguyen, 2019]{nguyen2019connected}
Nguyen, Q. (2019).
\newblock On connected sublevel sets in deep learning.
\newblock In {\em International Conference on Machine Learning}, pages
  4790--4799.

\bibitem[Nguyen, 2021]{nguyen2021note}
Nguyen, Q. (2021).
\newblock A note on connectivity of sublevel sets in deep learning.
\newblock {\em arXiv preprint arXiv:2101.08576}.

\bibitem[Nguyen and Hein, 2017]{nguyen2017loss}
Nguyen, Q. and Hein, M. (2017).
\newblock The loss surface of deep and wide neural networks.
\newblock In {\em Proceedings of the 34th International Conference on Machine
  Learning-Volume 70}, pages 2603--2612.

\bibitem[Novak et~al., 2021]{novakfast}
Novak, R., Sohl-Dickstein, J., and Schoenholz, S.~S. (2021).
\newblock Fast finite width neural tangent kernel.
\newblock {\em Bayesian Deep Learning NeurIPS 2021 Workshop}.

\bibitem[Novak et~al., 2019]{novak2019neural}
Novak, R., Xiao, L., Hron, J., Lee, J., Alemi, A.~A., Sohl-Dickstein, J., and
  Schoenholz, S.~S. (2019).
\newblock Neural tangents: Fast and easy infinite neural networks in python.
\newblock {\em arXiv preprint arXiv:1912.02803}.

\bibitem[Panageas and Piliouras, 2017]{panageas2017gradient}
Panageas, I. and Piliouras, G. (2017).
\newblock Gradient descent only converges to minimizers: Non-isolated critical
  points and invariant regions.
\newblock In {\em 8th Innovations in Theoretical Computer Science Conference
  (ITCS 2017)}. Schloss Dagstuhl-Leibniz-Zentrum fuer Informatik.

\bibitem[Park et~al., 2020]{park2020towards}
Park, D.~S., Lee, J., Peng, D., Cao, Y., and Sohl-Dickstein, J. (2020).
\newblock Towards nngp-guided neural architecture search.
\newblock {\em arXiv preprint arXiv:2011.06006}.

\bibitem[Radhakrishnan et~al., 2021]{radhakrishnan2021simple}
Radhakrishnan, A., Stefanakis, G., Belkin, M., and Uhler, C. (2021).
\newblock Simple, fast, and flexible framework for matrix completion with
  infinite width neural networks.
\newblock {\em arXiv preprint arXiv:2108.00131}.

\bibitem[Shankar et~al., 2020]{shankar2020neural}
Shankar, V., Fang, A., Guo, W., Fridovich-Keil, S., Ragan-Kelley, J., Schmidt,
  L., and Recht, B. (2020).
\newblock Neural kernels without tangents.
\newblock In {\em International Conference on Machine Learning}, pages
  8614--8623. PMLR.

\bibitem[Song and Yang, 2019]{song2019quadratic}
Song, Z. and Yang, X. (2019).
\newblock Quadratic suffices for over-parametrization via matrix chernoff
  bound.
\newblock {\em arXiv preprint arXiv:1906.03593}.

\bibitem[Tancik et~al., 2020]{tancik2020fourier}
Tancik, M., Srinivasan, P.~P., Mildenhall, B., Fridovich-Keil, S., Raghavan,
  N., Singhal, U., Ramamoorthi, R., Barron, J.~T., and Ng, R. (2020).
\newblock Fourier features let networks learn high frequency functions in low
  dimensional domains.
\newblock {\em arXiv preprint arXiv:2006.10739}.

\bibitem[Xiao et~al., 2018]{xiao2018dynamical}
Xiao, L., Bahri, Y., Sohl-Dickstein, J., Schoenholz, S., and Pennington, J.
  (2018).
\newblock Dynamical isometry and a mean field theory of cnns: How to train
  10,000-layer vanilla convolutional neural networks.
\newblock In {\em International Conference on Machine Learning}, pages
  5393--5402. PMLR.

\bibitem[Yang, 2019]{yang2019tensor_i}
Yang, G. (2019).
\newblock Tensor programs i: Wide feedforward or recurrent neural networks of
  any architecture are gaussian processes.
\newblock {\em arXiv preprint arXiv:1910.12478}.

\bibitem[Yang, 2020a]{yang2020tensor_ii}
Yang, G. (2020a).
\newblock Tensor programs ii: Neural tangent kernel for any architecture.
\newblock {\em arXiv preprint arXiv:2006.14548}.

\bibitem[Yang, 2020b]{yang2020tensor_iii}
Yang, G. (2020b).
\newblock Tensor programs iii: Neural matrix laws.
\newblock {\em arXiv preprint arXiv:2009.10685}.

\bibitem[Yang and Littwin, 2021]{yang2021tensor_iib}
Yang, G. and Littwin, E. (2021).
\newblock Tensor programs iib: Architectural universality of neural tangent
  kernel training dynamics.
\newblock {\em arXiv preprint arXiv:2105.03703}.

\bibitem[Yu and Chen, 1995]{yu1995local}
Yu, X.-H. and Chen, G.-A. (1995).
\newblock On the local minima free condition of backpropagation learning.
\newblock {\em IEEE Transactions on Neural Networks}, 6(5):1300--1303.

\bibitem[Yue et~al., 2021]{yue2021neural}
Yue, K., Jin, R., Pilgrim, R., Wong, C.-W., Baron, D., and Dai, H. (2021).
\newblock Neural tangent kernel empowered federated learning.
\newblock {\em arXiv preprint arXiv:2110.03681}.

\end{thebibliography}
    \bibliographystyle{apalike}

\end{document}